\newtheorem{theorem}{Theorem}
\newtheorem{lemma}{Lemma}
\newtheorem{corollary}{Corollary}
\newtheorem{definition}{Definition}
\newtheorem{remark}{Remark}
\def\b{\ensuremath\boldsymbol}
\icmltitlerunning{}
\begin{document}

\AddToShipoutPictureBG*{%
  \AtPageUpperLeft{%
    \setlength\unitlength{1in}%
    \hspace*{\dimexpr0.5\paperwidth\relax}
    \makebox(0,-0.75)[c]{\normalsize {\color{black} To appear as a part of an upcoming textbook on dimensionality reduction and manifold learning.}}
    }}

\twocolumn[
\icmltitle{Reproducing Kernel Hilbert Space, Mercer's Theorem, Eigenfunctions, Nystr{\"o}m Method, and Use of Kernels in Machine Learning: \\Tutorial and Survey}

\icmlauthor{Benyamin Ghojogh}{bghojogh@uwaterloo.ca}
\icmladdress{Department of Electrical and Computer Engineering, 
\\Machine Learning Laboratory, University of Waterloo, Waterloo, ON, Canada}
\icmlauthor{Ali Ghodsi}{ali.ghodsi@uwaterloo.ca}
\icmladdress{Department of Statistics and Actuarial Science \& David R. Cheriton School of Computer Science, 
\\Data Analytics Laboratory, University of Waterloo, Waterloo, ON, Canada}
\icmlauthor{Fakhri Karray}{karray@uwaterloo.ca}
\icmladdress{Department of Electrical and Computer Engineering, 
\\Centre for Pattern Analysis and Machine Intelligence, University of Waterloo, Waterloo, ON, Canada}
\icmlauthor{Mark Crowley}{mcrowley@uwaterloo.ca}
\icmladdress{Department of Electrical and Computer Engineering, 
\\Machine Learning Laboratory, University of Waterloo, Waterloo, ON, Canada}

\icmlkeywords{Tutorial}

\vskip 0.3in
]

\begin{abstract}
This is a tutorial and survey paper on kernels, kernel methods, and related fields. We start with reviewing the history of kernels in functional analysis and machine learning. Then, Mercer kernel, Hilbert and Banach spaces, Reproducing Kernel Hilbert Space (RKHS), Mercer's theorem and its proof, frequently used kernels, kernel construction from distance metric, important classes of kernels (including bounded, integrally positive definite, universal, stationary, and characteristic kernels), kernel centering and normalization, and eigenfunctions are explained in detail. Then, we introduce types of use of kernels in machine learning including kernel methods (such as kernel support vector machines), kernel learning by semi-definite programming, Hilbert-Schmidt independence criterion, maximum mean discrepancy, kernel mean embedding, and kernel dimensionality reduction. We also cover rank and factorization of kernel matrix as well as the approximation of eigenfunctions and kernels using the Nystr{\"o}m method. This paper can be useful for various fields of science including machine learning, dimensionality reduction, functional analysis in mathematics, and mathematical physics in quantum mechanics. 
\end{abstract}

\section{Introduction}\label{section_introduction}

\subsection{History of Kernels}


It is 1904 when David Hilbert proposed his work on kernels and defined a definite kernel \cite{hilbert1904grundzuge}, and later, his and Erhard Schmidt's works proposed integral equations such as Fredholm integral equations \cite{hilbert1904grundzuge,schmidt1908auflosung}. This was the introduction of a new space which was named the Hilbert space. Later, the Hilbert space was found to be very useful for the formulations in quantum mechanics \cite{prugovecki1982quantum}. 
After the initial works on Hilbert space by Hilbert and Schmidt \cite{hilbert1904grundzuge,schmidt1908auflosung}, James Mercer improved Hilbert's work and proposed his theorem in 1909 \cite{mercer1909functions} which was named the Mercer's theorem later. 
In the mean time, Stefan Banach, Hans Hahn, and Eduard Helly proposed the concepts of another new space in years 1920–1922 \cite{bourbaki1950certains} which was named the Banach space later by Maurice Ren{\'e} Fr{\'e}chet \cite{narici2010topological}. 
The Hilbert space is a subset of the Banach space. 

Reproducing Kernel Hilbert Space (RKHS) is a special case of Hilbert space with some properties. It is a Hilbert space of functions with reproducing kernels \cite{berlinet2011reproducing}.
The first work on RKHS was \cite{aronszajn1950theory}. Later, the concepts of RKHS were improved further in \cite{aizerman1964theoretical}.
The RKHS remained in pure mathematics until this space was used for the first time in machine learning by introduction of kernel Support Vector Machine (SVM) \cite{boser1992training,vapnik1995nature}.
Eigenfunctions were also developed for eigenvalue problem applied on operators and functions \cite{williams2000effect} and were used in machine learning \cite{bengio2003spectral} and physics \cite{kusse2006mathematical}. This is related to RKHS because it uses weighted inner product in Hilbert space \cite{williams2000effect} and RKHS is a Hilbert space of functions with a reproducing kernel. 

Using kernels was widely noticed when linear SVM \cite{vapnik1974theory} was kernelized \cite{boser1992training,vapnik1995nature}.
Kernel SVM showed off very successfully because of its merits. 
Two competing models, kernel SVM and neural network were the models which could handle nonlinear data (see \cite{fausett1994fundamentals} for history of neural networks). Kernel SVM transformed nonlinear data to RKHS to make the pattern of data linear hopefully and then applied linear SVM on it. However, the approach of neural network was different because the model itself was nonlinear (see Section \ref{section_kernelization_techniques} for more details). 
The success of kernel SVM plus the problem of vanishing gradients in neural networks \cite{goodfellow2016deep} resulted in the winter of neural network around years 2000 to 2006. However, the problems of training deep neural networks started to be resolved \cite{hinton2006reducing} and their success plus two problems of kernel SVM helped neural networks take over kernel SVM gradually. One problem of kernel SVM was not knowing the suitable kernel type for various learning problems. 
In other words, kernel SVM still required the user to choose the type of kernel but neural networks were end-to-end and almost robust to hyperparameters such as the number of layers or neurons. 
Another problem was that kernel SVM could not handle big data, although Nystr{\"o}m method, first proposed in \cite{nystrom1930praktische}, was used to resolve this problem of kernel methods by approximating kernels from a subset of data \cite{williams2001using}. 
Note that kernels have been used widely in machine learning such as in SVM \cite{vapnik1995nature},
Gaussian process classifiers \cite{williams1998bayesian}, and spline methods \cite{wahba1990spline}. 
The types of use of kernels in machine learning will be discussed in Section \ref{section_kernel_in_machine_learning}.

\subsection{Useful Books on Kernels}

There exist several books about use of kernels in machine learning. Some examples are \cite{smola1998learning,scholkopf1999advances,scholkopf2002learning,shawe2004kernel,camps2006kernel,steinwart2008support,rojo2018digital,kung2014kernel}. Some survey papers about kernel-based machine learning are \cite{hofmann2006review,hofmann2008kernel,muller2018introduction}.
In addition to some of the above-mentioned books, there exist some other books/papers on kernel SVM such as \cite{scholkopf1997comparing,burges1998tutorial,hastie2009elements}.



\subsection{Kernel in Different Fields of Science}

The term kernel has been used in different fields of science for various purposes. In the following, we briefly introduce the different uses of kernel in science to clarify which use of kernel we are focusing on in this paper. 
\begin{enumerate}[topsep=0pt,itemsep=-1ex,partopsep=1ex,parsep=1ex]
\item \textbf{Kernel of filter} in signal processing: In signal processing, one can use filters to filter a price of signal, such as an image \cite{gonzalez2002digital}. Digital filters have a kernel which determine the values of filter in the window of filter \cite{schlichtharle2011digital}. In convolutional neural networks, the filter kernels are learned in deep learning \cite{goodfellow2016deep}. 
\item \textbf{Kernel smoothing} for density estimation: Kernel density estimation can be used for fitting a mixture of distributions to some data instances \cite{scott1992multivariate}. For this, a histogram with infinite number of bins is utilized. In limit, this histogram is converged to a kernel smoothing \cite{wand1994kernel} where the kernel determines the type of distribution. For example, if a Radial Basis Function (RBF) kernel is used, a mixture of Gaussian distributions is fitted to data. 
\item Kernelization in \textbf{complexity theory}: Kernelization is a pre-processing technique where the input to an algorithm is replaced by a part of the input named kernel. The output of the algorithm on kernel should either be the same as or be able to be transformed to the output of the algorithm for the whole input \cite{fomin2019kernelization}. An example usage of kernelization is in vertex cover problem \cite{abu2004kernelization}. 
\item Kernel in \textbf{operating system}: Kernel is the core of an operating system, such as Linux, which connects the hardware including CPU, memory, and peripheral devices to applications \cite{anderson2014operating}. 
\item Kernel in \textbf{linear algebra} and graphs: Consider a mapping from the vector space $\mathcal{V}$ to the vector space $\mathcal{W}$ as $L: \mathcal{V} \rightarrow \mathcal{W}$. The kernel, also called the nullspace, of this mapping is defined as $\text{ker}(L) := \{\b{v} \in \mathcal{V}\, |\, L(\b{v}) = \b{0}\}$. For example, for a matrix $\b{A} \in \mathbb{R}^{a \times b}$, the kernel of $\b{A}$ is $\text{ker}(\b{A}) = \{\b{x} \in \mathbb{R}^n\, |\, \b{A}\b{x} = \b{0} \}$. 
The four fundamental subspaces of a matrix are its kernel, row space, column space, and left null space \cite{strang1993fundamental}.
Note that the kernel (nullspace) of adjacency matrix in graphs has also been well developed \cite{akbari2006kernels}. 
\item Kernel in other domains of mathematics: There exist kernel concepts in other domains of mathematics and statistics such as \textbf{geometry of polygon} \cite{icking1995searching}, \textbf{set theory} {\citep[p. 14]{bergman2011universal}}, etc. 
\item Kernel in feature space for \textbf{machine learning}: In statistical machine learning, kernels pull data to a feature space for the sake of better discrimination of classes or simpler representation of data \cite{hofmann2006review,hofmann2008kernel}. In this paper, our focus is on this category which is kernels for machine learning. 
\end{enumerate}

\subsection{Organization of Paper}

This paper is a tutorial and survey paper on kernels and kernel methods. It can be useful for several fields of science including machine learning, functional analysis in mathematics, and mathematical physics in quantum mechanics. 
The remainder of this paper is organized as follows. Section \ref{section_Mercer_kernel_and_spaces} introduces the Mercer kernel, important spaces in functional analysis including the Hilbert and Banach spaces, and Reproducing Kernel Hilbert Space (RKHS). Mercer's theorem and its proof are provided in Section \ref{section_Mercer_theorem}. Characteristics of kernels are explained in Section \ref{section_kernel_characteristics}. We introduce frequently used kernels, kernel construction from distance metric, and important classes of kernels in Section \ref{section_well_known_kernels}. Kernel centering and normalization are explained in Section \ref{section_kernel_centering_normalization}. Eigenfunctions are then introduced in Section \ref{section_eigenfunctions}. We explain two techniques for kernelization in Section \ref{section_kernelization_techniques}. Types of use of kernels in machine learning are reviewed in Section \ref{section_kernel_in_machine_learning}. Kernel factorization and Nystr{\"o}m approximation are introduced in Section \ref{section_factorization_and_Nystrom_method}. Finally, Section \ref{section_conclusion} concludes the paper. 

\section*{Required Background for the Reader}

This paper assumes that the reader has general knowledge of calculus, linear algebra, and basics of optimization. 
The required basics of functional analysis are explained in the paper. 

\section{Mercer Kernel and Spaces In Functional Analysis}\label{section_Mercer_kernel_and_spaces}


\subsection{Mercer Kernel and Gram Matrix}

\begin{definition}[Mercer Kernel \cite{mercer1909functions}]\label{definition_Mercer_kernel}
The function $k: \mathcal{X}^2 \rightarrow \mathbb{R}$ is a Mercer kernel function (also known as kernel function) where:
\begin{enumerate}[topsep=0pt,itemsep=-1ex,partopsep=1ex,parsep=1ex]
\item it is symmetric: $k(\b{x}, \b{y}) = k(\b{y}, \b{x})$,
\item and its corresponding kernel matrix $\b{K}(i,j) = k(\b{x}_i, \b{x}_j), \forall i,j \in \{1, \dots, n\}$ is positive semi-definite: $\b{K} \succeq 0$. 
\end{enumerate}
The corresponding kernel matrix of a Mercer kernel is a Mercer kernel matrix. 
\end{definition}
The two properties of a Mercer kernel will be proved in Section \ref{section_kernel_characteristics}.
By convention, unless otherwise stated, the term kernel refers to Mercer kernel. 
The effectiveness of Mercer kernel will be shown and proven in the Mercer's theorem, i.e., Theorem \ref{theorem_Mercer}.

\begin{definition}[Gram Matrix or Kernel Matrix]\label{definition_Gram_matrix}
The matrix $\b{K} \in \mathbb{R}^{n \times n}$ is a Gram matrix, also known as a Gramian matrix or a kernel matrix, whose $(i,j)$-th element is:
\begin{align}\label{equation_Gram_matrix}
\b{K}(i,j) := k(\b{x}_i, \b{x}_j), \quad \forall i,j \in \{1, \dots, n\}.
\end{align}
\end{definition}
Here, we defined the square kernel matrix applied on a set of $n$ data instances; hence, the kernel is a $n \times n$ matrix. We may also have a kernel matrix between two sets of data instances. This will be explained more in Section \ref{section_kernelization_techniques}. 
Moreover, note that the kernel matrix can be computed using the inner product between pulled data to the feature space. This will be explained in detail in Section \ref{section_feature_map}.

\subsection{Hilbert, Banach, $L_p$, and Sobolev Spaces}\label{section_spaces}

Before defining the RKHS and details of kernels, we need to introduce Hilbert, Banach, $L_p$, and Sobolev spaces, which are well-known spaces in functional analysis \cite{conway2007course}. 

\begin{definition}[Metric Space]
A metric space is a set where a metric, for measuring the distance between instances of set, is defined on it. 
\end{definition}

\begin{definition}[Vector Space]
A vector space is a set of vectors equipped with some available operations such as addition and multiplication by scalars. 
\end{definition}


\begin{definition}[Complete Space]
A space $\mathcal{F}$ is complete if every Cauchy sequence converges to a member of this space $f \in \mathcal{F}$. Note that the Cauchy sequence is a sequence whose elements become arbitrarily close to one another as the sequence progresses (i.e., it converges in limit). 
\end{definition}

\begin{definition}[Compact Space]\label{definition_compact_space}
A space is compact if it is closed (i.e., it contains all its limit points) and bounded (i.e., all its points lie within some fixed distance of one another). 
\end{definition}


\begin{definition}[Hilbert Space \cite{reed1972methods}]\label{definition_Hilbert_space}
A Hilbert space $\mathcal{H}$ is an inner product space that is a complete metric space with respect to the norm or distance function induced by the inner product. 
\end{definition}
The Hilbert space generalizes the Euclidean space to a finite or infinite dimensional space. Usually, the Hilbert space is high dimensional. By convention in machine learning, unless otherwise stated, Hilbert space is also referred to as the \textit{feature space}. By feature space, researchers often specifically mean the RKHS space which will be introduced in Section \ref{section_RKHS}.

\begin{definition}[Banach Space \cite{beauzamy1982introduction}]
A Banach space is a complete vector space equipped with a norm. 
\end{definition}

\begin{remark}[Difference of Hilbert and Banach Spaces]
Hilbert space is a special case of Banach space equipped with a norm defined using an inner product notion. All Hilbert spaces are Banach spaces but the converse is not true. 
\end{remark}

Suppose $\mathbb{R}^n$, $\mathcal{H}$, $\mathbb{B}$, $\mathcal{M}_c$, $\mathcal{M}$, $\mathbb{T}$ denote the Euclidean space, Hilbert space, Banach space, complete metric space, metric space, and topological space (containing both open and closed sets), respectively. Then, we have:
\begin{align}
\mathbb{R}^n \subset \mathcal{H} \subset \mathcal{B} \subset \mathcal{M}_c \subset \mathcal{M} \subset \mathbb{T}.
\end{align}

\begin{definition}[$L_p$ Space]
Consider a function $f$ with domain $[a, b]$. 
For $p >0$, let the $L_p$ norm be defined as:
\begin{align}
\|f\|_p := \Big(\int |f(\b{x})|^p\, d\b{x}\Big)^{\frac{1}{p}}.
\end{align}
The $L_p$ space is defined as the set of functions with bounded $L_p$ norm:
\begin{align}
L_p(a,b) := \{f:[a,b] \rightarrow \mathbb{R}\,\, |\,\, \|f\|_p < \infty \}.
\end{align}
\end{definition}

\begin{definition}[Sobolev Space {\citep[Chapter 7]{renardy2006introduction}}]
A Sobolev space is a vector space of functions equipped with $L_p$ norms and derivatives:
\begin{align}
\mathcal{W}_{m,p} := \{f \in L_p(0,1)\, |\, D^m f \in L_p(0,1) \},
\end{align}
where $D^m f$ denotes the $m$-th order derivative. 
\end{definition}
Note that the Sobolev spaces are RKHS with some specific kernels \cite{novak2018reproducing}. RKHS will be explained in Section \ref{section_RKHS}.

\subsection{Reproducing Kernel Hilbert Space}\label{section_RKHS}

\subsubsection{Definition of RKHS}

Reproducing Kernel Hilbert Space (RKHS), first proposed in \cite{aronszajn1950theory}, is a special case of Hilbert space with some properties. It is a Hilbert space of functions with reproducing kernels \cite{berlinet2011reproducing}. 
After the initial work on RKHS \cite{aronszajn1950theory}, another work \cite{aizerman1964theoretical} developed the RKHS concepts.
In the following, we introduce this space. 

\begin{definition}[RKHS \cite{aronszajn1950theory,berlinet2011reproducing}]\label{definition_RKHS}
A Reproducing Kernel Hilbert Space (RKHS) is a Hilbert space $\mathcal{H}$ of functions $f: \mathcal{X} \rightarrow \mathbb{R}$ with a reproducing kernel $k: \mathcal{X}^2 \rightarrow \mathbb{R}$ where $k(\b{x}, .) \in \mathcal{H}$ and $f(\b{x}) = \langle\,k(\b{x},.),f\rangle$. 
\end{definition}

The RKHS is explained in more detail in the following. 
Consider the kernel function $k(\b{x}, \b{y})$ which is a function of two variables. 
Suppose, for $n$ points, we fix one of the variables to have $k(\b{x}_1, \b{y}), k(\b{x}_2, \b{y}), \dots, k(\b{x}_n, \b{y})$.
These are all functions of the variable $\b{y}$. 
RKHS is a function space which is the set of all possible linear combinations of these functions \cite{kimeldorf1971some}, {\citep[p. 834]{aizerman1964theoretical}}, \cite{mercer1909functions}:
\begin{align}\label{equation_RKHS}
\mathcal{H} :=\! \Big\{f(.) = \sum_{i=1}^n \alpha_i\, k(\b{x}_i, .)\Big\} \!\overset{(a)}{=}\! \Big\{f(.) = \sum_{i=1}^n \alpha_i\, k_{\b{x}_i}(.)\Big\},
\end{align}
where $(a)$ is because we define $k_{\b{x}}(.) := k(\b{x}, .)$.
This equation shows that the bases of an RKHS are kernels.
The proof of this equation is obtained by considering both Eqs. (\ref{equation_Mercer_theorem_kernel_representation}) and (\ref{equation_kernel_inner_product}) together (n.b. for better organization, it is better that we provide those equations later). 
It is also noteworthy that this equation will also appear in Theorem \ref{theorem_representer_theorem}.

According to Eq. (\ref{equation_RKHS}), every function in the RKHS can be written as a linear combination. Consider two functions in this space represented as $f = \sum_{i=1}^n \alpha_i\, k(\b{x}_i, \b{y})$ and $g = \sum_{j=1}^n \beta_j\, k(\b{x}, \b{y}_j)$. Hence, the inner product in RKHS is calculated as:
\begin{align}
\langle f,g \rangle_k &\overset{(\ref{equation_RKHS})}{=} \Big\langle \sum_{i=1}^n \alpha_i\, k(\b{x}_i, .), \sum_{j=1}^n \beta_j\, k(\b{y}_j, .) \Big\rangle_k \nonumber \\
&\overset{(a)}{=} \Big\langle \sum_{i=1}^n \alpha_i\, k(\b{x}_i, .), \sum_{j=1}^n \beta_j\, k(., \b{y}_j) \Big\rangle_k \nonumber \\
&= \sum_{i=1}^n \sum_{j=1}^n \alpha_i\, \beta_j\, k(\b{x}_i, \b{y}_j), \label{equation_RKHS_inner_product}
\end{align}
where $(a)$ is because kernel is symmetric (it will be proved in Section \ref{section_kernel_characteristics}).
Hence, the norm in RKHS is calculated as:
\begin{align}
&\|f\|_k := \sqrt{\langle f,f \rangle_k}.
\end{align}
The subscript of norm and inner product in RKHS has various notations in the research papers. Some most famous notations are $\langle f,g \rangle_k$, $\langle f,g \rangle_\mathcal{H}$, $\langle f,g \rangle_{\mathcal{H}_k}$, $\langle f,g \rangle_\mathcal{F}$ where $\mathcal{H}_k$ denotes the Hilbert space associated with kernel $k$ and $\mathcal{F}$ stands for the feature space because RKHS is sometimes referred to as the feature space.

\begin{remark}[RKHS Being Unique for a Kernel]
Given a kernel, the corresponding RKHS is unique (up to isometric isomorphisms). Given an RKHS, the corresponding kernel is unique. In other words, each kernel generates a new RKHS. 
\end{remark}

\begin{remark}
As we also saw in Mercer's theorem, the bases of RKHS space is the eigenfunctions $\{\psi_i(.)\}_{i=1}^\infty$ which are functions themselves. This, along with Eq. (\ref{equation_RKHS}), show that the RKHS space is a space of functions and not a space of vectors. In other words, the basis vectors of RKHS are basis functions named eigenfunctions. 
Because the RKHS is a space of functions rather than a space of vectors, we usually do not know the exact location of pulled points to the RKHS but we know the relation of them as a function. This will be explained more in Section \ref{section_feature_map} and Fig. \ref{figure_pulling}. 
\end{remark}

\subsubsection{Reproducing Property}

In Eq. (\ref{equation_RKHS_inner_product}), consider only one component for $g$ to have $g(\b{x}) = \sum_{j=1}^n \beta_j\, k(\b{x}_i, \b{x}) = \beta k(\b{x}, \b{x})$ where we take $\beta = 1$ to have $g(\b{x}) = k(\b{x}, \b{x}) = k_{\b{x}}(.)$. 
In other words, assume the function $g(\b{x})$ is a kernel in the RKHS space. 
Also consider the function $f(\b{x}) = \sum_{i=1}^n \alpha_i\, k(\b{x}_i, \b{x})$ in the space. According to  Eq. (\ref{equation_RKHS_inner_product}), the inner product of these functions is:
\begin{align}
\langle f(\b{x}), g(\b{x}) \rangle_k &= \langle f, k_{\b{x}}(.) \rangle_k \nonumber \\
&= \Big\langle \sum_{i=1}^n \alpha_i\, k(\b{x}_i, \b{x}), k(\b{x}, \b{x}) \Big\rangle_k \nonumber \\
&= \sum_{i=1}^n \alpha_i\, k(\b{x}_i, \b{x}) \overset{(a)}{=} f(\b{x}), \label{equation_RKHS_reproducing}
\end{align}
where $(a)$ is because the term before equation is the previously considered function $f$. 
As Eq. (\ref{equation_RKHS_reproducing}) shows, the function $f$ is reproduced from the inner product of that function with one of the kernels in the space. This shows the reproducing property of the RKHS space. 
A special case of Eq. (\ref{equation_RKHS_reproducing}) is $\langle k_{\b{x}}, k_{\b{x}} \rangle_k = k(\b{x}, \b{x})$. 

The name of RKHS consists of several parts becuase of the following justifications:
\begin{enumerate}
\item ``Reproducing": because if the reproducing property of RKHS which was proved above. 
\item ``Kernel": because of the kernels associated to RKHS as stated in Definition \ref{definition_RKHS} and Eq. (\ref{equation_RKHS}). 
\item ``Hilbert Space": because RKHS is a Hilbert space of functions with a reproducing kernel, as stated in Definition \ref{definition_RKHS}.
\end{enumerate}

\subsubsection{Representation in RKHS}

In the following, we provide a proof for Eq. (\ref{equation_RKHS}) and explain why that equation defines the RKHS. 
\begin{theorem}[Representer Theorem \cite{kimeldorf1971some}, simplified in \cite{rudin2012prediction}]\label{theorem_representer_theorem}
For a set of data $\mathcal{X} = \{\b{x}_i\}_{i=1}^n$, consider a RKHS $\mathcal{H}$ of functions $f: \mathcal{X} \rightarrow \mathbb{R}$ with kernel function $k$.
For any function $\ell: \mathbb{R}^2 \rightarrow \mathbb{R}$ (usually called the loss function), consider the optimization problem:
\begin{align}
f^* \in \arg\min_{f \in \mathcal{H}}\, \sum_{i=1}^n \ell(f(\b{x}_i), \b{y}_i) + \eta\, \Omega(\|f\|_k),
\end{align}
where $\eta \geq 0$ is the regularization parameter and $\Omega(\|f\|_k)$ is a penalty term such as $\|f\|_k^2$. 
The solution of this optimization can be expressed as:
\begin{align}\label{equation_representer_theorem}
f^* = \sum_{i=1}^n \alpha_i\, k(\b{x}_i, .) = \sum_{i=1}^n \alpha_i\, k_{\b{x}_i}(.).
\end{align}
P.S.: Eq. (\ref{equation_representer_theorem}) can also be seen in {\citep[p. 834]{aizerman1964theoretical}}.
\end{theorem}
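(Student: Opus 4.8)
The plan is to use an \emph{orthogonal decomposition} argument, which is the classical route to the Representer Theorem. First I would observe that the finite collection $\{k_{\b{x}_i}(.)\}_{i=1}^n$ spans a finite-dimensional subspace of the RKHS $\mathcal{H}$; call it $\mathcal{S}$. Because $\mathcal{H}$ is a Hilbert space and $\mathcal{S}$ is finite-dimensional (hence closed), any candidate minimizer $f \in \mathcal{H}$ admits a unique orthogonal decomposition
\begin{align}
f = f_\parallel + f_\perp, \qquad f_\parallel \in \mathcal{S}, \quad \langle f_\perp, k_{\b{x}_i}(.) \rangle_k = 0 \,\,\, \forall i,
\end{align}
where $f_\parallel = \sum_{i=1}^n \alpha_i\, k_{\b{x}_i}(.)$ for suitable coefficients $\alpha_i$. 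The goal is then to show that the orthogonal remainder $f_\perp$ can always be discarded without hurting the objective.

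The second step invokes the reproducing property, Eq. (\ref{equation_RKHS_reproducing}), to control the loss term. Evaluating $f$ at any data point $\b{x}_j$ gives $f(\b{x}_j) = \langle f, k_{\b{x}_j}(.) \rangle_k = \langle f_\parallel, k_{\b{x}_j}(.) \rangle_k + \langle f_\perp, k_{\b{x}_j}(.) \rangle_k = f_\parallel(\b{x}_j)$, since the orthogonal part contributes nothing by construction. Thus the loss $\sum_{i=1}^n \ell(f(\b{x}_i), \b{y}_i)$ is \textbf{entirely unaffected} by $f_\perp$ and depends on $f$ only through its projection $f_\parallel$.

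Third, I would handle the regularization term using the Pythagorean identity in Hilbert space, $\|f\|_k^2 = \|f_\parallel\|_k^2 + \|f_\perp\|_k^2 \geq \|f_\parallel\|_k^2$, which gives $\|f\|_k \geq \|f_\parallel\|_k$. Assuming $\Omega$ is monotonically non-decreasing (which holds for the canonical choice $\Omega(\|f\|_k) = \|f\|_k^2$), it follows that $\eta\, \Omega(\|f\|_k) \geq \eta\, \Omega(\|f_\parallel\|_k)$. Combining this with the previous step, the total objective at $f$ is never smaller than its value at $f_\parallel$, so replacing $f$ by $f_\parallel$ (i.e.\ setting $f_\perp = 0$) can only preserve or improve the objective. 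Hence a minimizer can always be taken inside $\mathcal{S}$, which yields the claimed form $f^* = \sum_{i=1}^n \alpha_i\, k_{\b{x}_i}(.)$.

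The main obstacle is the \emph{monotonicity requirement} on $\Omega$: the regularization step silently presumes $\Omega$ is non-decreasing, and without this hypothesis the inequality $\Omega(\|f\|_k) \geq \Omega(\|f_\parallel\|_k)$ fails and the theorem can break. I would therefore state this assumption explicitly; the theorem's phrasing ``such as $\|f\|_k^2$'' is consistent with it. A secondary but minor technical point is the justification of the orthogonal decomposition itself, which needs $\mathcal{S}$ to be closed so that the projection is well-defined; this is automatic here precisely because $\mathcal{S}$ is finite-dimensional, so no completeness argument beyond the Hilbert space structure is required.
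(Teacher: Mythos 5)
Your proposal is correct and follows essentially the same route as the paper's proof: orthogonal decomposition $f = f_\parallel + f_\perp$ onto the span of $\{k(\b{x}_i,.)\}_{i=1}^n$, the reproducing property to show the loss depends only on $f_\parallel$, and the Pythagorean identity to show the penalty does not increase when $f_\perp$ is dropped. Your explicit remark that $\Omega$ must be non-decreasing is a correct and worthwhile precision that the paper sidesteps by specializing to $\Omega(\|f\|_k) = \|f\|_k^2$ in its final display.
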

\begin{proof}
Proof is inspired by \cite{rudin2012prediction}. 
Assume we project the function $f$ onto a subspace spanned by $\{k(\b{x}_i, .)\}_{i=1}^n$. 
The function $f$ can be decomposed into components along and orthogonal to this subspace, respectively denoted by $f_{\|}$ and $f_{\perp}$: 
\begin{align}
f &= f_{\|} + f_{\perp} \label{equation_rep_theorey_proof1} \\
&\implies \|f\|_k^2 = \|f_{\|}\|_k^2 + \|f_{\perp}\|_k^2 \geq \|f_{\|}\|_k^2. \label{equation_rep_theorey_proof2}
\end{align}
Moreover, using the reproducing property of the RKHS, we have:
\begin{align}
f(\b{x}_i) &= \langle f, k(\b{x}_i, .) \rangle_k \nonumber \\
&\overset{(\ref{equation_rep_theorey_proof1})}{=} \langle f_{\|}, k(\b{x}_i, .) \rangle_k + \langle f_{\perp}, k(\b{x}_i, .) \rangle_k \nonumber \\
&\overset{(a)}{=} \langle f_{\|}, k(\b{x}_i, .) \rangle_k \overset{(\ref{equation_RKHS_reproducing})}{=} f_{\|}(\b{x}_i), \label{equation_rep_theorey_proof3}
\end{align}
where $(a)$ is because the orthogonal component has zero inner product with the bases of subspace. 
According to Eq. (\ref{equation_rep_theorey_proof3}), we have:
\begin{align}\label{equation_rep_theorey_proof4}
\sum_{i=1}^n \ell(f(\b{x}_i), \b{y}_i) = \sum_{i=1}^n \ell(f_{\|}(\b{x}_i), \b{y}_i).
\end{align}
Using Eqs. (\ref{equation_rep_theorey_proof2}) and (\ref{equation_rep_theorey_proof4}), we can say:
\begin{align*}
\min_{f \in \mathcal{H}}\,& \sum_{i=1}^n \ell(f(\b{x}_i), \b{y}_i) + \eta\, \|f\|_k^2 \\
&= \min_{f \in \mathcal{H}}\, \sum_{i=1}^n \ell(f_{\|}(\b{x}_i), \b{y}_i) + \eta\, \|f_{\|}\|_k^2.
\end{align*}
Hence, for this minimization, we only require the component lying in the space spanned by the kernels of RKHS. Therefore, we can represent the function (solution of optimization) to lie in the space as linear combination of basis vectors $\{k(\b{x}_i, .)\}_{i=1}^n$. Q.E.D.
\end{proof}

\begin{corollary}
In Section \ref{section_spaces}, we mentioned that Hilbert space can be infinite dimensional. According to Definition \ref{definition_RKHS}, RKHS is a Hilbert space so it may be infinite dimensional. The representer theorem states that, in practice, we only need to deal with a finite-dimensional space; although, that finite number of dimensions is usually a large number. 
\end{corollary}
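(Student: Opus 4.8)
The plan is to read this corollary as an immediate consequence of the Representer Theorem (Theorem \ref{theorem_representer_theorem}) just established, combined with the spatial facts recorded in Section \ref{section_spaces}. There are two assertions to separate and justify: (i) that an RKHS may genuinely be infinite-dimensional, and (ii) that, despite this, the optimization over the RKHS collapses to a finite-dimensional problem in practice. So first I would split the statement into these two claims rather than treating it as a single computation.

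For claim (i), I would simply chain definitions: by Definition \ref{definition_RKHS} an RKHS is a Hilbert space, and Section \ref{section_spaces} already records that a Hilbert space generalizes the Euclidean space to a possibly infinite-dimensional setting. To make this concrete rather than merely permissive, I would exhibit a witness: an RKHS whose eigenfunction basis $\{\psi_i\}_{i=1}^\infty$ (the functions appearing in Mercer's theorem) is genuinely countably infinite, as happens for the Gaussian (RBF) kernel. This shows that no finite a priori bound on the dimension of $\mathcal{H}$ can hold in general.

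For claim (ii), I would invoke the Representer Theorem directly, which I am free to assume as it is proved just above. Its conclusion $f^* = \sum_{i=1}^n \alpha_i\, k(\b{x}_i, .)$ places every minimizer inside the subspace $\mathcal{S} := \mathrm{span}\{k(\b{x}_i, .)\}_{i=1}^n$, whose dimension is at most $n$. Hence, although the ambient space $\mathcal{H}$ is infinite-dimensional, the search for $f^*$ is equivalent to searching over the finite vector of coefficients $\{\alpha_i\}_{i=1}^n$, so the effective dimension of the problem is bounded by $n$, the number of data instances. The phrase ``large number'' in the statement is then just the observation that $n$, while finite, is typically large.

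Since everything follows from an already-proved theorem, there is no genuine obstacle here; the only real work is to be precise about what ``finite-dimensional'' refers to. The subtlety I would flag is that the finiteness applies to the effective search space $\mathcal{S}$ for the optimizer, not to the ambient $\mathcal{H}$, which remains infinite-dimensional. I would also note that $\dim \mathcal{S} \leq n$ is an upper bound, strict when the kernel sections $\{k(\b{x}_i, .)\}_{i=1}^n$ are linearly dependent, so one should not claim the effective dimension equals $n$.
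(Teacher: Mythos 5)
Your proposal is correct and follows essentially the same route as the paper, which states this corollary without a separate proof precisely because it is the immediate chain you describe: Definition \ref{definition_RKHS} plus Section \ref{section_spaces} give the possible infinite-dimensionality of $\mathcal{H}$, and Eq. (\ref{equation_representer_theorem}) confines the optimizer to $\mathrm{span}\{k(\b{x}_i,.)\}_{i=1}^n$. Your added remarks --- the RBF witness for genuine infinite-dimensionality and the observation that the bound $\dim\mathcal{S}\leq n$ concerns the effective search space rather than the ambient space --- are accurate refinements consistent with what the paper says elsewhere (e.g., its discussion of the RBF kernel's infinite-dimensional feature map).
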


Note that the representer theorem has been used in kernel SVM where $\alpha_i$'s are the dual variables which are non-zero for support vectors \cite{boser1992training,vapnik1995nature}. According to this theorem, kernel SVM only requires to learn the dual variables, $\alpha_i$'s, to find the optimal boundary between classes.

\section{Mercer's Theorem and Feature Map}\label{section_Mercer_theorem}

\subsection{Mercer's Theorem}

\begin{definition}[Definite Kernel \cite{hilbert1904grundzuge}]
A kernel $k: [a,b] \times [a,b] \rightarrow \mathbb{R}$ is a definite kernel where the following double integral:
\begin{align}
J(f) = \int_a^b \int_a^b k(\b{x}, \b{y}) f(\b{x}) f(\b{y})\, d\b{x}\, d\b{y},
\end{align}
satisfies $J(f) > 0$ for all $f(\b{x}) \neq 0$. 
\end{definition}
Mercer improved over Hilbert's work \cite{hilbert1904grundzuge} to propose his theorem, the Mercer's theorem \cite{mercer1909functions}, introduced in the following. 


\begin{theorem}[Mercer's Theorem \cite{mercer1909functions}]\label{theorem_Mercer}
Suppose $k: [a,b] \times [a,b] \rightarrow \mathbb{R}$ is a continuous symmetric positive semi-definite kernel which is bounded:
\begin{align}\label{equation_Mercer_theorem_sup_kernel}
\sup_{\b{x}, \b{y}}\, k(\b{x}, \b{y}) < \infty.
\end{align}
Assume the operator $T_k$ takes a function $f(\b{x})$ as its argument and outputs a new function as:
\begin{align}\label{equation_Mercer_theorem_T_operator}
T_k f(\b{x}) := \int_a^b k(\b{x}, \b{y}) f(\b{y})\, d\b{y},
\end{align}
which is a Fredholm integral equation \cite{schmidt1908auflosung}. 
The operator $T_k$ is called the Hilbert–Schmidt integral operator {\citep[Chapter 8]{renardy2006introduction}}.
This output function is positive semi-definite:
\begin{align}
\int\!\!\! \int k(\b{x}, \b{y}) f(\b{y})\, d\b{x}\, d\b{y} \geq 0.
\end{align}
Then, there is a set of orthonormal bases $\{\psi_i(.)\}_{i=1}^\infty$ of $L_2(a,b)$ consisting of eigenfunctions of $T_K$ such that the corresponding sequence of eigenvalues $\{\lambda_i\}_{i=1}^\infty$ are non-negative: 
\begin{align}\label{equation_Mercer_theorem_eigenfunction_decomposition}
\int k(\b{x}, \b{y})\, \psi_i(\b{y})\, d\b{y} = \lambda_i\, \psi_i(\b{x}).
\end{align}
The eigenfunctions corresponding to the non-zero eigenvalues are continuous on $[a,b]$ and $k$ can be represented as \cite{aizerman1964theoretical}:
\begin{align}\label{equation_Mercer_theorem_kernel_representation}
k(\b{x}, \b{y}) = \sum_{i=1}^\infty \lambda_i\, \psi_i(\b{x})\, \psi_i(\b{y}),
\end{align}
where the convergence is absolute and uniform.
\end{theorem}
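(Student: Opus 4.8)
The plan is to establish Mercer's theorem by treating the kernel $k$ as the integrand of a Hilbert–Schmidt integral operator $T_k$ on $L_2(a,b)$ and then applying the spectral theorem for compact self-adjoint operators. The essential recognition is that the boundedness condition $\sup_{\b{x},\b{y}} k(\b{x},\b{y}) < \infty$ together with continuity on the compact domain $[a,b]\times[a,b]$ forces $k$ to be square-integrable, so that $T_k$ maps $L_2(a,b)$ into itself and is in fact a compact operator. Symmetry of $k$ makes $T_k$ self-adjoint, and the positive semi-definiteness hypothesis makes $T_k$ a positive operator. These three structural facts are exactly the hypotheses needed to invoke the spectral theory that produces the eigenfunction decomposition.

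\emph{First I would} verify that $T_k$ is compact and self-adjoint. Self-adjointness follows directly from the symmetry $k(\b{x},\b{y})=k(\b{y},\b{x})$ by Fubini's theorem applied to $\langle T_k f, g\rangle = \langle f, T_k g\rangle$. Compactness is the more delicate point: since $k$ is continuous on the compact set $[a,b]^2$ it is uniformly continuous and bounded, hence $k \in L_2([a,b]^2)$, and every Hilbert–Schmidt operator is compact. \emph{Next I would} apply the spectral theorem for compact self-adjoint operators to obtain a countable orthonormal basis $\{\psi_i(.)\}_{i=1}^\infty$ of eigenfunctions with real eigenvalues $\lambda_i \to 0$, which is precisely Eq.~(\ref{equation_Mercer_theorem_eigenfunction_decomposition}). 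Positivity of the operator, inherited from the positive semi-definiteness of $k$ via the quadratic form $\langle T_k f, f\rangle = \int\!\!\int k(\b{x},\b{y}) f(\b{x}) f(\b{y})\, d\b{x}\, d\b{y} \geq 0$, then guarantees $\lambda_i \geq 0$ for every $i$. I would also argue that any eigenfunction with $\lambda_i \neq 0$ is continuous, since $\psi_i = \lambda_i^{-1} T_k \psi_i$ and $T_k$ maps $L_2$ functions to continuous functions when $k$ is continuous.

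\emph{The hard part will be} proving Eq.~(\ref{equation_Mercer_theorem_kernel_representation}), namely that the series $\sum_i \lambda_i \psi_i(\b{x})\psi_i(\b{y})$ converges to $k(\b{x},\b{y})$ \emph{absolutely and uniformly}, rather than merely in the $L_2$ sense that the spectral theorem hands us for free. The $L_2$ convergence of the partial sums to $k$ is immediate from the spectral representation of $T_k$; upgrading this to uniform convergence is the genuine content of Mercer's theorem. The standard route is Dini's theorem: one defines the remainder kernels $k_N(\b{x},\b{y}) = k(\b{x},\b{y}) - \sum_{i=1}^N \lambda_i \psi_i(\b{x})\psi_i(\b{y})$, shows each $k_N$ is itself a continuous positive semi-definite kernel, and deduces from positive semi-definiteness that the diagonal $k_N(\b{x},\b{x}) \geq 0$, so the diagonal partial sums $\sum_{i=1}^N \lambda_i \psi_i(\b{x})^2$ increase monotonically and are bounded above by $k(\b{x},\b{x})$. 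Since the limit $\sum_i \lambda_i \psi_i(\b{x})^2$ is continuous and the convergence is monotone on the compact set $[a,b]$, Dini's theorem yields uniform convergence on the diagonal. \emph{Finally I would} pass from the diagonal to the full off-diagonal statement by the Cauchy–Schwarz inequality, bounding the tail $\big|\sum_{i>N} \lambda_i \psi_i(\b{x})\psi_i(\b{y})\big|$ by $\big(\sum_{i>N}\lambda_i\psi_i(\b{x})^2\big)^{1/2}\big(\sum_{i>N}\lambda_i\psi_i(\b{y})^2\big)^{1/2}$, which the diagonal result controls uniformly; this simultaneously delivers both the absolute and the uniform convergence claimed.
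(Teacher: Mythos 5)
Your proposal is correct and follows the same overall skeleton as the paper's proof: establish that $T_k$ is a compact self-adjoint positive operator, invoke the spectral theorem to obtain the orthonormal eigenfunctions and non-negative eigenvalues, and then control the tail of $\sum_i \lambda_i \psi_i(\b{x})\psi_i(\b{y})$ via positivity of the truncated kernel on the diagonal together with the Cauchy--Schwarz inequality. Two of your choices are worth noting because they tighten steps that the paper leaves loose. First, you prove compactness by observing that continuity of $k$ on the compact square forces $k \in L_2([a,b]^2)$, so $T_k$ is Hilbert--Schmidt and hence compact; the paper instead gestures at the Arzel\`a--Ascoli theorem with an argument about unit balls and norm convergence that is not really carried out. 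Second, and more substantively, the paper's final step only establishes that the diagonal partial sums $\sum_{i=1}^n \lambda_i \psi_i(\b{x})^2$ are uniformly \emph{bounded} by $\sup_{\b{x}} k(\b{x},\b{x})$, and then asserts absolute and uniform convergence; boundedness of a monotone sequence gives pointwise convergence only. Your explicit appeal to Dini's theorem --- monotone convergence of continuous partial sums to a continuous limit on the compact set $[a,b]$ is automatically uniform --- is exactly the missing ingredient, and your subsequent Cauchy--Schwarz bound on the off-diagonal tail $\bigl|\sum_{i>N}\lambda_i\psi_i(\b{x})\psi_i(\b{y})\bigr| \leq \bigl(\sum_{i>N}\lambda_i\psi_i(\b{x})^2\bigr)^{1/2}\bigl(\sum_{i>N}\lambda_i\psi_i(\b{y})^2\bigr)^{1/2}$ correctly converts the diagonal control into the full claim. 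The one point you should still verify carefully is the continuity of the limit function $\sum_i \lambda_i\psi_i(\b{x})^2$ needed as a hypothesis of Dini's theorem (the classical argument obtains it from the $L_2$ identity $\int k(\b{x},\b{y})^2 d\b{y} = \sum_i \lambda_i^2\psi_i(\b{x})^2$ and related estimates); as sketched, your proof assumes it without comment.
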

\begin{proof}
A roughly high-level proof for the Mercer's theorem is as follows. 



\textbf{Step 1 of proof:}
According to assumptions of theorem, the Hilbert-Schmidt integral operator $T_k$ is a symmetric operator on $L_2(a,b)$ space. 
Consider a unit ball in $L_2(a,b)$ as input to the operator.
As the kernel is bounded, $\sup_{\b{x}, \b{y}}\, k(\b{x}, \b{y}) < \infty$, the sequence $f_1, f_2, \dots$ converges in norm, i.e. $\|f_n - f\| \rightarrow 0$ as $n \rightarrow 0$. 
Therefore, according to the Arzel{\`a}-Ascoli theorem \cite{arzela1895sulle}, the image of the unit ball after applying the operator is compact. In other words, the operator $T_k$ is compact. 

\textbf{Step 2 of proof:}
According to the spectral theorem \cite{hawkins1975cauchy}, there exist several orthonormal bases $\{\psi_i(.)\}_{i=1}^\infty$ in $L_2(a,b)$ for the compact operator $T_k$. This provides a spectral (or eigenvalue) decomposition for the operator $T_k$ \cite{ghojogh2019eigenvalue}:
\begin{align}\label{equation_Mercer_theorem_eigenvalue_decomposition}
T_k \psi_i(\b{x}) = \lambda_i\, \psi_i(\b{x}),
\end{align}
where $\{\psi_i(.)\}_{i=1}^\infty$ and $\{\lambda_i\}_{i=1}^\infty$ are the eigenvectors and eigenvalues of the operator $T_k$, respectively. 
Noticing the defined Eq. (\ref{equation_Mercer_theorem_T_operator}) and the eigenvalue decomposition, Eq. (\ref{equation_Mercer_theorem_eigenvalue_decomposition}), we have: 
\begin{align}
\int k(\b{x}, \b{y})\, \psi_i(\b{y})\, d\b{y} \overset{(\ref{equation_Mercer_theorem_T_operator})}{=} T_k \psi_i(\b{x})  \overset{(\ref{equation_Mercer_theorem_eigenvalue_decomposition})}{=} \lambda_i\, \psi_i(\b{x}).
\end{align}
This proves the Eq. (\ref{equation_Mercer_theorem_eigenfunction_decomposition}) which is the eigenfunction decomposition of the operator $T_k$. Note that the eigenvectors $\{\psi_i(.)\}_{i=1}^\infty$ are referred to as the \textit{eigenfunctions} because the decomposition is applied on a function or operator rather than a matrix. 
Note that eigenfunctions will be explained more in Section \ref{section_eigenfunctions}.

\textbf{Step 3 of proof:} 
According to Parseval's theorem \cite{parseval1806memoires}, the Bessel's inequality can be converted to equality \cite{saxe2002beginning}. For the orthonormal bases $\{\psi_i(.)\}_{i=1}^\infty$ in the Hilbert space $\mathcal{H}$ associated with kernel $k$, we have for any function $f \in L_2(a,b)$:
\begin{align}\label{equation_Mercer_theorem_f_Parseval}
f = \sum_{i=1}^\infty \langle f, \psi_i \rangle_k\, \psi_i.
\end{align}
If we replace $\psi_i$ with $f$ in Eq. (\ref{equation_Mercer_theorem_eigenvalue_decomposition}) and consider Eq. (\ref{equation_Mercer_theorem_f_Parseval}), we will have:
\begin{align}\label{equation_Mercer_theorem_T_as_sum}
T_k f = \sum_{i=1}^\infty \lambda_i \langle f, \psi_i \rangle_k\, \psi_i.
\end{align}
One can consider Eq. (\ref{equation_Mercer_theorem_T_operator}) as $T_k f = k f$. Noticing this and Eq. (\ref{equation_Mercer_theorem_T_as_sum}) results in:
\begin{align}\label{equation_Mercer_theorem_k_f_as_sum}
k f = \sum_{i=1}^\infty \lambda_i \langle f, \psi_i \rangle_k\, \psi_i.
\end{align}
Ignoring $f$ from Eq. (\ref{equation_Mercer_theorem_k_f_as_sum}) gives:
\begin{align}
k(\b{x}, \b{y}) = \sum_{i=1}^\infty \lambda_i\, \psi_i(\b{x})\, \psi_i(\b{y}),
\end{align}
which is Eq. (\ref{equation_Mercer_theorem_kernel_representation}); hence, that is proved. 

\textbf{Step 4 of proof:} 
We define the truncated kernel $r_n$ (with parameter $n$) as:
\begin{align}
r_n(\b{x}, \b{y}) &:= k(\b{x}, \b{y}) - \sum_{i=1}^n \lambda_i\, \psi_i(\b{x})\, \psi_i(\b{y}) \nonumber \\
&= \sum_{i={n+1}}^\infty \lambda_i\, \psi_i(\b{x})\, \psi_i(\b{y}).
\end{align}
As $T_k$ is an integral operator, this truncated kernel has positive kernel, i.e., for every $\b{x} \in [a,b]$, we have:
\begin{align}
&r_n(\b{x}, \b{x}) = k(\b{x}, \b{x}) - \sum_{i=1}^n \lambda_i\, \psi_i(\b{x})\, \psi_i(\b{x}) \geq 0 \nonumber \\
&\implies \sum_{i=1}^n \lambda_i\, \psi_i(\b{x})\, \psi_i(\b{x}) \leq k(\b{x}, \b{x}) \leq \sup_{\b{x} \in [a,b]} k(\b{x}, \b{x}). \label{equation_Mercer_theorem_truncated_kernel_inequality}
\end{align}
By Cauchy-Schwartz inequality, we have:
\begin{align*}
&\Big|\sum_{i=1}^n \lambda_i\, \psi_i(\b{x})\, \psi_i(\b{y})\Big|^2 \\
&\leq \Big( \sum_{i=1}^n \lambda_i\, \psi_i(\b{x})\, \psi_i(\b{x}) \Big) \Big( \sum_{i=1}^n \lambda_i\, \psi_i(\b{y})\, \psi_i(\b{y}) \Big) \\
&\overset{(\ref{equation_Mercer_theorem_truncated_kernel_inequality})}{\leq} \sup_{\b{x} \in [a,b]} \big(k(\b{x}, \b{x})\big)^2.
\end{align*}
Taking second root from the sides of inequality gives:
\begin{align}
\sum_{i=1}^n \lambda_i \big|\psi_i(\b{x})\, \psi_i(\b{x})\big| \leq \sup_{\b{x} \in [a,b]} | k(\b{x}, \b{x}) | \overset{(\ref{equation_Mercer_theorem_sup_kernel})}{\leq} \infty.
\end{align}
This shows that the sequence $\sum_{i=1}^n \lambda_i\, \psi_i(\b{x})\, \psi_i(\b{x})$ converges absolutely and uniformly.
Q.E.D.


\end{proof}

The Mercer's theorem is very important. Many of its equations, such as Eqs. (\ref{equation_Mercer_theorem_T_operator}), (\ref{equation_Mercer_theorem_eigenfunction_decomposition}), and (\ref{equation_Mercer_theorem_kernel_representation}) are used in theory of kernels and kernel methods. 

\subsection{Feature Map and Pulling Function}\label{section_feature_map}

Let $\mathcal{X} := \{\b{x}_i\}_{i=1}^n$ be the set of data in the input space (note that the input space is the original space of data). The $t$-dimensional (perhaps infinite dimensional) feature space (or Hilbert space) is denoted by $\mathcal{H}$. 

\begin{definition}[Feature Map or Pulling Function]
We define the mapping:
\begin{align}
\b{\phi}: \mathcal{X} \rightarrow \mathcal{H}, 
\end{align}
to transform data from the input space to the feature space, i.e. Hilbert space. In other words, this mapping pulls data to the feature space:
\begin{align}\label{equation_pulling_mapping}
\b{x} \mapsto \b{\phi}(\b{x}).
\end{align}
The function $\b{\phi}(\b{x})$ is called the feature map or pulling function. 
The feature map is a (possibly infinite-dimensional) vector whose elements are \cite{minh2006mercer}:
\begin{equation}\label{equation_feature_map}
\begin{aligned}
\b{\phi}(\b{x}) &= [\phi_1(\b{x}), \phi_2(\b{x}), \dots]^\top \\
&:= [\sqrt{\lambda}_1\, \b{\psi}_1(\b{x}), \sqrt{\lambda}_2\, \b{\psi}_2(\b{x}), \dots]^\top,
\end{aligned}
\end{equation}
where $\{\b{\psi}_i\}$ and $\{\lambda_i\}$ are eigenfunctions and eigenvalues of the kernel operator (see Eq. (\ref{equation_Mercer_theorem_eigenfunction_decomposition})). Note that eigenfunctions will be explained more in Section \ref{section_eigenfunctions}.
\end{definition}
Let $t$ denote the dimensionality of $\b{\phi}(\b{x})$.
The feature map may be infinite or finite dimensional, i.e. $t$ can be infinity; it is usually a very large number (recall Definition \ref{definition_Hilbert_space} where we said Hilbert space may have infinite number of dimensions).

Considering both Eqs. (\ref{equation_Mercer_theorem_kernel_representation}) and (\ref{equation_feature_map}) shows that:
\begin{align}\label{equation_kernel_inner_product}
k(\b{x}, \b{y}) = \big\langle \b{\phi}(\b{x}), \b{\phi}(\b{y}) \big\rangle_k = \b{\phi}(\b{x})^\top \b{\phi}(\b{y}). 
\end{align}
Hence, the kernel between two points is the inner product of pulled data points to the feature space. 
Suppose we stack the feature maps of all points $\b{X} \in \mathbb{R}^{d \times n}$ column-wise in:
\begin{align}\label{equation_Phi_X_pulled_matrix}
\b{\Phi}(\b{X}) := [\b{\phi}(\b{x}_1), \b{\phi}(\b{x}_2), \dots, \b{\phi}(\b{x}_n)],
\end{align}
which is $t \times n$ dimensional and $t$ may be infinity or a large number. 
The kernel matrix defined in Definition \ref{definition_Gram_matrix} can be calculated as:
\begin{align}\label{equation_kernel_inner_product_matrix}
\mathbb{R}^{n \times n} \ni \b{K} = \big\langle \b{\Phi}(\b{X}), \b{\Phi}(\b{X}) \big\rangle_k =  \b{\Phi}(\b{X})^\top \b{\Phi}(\b{X}).
\end{align}
Eqs. (\ref{equation_kernel_inner_product}) and (\ref{equation_kernel_inner_product_matrix}) show that there is no need to compute kernel using eigenfunctions but a simple inner product suffices for kernel computation. This is the beauty of kernel methods which are simple to compute. 

\begin{definition}[Input Space and Feature Space \cite{scholkopf1999input}]
The space in which data $\b{X}$ exist is called the input space, also known as the original space. This space is denoted by $\mathcal{X}$ and is usually an $\mathbb{R}^d$ Euclidean space. The RKHS to which the data have been pulled is called the feature space. Data can be pulled from the input to feature space using kernels. 
\end{definition}

\begin{remark}[Kernel is a Measure of Similarity]\label{remark_kernel_is_similarity}
Inner product is a measure of similarity in terms of angles of vectors or in terms of location of points with respect to origin. According to Eq. (\ref{equation_kernel_inner_product}), kernel can be seen as inner product between feature maps of points; hence, kernel is a measure of similarity between points and this similarity is computed in the feature space rather than input space. 
\end{remark}

Pulling data to the feature space is performed using kernels which is the inner product of points in RKHS according to Eq. (\ref{equation_kernel_inner_product}). Hence, the relative similarity (inner product) of pulled data points is known by the kernel. However, in most of kernels, we cannot find an explicit expression for the pulled data points. 
Therefore, the exact location of pulled data points to RKHS is not necessarily known but the relative similarity of pulled points, which is the kernel, is known. 
An exceptional kernel is the linear kernel in which we have $\phi(\b{x}) = \b{x}$. 
Figure \ref{figure_pulling} illustrates what we mean by not knowing the explicit location of pulled points to RKHS.

\begin{figure*}[!t]
\centering
\includegraphics[width=6in]{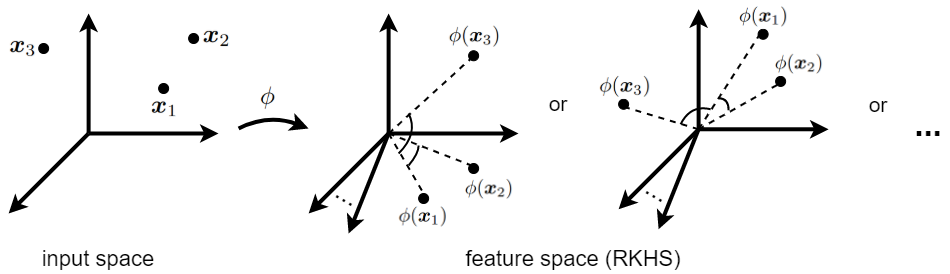}
\caption{Pulling data from the input space to the feature space (RKHS). The explicit locations of pulled points are not necessarily known but the relative similarity (inner product) of pulled data points is known in the feature space.}
\label{figure_pulling}
\end{figure*}

\section{Characteristics of Kernels}\label{section_kernel_characteristics}

In this section, we review some of the characteristics of kernels including the symmetry and positive semi-definiteness properties of Mercer kernel (recall Definition \ref{definition_Mercer_kernel}).

\begin{lemma}[Symmetry of Kernel]\label{lemma_kernel_is_symmetric}
A square Mercer kernel matrix is symmetric, so we have:
\begin{align}
&\langle f, g \rangle_k = \langle g, f \rangle_k, \quad \text{ or } \label{equation_symmetric_inner_product}\\
&\b{K} \in \mathbb{S}^n, \quad \text{i.e.,} \quad k(\b{x}, \b{y}) = k(\b{y}, \b{x}).
\end{align}
\end{lemma}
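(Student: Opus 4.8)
The statement bundles three equivalent facts: symmetry of the RKHS inner product, Eq. (\ref{equation_symmetric_inner_product}); the scalar identity $k(\b{x}, \b{y}) = k(\b{y}, \b{x})$; and symmetry of the Gram matrix, $\b{K} \in \mathbb{S}^n$. The plan is to take symmetry of the inner product as the foundational fact and derive the other two from it, using the feature-map representation of the kernel from Eq. (\ref{equation_kernel_inner_product}). I would deliberately not route the argument through the Mercer expansion of Eq. (\ref{equation_Mercer_theorem_kernel_representation}): although $k(\b{x},\b{y}) = \sum_i \lambda_i\, \psi_i(\b{x})\, \psi_i(\b{y})$ is manifestly symmetric by commutativity of real multiplication, Mercer's theorem already assumes a symmetric kernel as a hypothesis, so invoking it here would be circular.

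First I would record that the RKHS $\mathcal{H}$ is a real inner product space, so by the symmetry axiom of a (real) inner product we have $\langle u, v \rangle_k = \langle v, u \rangle_k$ for all $u, v \in \mathcal{H}$. Specializing to $u = f$ and $v = g$ gives Eq. (\ref{equation_symmetric_inner_product}) directly, with no computation needed.

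Next I would establish the scalar symmetry. By Eq. (\ref{equation_kernel_inner_product}), the kernel is the feature-space inner product $k(\b{x}, \b{y}) = \langle \b{\phi}(\b{x}), \b{\phi}(\b{y}) \rangle_k$. Applying the symmetry just recorded to $u = \b{\phi}(\b{x})$ and $v = \b{\phi}(\b{y})$ yields $\langle \b{\phi}(\b{x}), \b{\phi}(\b{y}) \rangle_k = \langle \b{\phi}(\b{y}), \b{\phi}(\b{x}) \rangle_k$, hence $k(\b{x}, \b{y}) = k(\b{y}, \b{x})$. For the matrix claim I would then read this off entrywise: $\b{K}(i,j) = k(\b{x}_i, \b{x}_j) = k(\b{x}_j, \b{x}_i) = \b{K}(j,i)$ for all $i, j \in \{1, \dots, n\}$, so $\b{K} = \b{K}^\top$ and therefore $\b{K} \in \mathbb{S}^n$.

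The computations here are immediate, so the only real difficulty is foundational rather than technical: one must anchor the proof on the inner-product characterization of the kernel (Eq. (\ref{equation_kernel_inner_product})) together with the inner-product axioms, and resist using the Mercer decomposition or the already-derived bilinear form of Eq. (\ref{equation_RKHS_inner_product}) (whose derivation itself invokes symmetry), in order to keep the argument free of any circular dependence on the very property being proved.
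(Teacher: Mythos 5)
Your proof is correct and follows essentially the same route as the paper's: both anchor the argument on the feature-map identity $k(\b{x}, \b{y}) = \langle \b{\phi}(\b{x}), \b{\phi}(\b{y}) \rangle_k = \b{\phi}(\b{x})^\top \b{\phi}(\b{y})$ of Eq. (\ref{equation_kernel_inner_product}) and conclude by symmetry of the real inner product (the paper phrases this as the two scalar dot products being equal). Your added remarks on avoiding circularity through the Mercer expansion and the explicit entrywise verification that $\b{K} = \b{K}^\top$ are sound refinements but do not change the underlying argument.
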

\begin{proof}
\begin{align*}
k(\b{x}, \b{y}) &\overset{(\ref{equation_kernel_inner_product})}{=} \big\langle \b{\phi}(\b{x}), \b{\phi}(\b{y}) \big\rangle_k = \b{\phi}(\b{x})^\top \b{\phi}(\b{y}) \\
&\overset{(a)}{=} \b{\phi}(\b{y})^\top \b{\phi}(\b{x}) = k(\b{y}, \b{x}), 
\end{align*}
where $(a)$ is because $\b{\phi}(\b{x})^\top \b{\phi}(\b{y})$ and $\b{\phi}(\b{y})^\top \b{\phi}(\b{x})$ are scalars and are equivalent according to the definition of dot product between vectors. Q.E.D.
\end{proof}

\begin{lemma}[Zero Kernel]
We have:
\begin{align}
\langle f, f \rangle_k = 0 \quad \text{iff} \quad f = 0.
\end{align}
\end{lemma}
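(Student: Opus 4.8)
The plan is to prove the two implications separately. The forward direction, namely that $f = 0$ implies $\langle f, f \rangle_k = 0$, is immediate from the bilinearity of the inner product recorded in Eq.~(\ref{equation_RKHS_inner_product}): substituting $f = 0$ (equivalently, all coefficients $\alpha_i = 0$) into the double sum yields $\langle f, f \rangle_k = 0$ at once. The substance of the lemma therefore lies in the reverse implication, $\langle f, f \rangle_k = 0 \implies f = 0$, which is precisely the statement that the RKHS inner product is non-degenerate (positive definite rather than merely positive semi-definite).

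For the reverse direction, the key tool is the reproducing property of Eq.~(\ref{equation_RKHS_reproducing}), which recovers any function value $f(\b{x})$ as the inner product $\langle f, k_{\b{x}} \rangle_k$. Combining this with the Cauchy-Schwarz inequality---valid here because the form $\langle \cdot, \cdot \rangle_k$ is positive semi-definite owing to $\b{K} \succeq 0$ from Definition~\ref{definition_Mercer_kernel}---gives, for every $\b{x} \in \mathcal{X}$,
\begin{align*}
|f(\b{x})|^2 = \big|\langle f, k_{\b{x}} \rangle_k\big|^2 \leq \langle f, f \rangle_k\, \langle k_{\b{x}}, k_{\b{x}} \rangle_k = \langle f, f \rangle_k\, k(\b{x}, \b{x}),
\end{align*}
where the final equality uses the special case $\langle k_{\b{x}}, k_{\b{x}} \rangle_k = k(\b{x}, \b{x})$ noted after Eq.~(\ref{equation_RKHS_reproducing}). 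If $\langle f, f \rangle_k = 0$, the right-hand side vanishes for all $\b{x}$, forcing $f(\b{x}) = 0$ at every point of the domain, i.e., $f$ is the zero function.

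The main obstacle to watch for is that an argument based solely on the quadratic form $\langle f, f \rangle_k = \b{\alpha}^\top \b{K} \b{\alpha}$ (read off from Eq.~(\ref{equation_RKHS_inner_product})) is insufficient: since $\b{K}$ need only be positive semi-definite, a nonzero coefficient vector $\b{\alpha}$ lying in the null space of $\b{K}$ gives $\b{\alpha}^\top \b{K} \b{\alpha} = 0$, so vanishing of the quadratic form does not by itself imply $\b{\alpha} = \b{0}$. The resolution is that we must conclude $f = 0$ as a \emph{function}, not that the coefficients vanish; the reproducing-property route above sidesteps this subtlety by establishing pointwise vanishing of $f$ directly, which is the correct notion of $f = 0$ in this function space.
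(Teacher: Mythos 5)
Your proof is correct and follows essentially the same route as the paper's: both use the reproducing property $f(\b{x}) = \langle f, k_{\b{x}}\rangle_k$ together with the Cauchy--Schwarz inequality to bound $|f(\b{x})|^2$ by $\|f\|_k^2\,\|k_{\b{x}}\|_k^2$ and conclude pointwise vanishing. Your explicit treatment of the trivial forward direction and the remark on why the quadratic form $\b{\alpha}^\top\b{K}\b{\alpha}$ alone would not suffice are welcome additions, but the core argument is the same.
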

\begin{proof}
\begin{align*}
0 &\leq f^2(\b{x}) \overset{(\ref{equation_RKHS_reproducing})}{=} \langle f, k_{\b{x}} \rangle_k\,\, \langle f, k_{\b{x}} \rangle_k \\
&\overset{(a)}{\leq} \|f\|_k \|k_{\b{x}}\|_k \,\, \|f\|_k \|k_{\b{x}}\|_k = \|f\|_k^2\, \|k_{\b{x}}\|_k^2 \overset{(b)}{=} 0,
\end{align*}
where $(a)$ is because Cauchy-Schwarz inequality and $(b)$ is because we had assumed $\langle f, f \rangle_k = \|f\|_k = 0$. Hence:
\begin{align*}
0 \leq f^2(\b{x}) = 0 \implies f(\b{x}) = 0.
\end{align*}
\end{proof}

\begin{lemma}[Positive Semi-definiteness of Kernel]\label{lemma_kernel_is_positive_semidefinite}
The Mercer kernel matrix is positive semi-definite:
\begin{align}
\b{K} \in \mathbb{S}_{+}^n, \quad \text{i.e.,} \quad \b{K} \succeq \b{0}.
\end{align}
\end{lemma}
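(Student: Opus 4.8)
The plan is to verify positive semi-definiteness directly from the definition: a symmetric matrix $\b{K}$ satisfies $\b{K} \succeq \b{0}$ if and only if $\b{a}^\top \b{K}\, \b{a} \geq 0$ for every $\b{a} = [a_1, \dots, a_n]^\top \in \mathbb{R}^n$. Symmetry of $\b{K}$ is already established in Lemma \ref{lemma_kernel_is_symmetric}, so only the quadratic-form inequality remains to be shown.

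First I would expand the quadratic form entrywise using the definition of the Gram matrix, Eq. (\ref{equation_Gram_matrix}), to write $\b{a}^\top \b{K}\, \b{a} = \sum_{i=1}^n \sum_{j=1}^n a_i\, a_j\, k(\b{x}_i, \b{x}_j)$. The key step is then to substitute the feature-map representation of the kernel from Eq. (\ref{equation_kernel_inner_product}), namely $k(\b{x}_i, \b{x}_j) = \b{\phi}(\b{x}_i)^\top \b{\phi}(\b{x}_j)$, which lets me pull the scalars $a_i$ inside the feature maps and collapse the double sum into a single squared norm:
\begin{align*}
\b{a}^\top \b{K}\, \b{a} = \Big\langle \sum_{i=1}^n a_i\, \b{\phi}(\b{x}_i),\, \sum_{j=1}^n a_j\, \b{\phi}(\b{x}_j) \Big\rangle_k = \Big\| \sum_{i=1}^n a_i\, \b{\phi}(\b{x}_i) \Big\|_k^2 \geq 0.
\end{align*}
Equivalently and more compactly, I could invoke Eq. (\ref{equation_kernel_inner_product_matrix}), which already writes $\b{K} = \b{\Phi}(\b{X})^\top \b{\Phi}(\b{X})$: any matrix of the Gram form $\b{A}^\top \b{A}$ satisfies $\b{a}^\top \b{A}^\top \b{A}\, \b{a} = \|\b{A}\b{a}\|^2 \geq 0$, so it is positive semi-definite by construction, and the claim follows immediately with $\b{A} = \b{\Phi}(\b{X})$.

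I do not expect a serious obstacle in the finite-dimensional case, since the argument reduces to recognizing $\b{K}$ as a Gram matrix. The one point deserving care is that the feature space $\mathcal{H}$ may be infinite-dimensional, so $\sum_i a_i\, \b{\phi}(\b{x}_i)$ is an element of $\mathcal{H}$ rather than of a finite-dimensional Euclidean space; the factorization must then be read in terms of the inner product $\langle \cdot, \cdot \rangle_k$ of $\mathcal{H}$. This causes no difficulty, however, because non-negativity of the squared norm $\|\cdot\|_k^2$ is an axiom of any inner-product space, so the inequality $\b{a}^\top \b{K}\, \b{a} \geq 0$ holds irrespective of the dimensionality of $\mathcal{H}$, and therefore $\b{K} \succeq \b{0}$.
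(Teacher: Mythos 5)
Your proposal is correct and follows essentially the same route as the paper's proof: expand $\b{a}^\top \b{K}\,\b{a}$ entrywise via Eq. (\ref{equation_Gram_matrix}), rewrite each $k(\b{x}_i,\b{x}_j)$ as an inner product of feature maps, and collapse the double sum into the non-negative squared norm $\bigl\| \sum_i a_i\, \b{\phi}(\b{x}_i) \bigr\|_k^2$. Your added remark on the infinite-dimensional case is a sound (and welcome) clarification that the paper leaves implicit.
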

\begin{proof}
Let $\b{v}(i)$ denote the $i$-th element of vector $\b{v}$.
\begin{align*}
\b{v}^\top \b{K} \b{v} &\overset{(\ref{equation_Gram_matrix})}{=} \sum_{i=1}^n \sum_{j=1}^n \b{v}(i)\, \b{v}(j)\, k(\b{x}(i), \b{x}(j)) \\
&\overset{(\ref{equation_RKHS_inner_product})}{=} \sum_{i=1}^n \sum_{j=1}^n \b{v}(i)\, \b{v}(j)\, \Big\langle \b{\phi}\big(\b{x}(i)\big), \b{\phi}\big(\b{x}(j)\big) \Big\rangle_k \\
&= \sum_{i=1}^n \sum_{j=1}^n \Big\langle \b{v}(i)\, \b{\phi}\big(\b{x}(i)\big), \b{v}(j)\, \b{\phi}\big(\b{x}(j)\big) \Big\rangle_k \\
&= \Big\langle \sum_{i=1}^n \b{v}(i)\, \b{\phi}\big(\b{x}(i)\big), \sum_{j=1}^n \b{v}(j)\, \b{\phi}\big(\b{x}(j)\big) \Big\rangle_k \\
&= \Big\| \sum_{i=1}^n \b{v}(i)\, \b{\phi}\big(\b{x}(i)\big) \Big\|_k^2 \geq 0, \quad \forall \b{v} \in \mathbb{R}^n. 
\end{align*}
Hence, according to the definition of positive semi-definiteness \cite{bhatia2009positive}, we have $\b{K} \succeq \b{0}$. Q.E.D.
\end{proof}

\section{Well-known Kernel Functions}\label{section_well_known_kernels}

\subsection{Frequently Used Kernels}


There exist many different kernel functions which are widely used in machine learning \cite{rojo2018digital}. In the following, we list some of the most well-known kernels. 

\textbf{-- Linear Kernel:}

Linear kernel is the simplest kernel which is the inner product of points:
\begin{align}\label{equation_linear_kernel}
k(\b{x}, \b{y}) := \b{x}^\top \b{y}.
\end{align}
Comparing this with Eq. (\ref{equation_kernel_inner_product}) shows that in linear kernel we have $\b{\phi}(\b{x}) = \b{x}$. Hence, in this kernel, the feature map is explicitly known. Note that $\b{\phi}(\b{x}) = \b{x}$ shows that data are not pulled to any other space in linear kernel but in the input space, the inner products of points are calculated to obtain the feature space. 
Moreover, recall Remark \ref{remark_linear_kernel_equivalent_to_non_kernelized} which states that, depending on the kernelization approach, using linear kernel may or may not be equivalent to non-kernelized method. 

\textbf{-- Radial Basis Function (RBF) or Gaussian Kernel:}

RBF kernel has a scaled Gaussian (or normal) distribution where the normalization factor of distribution is usually ignored. Hence, it is also called the Gaussian kernel. The RBF kernel is formulated as:
\begin{align}\label{equation_RBF_kernel}
k(\b{x}, \b{y}) := \exp(-\gamma\, \|\b{x} - \b{y}\|_2^2) = \exp(-\frac{\|\b{x} - \b{y}\|_2^2}{\sigma^2}),
\end{align}
where $\gamma := 1/\sigma^2$ and $\sigma^2$ is the variance of kernel. 
A proper value for this parameter is $\gamma=1/d$ where $d$ is the dimensionality of data.
Note that RBF kernel has also been widely used in RBF networks \cite{orr1996introduction} and kernel density estimation \cite{scott1992multivariate}.

\textbf{-- Laplacian Kernel:}

The Laplacian kernel, also called the Laplace kernel, is similar to the RBF kernel but with $\ell_1$ norm rather than squared $\ell_2$ norm. The Laplacian kernel is: 
\begin{align}
k(\b{x}, \b{y}) := \exp(-\gamma\, \|\b{x} - \b{y}\|_1) = \exp(-\frac{\|\b{x} - \b{y}\|_1}{\sigma^2}),
\end{align}
where $\|\b{x} - \b{y}\|_1$ is also called the Manhattan distance. 
A proper value for this parameter is $\gamma=1/d$ where $d$ is the dimensionality of data.
In some specific fields of science, the Laplacian kernel has been found to perform better than Gaussian kernel \cite{rupp2015machine}. This makes sense because of betting on sparsity principal \cite{hastie2009elements} since $\ell_1$ norm makes algorithm sparse. 
Note that $\ell_2$ norm in RBF kernel is also more sensitive to noise; however, the computation and derivative of $\ell_1$ norm is more difficult than $\ell_2$ norm. 

\textbf{-- Sigmoid Kernel:}

Sigmoid kernel is a hyperbolic tangent function applied on inner product of points. It is formulated as:
\begin{align}
k(\b{x}, \b{y}) := \tanh(\gamma \b{x}^\top \b{y} + c),
\end{align}
where $\gamma >0$ is the slope and $c$ is the intercept. Some proper values for these parameters are $\gamma=1/d$ and $c=1$ where $d$ is the dimensionality of data.
Note that the hyperbolic tangent function is also used widely for activation functions in neural networks \cite{goodfellow2016deep}. 

\textbf{-- Polynomial Kernel:}

Polynomial kernel applies a polynomial function with degree $\delta$ (a positive integer) on inner product of points:
\begin{align}
k(\b{x}, \b{y}) := (\gamma \b{x}^\top \b{y} + c)^d,
\end{align}
where $\gamma >0$ is the slope and $c$ is the intercept. Some proper values for these parameters are $\gamma=1/d$ and $c=1$ where $d$ is the dimensionality of data.

\textbf{-- Cosine Kernel:}

According to Remark \ref{remark_kernel_is_similarity}, kernel is a measure of similarity and computes the inner product between points in the feature space. 
Cosine kernel computes the similarity between points. It is obtained from the formula of cosine and inner product:
\begin{align}\label{equation_cosine_kernel}
k(\b{x}, \b{y}) := \cos(\b{x}, \b{y}) = \frac{\b{x}^\top \b{y}}{\|\b{x}\|_2\, \|\b{y}\|_2}.
\end{align}
The normalization in the denominator projects the points onto a unit hyper-sphere so that the inner product measures the similarity of their angles regardless of their lengths. 
Note that angle-based measures such as cosine are found to work better for face recognition compared to Euclidean distances \cite{perlibakas2004distance}.

\textbf{-- Chi-squared Kernel:}

Assume $\b{x}(j)$ denotes the $j$-th dimension of the $d$-dimensional point $\b{x}$.
The Chi-squared ($\chi^2$) kernel is \cite{zhang2007local}:
\begin{align}
k(\b{x}, \b{y}) := \exp\Big(\!\!-\!\gamma \sum_{j=1}^d \frac{\big(\b{x}(j) - \b{y}(j)\big)^2}{\b{x}(j) + \b{y}(j)}\Big),
\end{align}
where $\gamma >0$ is a parameter (a proper value is $\gamma=1$). 
Note that the summation term inside exponential (without the minus) is the Chi-squared distance which is related to the Chi-squared test in statistics. 

\subsection{Kernel Construction from Distance Metric}

Consider $d_{ij}^2 = ||\b{x}_i - \b{x}_j||_2^2$ as the squared Euclidean distance between $\b{x}_i$ and $\b{x}_j$. We have:
\begin{align*}
d_{ij}^2 &= ||\b{x}_i - \b{x}_j||_2^2 = (\b{x}_i - \b{x}_j)^\top (\b{x}_i - \b{x}_j) \\
&= \b{x}_i^\top \b{x}_i - \b{x}_i^\top \b{x}_j - \b{x}_j^\top \b{x}_i + \b{x}_j^\top \b{x}_j \\
&= \b{x}_i^\top \b{x}_i - 2\b{x}_i^\top \b{x}_j + \b{x}_j^\top \b{x}_j = \b{G}_{ii} - 2 \b{G}_{ij} + \b{G}_{jj},
\end{align*}
where $\mathbb{R}^{n \times n} \ni \b{G} := \b{X}^\top \b{X}$ is the linear Gram matrix. If $\mathbb{R}^n \ni \b{g} := [\b{g}_1, \dots, \b{g}_n] = [\b{G}_{11}, \dots, \b{G}_{nn}] = \textbf{diag}(\b{G})$, we have:
\begin{align*}
& d_{ij}^2 = \b{g}_i -2\b{G}_{ij} + \b{g}_j, \\
& \b{D} = \b{g}\b{1}^\top -2 \b{G} +\b{1}\b{g}^\top = \b{1}\b{g}^\top -2 \b{G} + \b{g}\b{1}^\top,
\end{align*}
where $\b{1}$ is the vector of ones and $\b{D}$ is the distance matrix with squared Euclidean distance ($d_{ij}^2$ as its elements). 
Let $\b{H}$ denote the centering matrix:
\begin{align}\label{equation_centered_matrix}
\mathbb{R}^{n \times n} \ni \b{H} := \b{I} - \frac{1}{n} \b{1}_n\b{1}_n^\top,
\end{align}
and $\b{I}$ is the identity matrix, $\b{1}_n := [1, \dots, 1]^\top \in \mathbb{R}^n$ and $\b{1}_{n \times n} := \b{1}_n \b{1}_n^\top \in \mathbb{R}^{n \times n}$. 
Refer to {\citep[Appendix A]{ghojogh2019unsupervised}} for more details about the centering matrix. 
We double-center the matrix $\b{D}$ as follows \cite{oldford2018lecture}:
\begin{align*}
\b{HDH} &= (\b{I} - \frac{1}{n}\b{1}\b{1}^\top) \b{D} (\b{I} - \frac{1}{n}\b{1}\b{1}^\top) \\
&= (\b{I} - \frac{1}{n}\b{1}\b{1}^\top) (\b{1}\b{g}^\top -2 \b{G} + \b{g}\b{1}^\top) (\b{I} - \frac{1}{n}\b{1}\b{1}^\top) \\
&= \big[\underbrace{(\b{I} - \frac{1}{n}\b{1}\b{1}^\top)\b{1}}_{=\,\b{0}} \b{g}^\top -2 (\b{I} - \frac{1}{n}\b{1}\b{1}^\top)\b{G}\\ 
&~~~~~ + (\b{I} - \frac{1}{n}\b{1}\b{1}^\top)\b{g}\b{1}^\top\big] (\b{I} - \frac{1}{n}\b{1}\b{1}^\top) \\
&= -2 (\b{I} - \frac{1}{n}\b{1}\b{1}^\top)\b{G}(\b{I} - \frac{1}{n}\b{1}\b{1}^\top) \\
&~~~~~ + (\b{I} - \frac{1}{n}\b{1}\b{1}^\top)\b{g}\underbrace{\b{1}^\top(\b{I} - \frac{1}{n}\b{1}\b{1}^\top)}_{=\,\b{0}} \\
&= -2 (\b{I} - \frac{1}{n}\b{1}\b{1}^\top)\b{G}(\b{I} - \frac{1}{n}\b{1}\b{1}^\top) = -2\,\b{HGH}.
\end{align*}
\begin{align}\label{equation_linearKernel_and_distanceMAtrix_1}
\therefore~~~~~~~~ \b{HGH} = \b{H}\b{X}^\top\b{X}\b{H} = -\frac{1}{2} \b{HDH}.
\end{align}
Note that $(\b{I} - \frac{1}{n}\b{1}\b{1}^\top)\b{1} = \b{0}$ and $\b{1}^\top(\b{I} - \frac{1}{n}\b{1}\b{1}^\top) = \b{0}$ because removing the row mean of $\b{1}$ and column mean of of $\b{1}^\top$ results in the zero vectors, respectively.

If data $\b{X}$ are already centered, i.e., the mean has been removed ($\b{X} \gets \b{X}\b{H}$), Eq. (\ref{equation_linearKernel_and_distanceMAtrix_1}) becomes:
\begin{align}\label{equation_linearKernel_and_distanceMAtrix_2}
\b{X}^\top\b{X} = -\frac{1}{2} \b{HDH}.
\end{align}

According to the kernel trick, Eq. (\ref{equation_kernel_trick_matrix}), we can write a general kernel matrix rather than the linear Gram matrix in Eq. (\ref{equation_linearKernel_and_distanceMAtrix_2}), to have \cite{cox2008multidimensional}:
\begin{align}\label{equation_generalKernel_and_distanceMAtrix}
\mathbb{R}^{n \times n} \ni \b{K} = \b{\Phi}(\b{X})^\top \b{\Phi}(\b{X}) = -\frac{1}{2} \b{HDH}.
\end{align}
This kernel is double-centered because of $\b{HDH}$.  
It is also noteworthy that Eq. (\ref{equation_generalKernel_and_distanceMAtrix}) can be used for unifying the spectral dimensionality reduction methods as special cases of kernel principal component analysis with different kernels. See \cite{ham2004kernel,bengio2004learning} and {\citep[Table 2.1]{strange2014open}} for more details. 

\begin{lemma}[Distance-based Kernel is a Mercer Kernel]
The kernel constructed from a valid distance metric, i.e. Eq. (\ref{equation_generalKernel_and_distanceMAtrix}), is a Mercer kernel. 
\end{lemma}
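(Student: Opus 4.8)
The plan is to verify the two defining properties of a Mercer kernel from Definition \ref{definition_Mercer_kernel}, namely symmetry and positive semi-definiteness, directly for the matrix $\b{K} = -\frac{1}{2}\b{HDH}$ of Eq. (\ref{equation_generalKernel_and_distanceMAtrix}). The whole argument rests on the factorization already established in the derivation preceding the lemma, Eq. (\ref{equation_linearKernel_and_distanceMAtrix_1}), which rewrites the double-centered distance matrix as a Gram matrix.

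First I would establish symmetry. Because the distance metric is symmetric, $d_{ij} = d_{ji}$, the distance matrix satisfies $\b{D} = \b{D}^\top$, and the centering matrix $\b{H} = \b{I} - \frac{1}{n}\b{1}_n\b{1}_n^\top$ of Eq. (\ref{equation_centered_matrix}) is manifestly symmetric, $\b{H} = \b{H}^\top$. Hence $\b{K}^\top = -\frac{1}{2}\b{H}^\top\b{D}^\top\b{H}^\top = -\frac{1}{2}\b{HDH} = \b{K}$, which is exactly the symmetry required by Lemma \ref{lemma_kernel_is_symmetric}.

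Next I would prove positive semi-definiteness. Invoking Eq. (\ref{equation_linearKernel_and_distanceMAtrix_1}), I have $\b{K} = -\frac{1}{2}\b{HDH} = \b{H}\b{X}^\top\b{X}\b{H}$. Using $\b{H} = \b{H}^\top$ once more, this becomes $\b{K} = (\b{X}\b{H})^\top(\b{X}\b{H})$, i.e. the Gram matrix of the column-centered data $\b{X}\b{H}$. Therefore, for any $\b{v} \in \mathbb{R}^n$, $\b{v}^\top\b{K}\b{v} = \|\b{X}\b{H}\b{v}\|_2^2 \geq 0$, so $\b{K} \succeq \b{0}$, exactly as in the argument of Lemma \ref{lemma_kernel_is_positive_semidefinite}. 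Combining the two properties, $\b{K}$ satisfies Definition \ref{definition_Mercer_kernel} and is therefore a Mercer kernel matrix.

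The step I expect to be the main obstacle is the positive semi-definiteness, because it silently requires that the distance metric be of Euclidean (negative) type, so that $\b{D}$ genuinely arises as the squared-Euclidean-distance matrix of some configuration $\b{X}$ and the factorization $-\frac{1}{2}\b{HDH} = \b{H}\b{X}^\top\b{X}\b{H}$ is legitimate. For an arbitrary metric that cannot be isometrically embedded in a Euclidean space, $-\frac{1}{2}\b{HDH}$ may fail to be positive semi-definite (this is essentially the content of Schoenberg's theorem), so the phrase ``valid distance metric'' in the statement should be read as ``a metric admitting such an embedding.'' I would therefore make this hypothesis explicit before running the Gram-matrix argument.
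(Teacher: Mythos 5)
Your proof is correct and follows essentially the same route as the paper's: symmetry from $\b{H}^\top=\b{H}$ and $\b{D}^\top=\b{D}$, and positive semi-definiteness by writing $-\frac{1}{2}\b{H}\b{D}\b{H}$ as a Gram matrix (you use $(\b{X}\b{H})^\top(\b{X}\b{H})$ via Eq.~(\ref{equation_linearKernel_and_distanceMAtrix_1}) where the paper uses $\b{\Phi}(\b{X})^\top\b{\Phi}(\b{X})$) so that $\b{v}^\top\b{K}\b{v}$ is a squared norm. Your closing caveat is well taken and is in fact sharper than the paper's own argument: the Gram factorization, and hence positive semi-definiteness, is only guaranteed when the metric is of negative (Euclidean/Hilbertian) type in the sense of Schoenberg, a hypothesis the paper leaves implicit in the phrase ``valid distance metric.''
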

\begin{proof}
The kernel is symmetric because:
\begin{align*}
\b{K}^\top = -\frac{1}{2} \b{H}^\top \b{D}^\top \b{H}^\top \overset{(a)}{=} -\frac{1}{2} \b{HDH} = \b{K},
\end{align*}
where $(a)$ is because $\b{H}$ and $\b{D}$ are symmetric matrices. 
Moreover, the kernel is positive semi-definite because:
\begin{align*}
&\b{K} = -\frac{1}{2} \b{HDH} = \b{\Phi}(\b{X})^\top \b{\Phi}(\b{X}) \\
&\implies \b{v}^\top \b{K} \b{v} = \b{v}^\top \b{\Phi}(\b{X})^\top \b{\Phi}(\b{X}) \b{v} \\
&~~~~~~~~~~~~~~~~~~~~~~~ = \|\b{\Phi}(\b{X}) \b{v}\|_2^2 \geq 0, \quad \forall \b{v} \in \mathbb{R}^n. 
\end{align*}
Hence, according to Definition \ref{definition_Mercer_kernel}, this kernel is a Mercer kernel. Q.E.D.
\end{proof}

\begin{remark}[Kernel Construction from Metric]
One can use any valid distance metric, satisfying the following properties:
\begin{enumerate}[topsep=0pt,itemsep=-1ex,partopsep=1ex,parsep=1ex]
\item non-negativity: $\b{D}(\b{x}, \b{y}) \geq 0$, 
\item equal points: $\b{D}(\b{x}, \b{y}) = 0 \iff \b{x} = \b{y}$, 
\item symmetry: $\b{D}(\b{x}, \b{y}) = \b{D}(\b{y}, \b{x})$, 
\item triangular inequality: $\b{D}(\b{x}, \b{y}) \leq \b{D}(\b{x}, \b{z})\! + \b{D}(\b{z}, \b{y})$,
\end{enumerate}
to calculate elements of distance matrix $\b{D}$ in Eq. (\ref{equation_linearKernel_and_distanceMAtrix_2}). It is important that the used distance matrix should be a valid distance matrix. Using various distance metrics in Eq. (\ref{equation_linearKernel_and_distanceMAtrix_2}) results in various useful kernels.
\end{remark}

Some examples are the geodesic kernel and Structural Similarity Index (SSIM) kernel, used in Isomap \cite{tenenbaum2000global} and image structure subspace learning \cite{ghojogh2019image}, respectively. 
The geodesic kernel is defined as \cite{tenenbaum2000global,ghojogh2020multidimensional}:
\begin{align}
\b{K} = -\frac{1}{2} \b{H}\b{D}^{(g)}\b{H},
\end{align}
where the approximation of geodesic distances using piece-wise Euclidean distances is used in calculating the geodesic distance matrix $\b{D}^{(g)}$. 
The SSIM kernel is defined as \cite{ghojogh2019image}:
\begin{align}
\b{K} = -\frac{1}{2} \b{H} \b{D}^{(s)} \b{H},
\end{align}
where the distance matrix $\b{D}^{(s)}$ is calculated using the SSIM distance \cite{brunet2011mathematical}.

\subsection{Important Classes of Kernels}\label{section_universal_characteristic_kernels}

In the following, we introduce some of the important classes of kernels which are widely used in statistics and machine learning. A good survey on the classes of kernels is \cite{genton2001classes}.


\subsubsection{Bounded Kernels}

\begin{definition}[Bounded Kernel]
A kernel function $k$ is bounded if:
\begin{align}
\sup_{\b{x}, \b{y} \in \mathcal{X}} k(\b{x}, \b{y}) < \infty, 
\end{align}
where $\mathcal{X}$ is the input space. Likewise, the kernel matrix $\b{K}$ is bounded if $\sup_{\b{x}, \b{y} \in \mathcal{X}} \b{K}(\b{x}, \b{y}) < \infty$.
\end{definition}

\subsubsection{Integrally Positive Definite Kernels}\label{section_integrally_positive_definite_kernels}

\begin{definition}[Integrally Positive Definite Kernel]
A kernel matrix $\b{K}$ is integrally positive definite ($\int$ p.d.) on $\Omega \times \Omega$ if:
\begin{align}
\int_\Omega \int_\Omega \b{K}(\b{x}, \b{y}) f(\b{x}) f(\b{y}) \geq 0, \quad \forall f \in L_2(\Omega).
\end{align}
A kernel matrix $\b{K}$ is integrally strictly positive definite ($\int$ s.p.d.) on $\Omega \times \Omega$ if:
\begin{align}
\int_\Omega \int_\Omega \b{K}(\b{x}, \b{y}) f(\b{x}) f(\b{y}) > 0, \quad \forall f \in L_2(\Omega).
\end{align}
\end{definition}

\subsubsection{Universal Kernels}\label{section_universal_kernels}


\begin{definition}[Universal Kernel {\citep[Definition 4]{steinwart2001influence}}, {\citep[Definition 2]{steinwart2002support}}]
Let $C(\mathcal{X})$ denote the space of all continuous functions on space $\mathcal{X}$. A continuous kernel $k$ on a compact metric space $\mathcal{X}$ is called universal if the RKHS $\mathcal{H}$, with kernel function $k$, is dense in $C(\mathcal{X})$. In other words, for every function $g \in C(\mathcal{X})$ and all $\epsilon > 0$, there exists a function $f \in \mathcal{H}$ such that $\|f - g\|_\infty \leq \epsilon$.
\end{definition}

\begin{remark}
We can approximate any function, including continuous functions and functions which can be approximated by continuous functions, using a universal kernel. 
\end{remark}

\begin{lemma}[{\citep[Corollary 10]{steinwart2001influence}}]
Consider a function $f : (-r, r) \rightarrow \mathbb{R}$ where $0 < r \leq \infty$ and $f \in C^\infty$ ($C^\infty$ denotes the differentiable space for all degrees of differentiation). Let $\mathcal{X} := \{\b{x} \in \mathbb{R}^d\, |\, \|\b{x}\|_2 < \sqrt{r}\}$. If the function $f$ can be expanded by Taylor expansion in $0$ as:
\begin{align}
f(\b{x}) = \sum_{j=0}^\infty a_j\, \b{x}^j, \quad \forall \b{x} \in (-r, r),
\end{align}
and $a_j > 0$ for all $j \geq 0$, then $k(\b{x}, \b{y}) = f(\langle \b{x}, \b{y} \rangle)$ is a universal kernel on every compact subset of $\mathcal{X}$. 
\end{lemma}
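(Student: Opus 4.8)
The plan is to reduce the universality claim to the classical Stone--Weierstrass theorem by showing that the RKHS $\mathcal{H}$ associated with $k$ contains every polynomial on $\mathcal{X}$. Fix a compact subset $S \subseteq \mathcal{X}$ and regard it as the compact metric space in the definition of universality. Since $S$ lies inside the open ball of radius $\sqrt{r}$, the quantity $R := \sup_{\b{x} \in S} \|\b{x}\|_2$ is attained and satisfies $R < \sqrt{r}$; hence by the Cauchy--Schwarz inequality $|\langle \b{x}, \b{y} \rangle| \leq R^2 < r$ for all $\b{x}, \b{y} \in S$. First I would use this to show that $k$ is well defined and continuous on $S \times S$: the Taylor series $f(\langle \b{x}, \b{y} \rangle) = \sum_{j=0}^\infty a_j \langle \b{x}, \b{y} \rangle^j$ has argument confined to $[-R^2, R^2] \subset (-r, r)$, so it converges absolutely and uniformly there, and a uniform limit of continuous functions is continuous.

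Next I would exhibit the feature map explicitly, both to confirm that $k$ is a Mercer kernel and to locate the polynomials inside $\mathcal{H}$. Expanding each power by the multinomial theorem gives
\begin{align*}
k(\b{x}, \b{y}) = \sum_{j=0}^\infty a_j \sum_{|\alpha| = j} \binom{j}{\alpha}\, \b{x}^\alpha\, \b{y}^\alpha,
\end{align*}
where $\alpha$ ranges over multi-indices and $\b{x}^\alpha := \prod_{\ell=1}^d x_\ell^{\alpha_\ell}$. Comparing with Eq. (\ref{equation_kernel_inner_product}), the feature map may be taken as the vector whose coordinates are $\sqrt{a_{|\alpha|}\binom{|\alpha|}{\alpha}}\, \b{x}^\alpha$, indexed by all multi-indices $\alpha$. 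Because $k$ is now written as an inner product of feature maps, it is symmetric and, by Lemma \ref{lemma_kernel_is_positive_semidefinite}, positive semi-definite, so it is a Mercer kernel in the sense of Definition \ref{definition_Mercer_kernel}. The crucial consequence of the hypothesis $a_j > 0$ for \emph{every} $j \geq 0$ is that every monomial $\b{x}^\alpha$ appears with a strictly positive coefficient; each monomial is therefore, up to a nonzero scalar, a single coordinate of the feature map and hence an element of $\mathcal{H}$. Taking finite linear combinations, every polynomial in $x_1, \dots, x_d$ belongs to $\mathcal{H}$.

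Finally I would invoke the Stone--Weierstrass theorem on $S$. The restrictions to $S$ of polynomials form a subalgebra of $C(S)$: it is closed under sums and scalar multiples, closed under products since a product of polynomials is a polynomial, contains the constants (here $a_0 > 0$ places the constant function in $\mathcal{H}$), and separates points, because for $\b{x} \neq \b{y}$ some coordinate function $x_\ell$ distinguishes them. Stone--Weierstrass then yields density of polynomials in $C(S)$ with respect to $\|\cdot\|_\infty$. Since $\mathcal{H}$ contains all polynomials, $\mathcal{H}$ is dense in $C(S)$; that is, for every $g \in C(S)$ and every $\epsilon > 0$ there is $h \in \mathcal{H}$ with $\|h - g\|_\infty \leq \epsilon$, which is exactly the defining property of a universal kernel. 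As $S$ was an arbitrary compact subset of $\mathcal{X}$, this proves the claim.

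The step I expect to be the main obstacle is the middle one: rigorously justifying that each individual monomial is a genuine element of $\mathcal{H}$ with finite RKHS norm, rather than merely a uniform limit of RKHS functions. This requires treating the feature map as an $\ell_2$-valued mapping and checking that selecting the single coordinate indexed by a fixed $\alpha$ yields an admissible weight vector of finite norm, so that the corresponding monomial is represented exactly. The positivity $a_j > 0$ is precisely what guarantees no monomial is absent from the feature space, and it is therefore the hypothesis that makes the inclusion of all polynomials, and hence universality, possible.
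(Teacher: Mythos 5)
Your proof is correct and follows the same route the paper points to: the paper itself gives no argument beyond citing Steinwart's Corollary 10 and remarking that the Stone--Weierstrass theorem is used, and your reconstruction (multinomial expansion of the feature map, positivity of every $a_j$ placing each monomial in $\mathcal{H}$, then Stone--Weierstrass applied to the polynomial algebra on the compact set $S$) is exactly that standard argument. The step you flag as the main obstacle is in fact fine as you describe it: since $a_{|\alpha|}\binom{|\alpha|}{\alpha} > 0$, the scaled coordinate vector $e_\alpha \big/ \sqrt{a_{|\alpha|}\binom{|\alpha|}{\alpha}}$ is an admissible $\ell_2$ weight vector, so each monomial is represented exactly in $\mathcal{H}$ with finite norm.
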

\begin{proof}
For proof, see {\citep[proof of Corollary 10]{steinwart2001influence}}. Note that the Stone-Weierstrass theorem \cite{de1959stone} is used for the proof of this lemma.
\end{proof}
An example for universal kernel is RBF kernel {\citep[Example 1]{steinwart2001influence}} because its Taylor series expansion is:
\begin{align*}
\exp(-\gamma r) \approx 1 - \gamma r + \frac{\gamma^2}{2} r^2 - \frac{\gamma^3}{6} r^3 + \dots,
\end{align*}
where $r := \|\b{x} - \b{y}\|_2^2$.
Considering Eq. (\ref{equation_kernel_inner_product}) and noticing that this Taylor series expansion has infinite number of terms, we see that the RKHS for RBF kernel is infinite dimensional because $\phi(\b{x})$, although cannot be calculated explicitely for this kernel, will have infinite dimensions. 
Another example for universal kernel is the SSIM kernel \cite{ghojogh2020theoretical}, denoted by $\b{K}_s$, whose Taylor series expansion is \cite{ghojogh2019image}:
\begin{align*}
\b{K}_s \approx -\frac{5}{16} - \frac{15}{16} r + \frac{5}{16} r^2 - \frac{1}{16} r^3 + \dots,
\end{align*}
where $r$ is the squared SSIM distance \cite{brunet2011mathematical} between images. 
Note that polynomial kernels are not universal. 
Universal kernels have been widely used for kernel SVM. 
More detailed discussion and proofs for use of universal kernels in kernel SVM can be found in \cite{steinwart2008support}.



\begin{lemma}[\cite{borgwardt2006integrating}, {\citep[Theorem 10]{song2008learning}}]
A kernel is universal if for arbitrary sets of distinct points, it induces strictly positive definite kernel matrices \cite{borgwardt2006integrating,song2008learning}. Conversely, if a kernel matrix can be written as $\b{K} = \b{K}' + \epsilon \b{I}$ where $\b{K}' \succeq \b{0}$, $\epsilon > 0$, and $\b{I}$ is the identity matrix, the kernel function corresponding to $\b{K}$ is universal \cite{pan2008transfer}.
\end{lemma}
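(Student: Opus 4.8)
The plan is to prove the two directions separately, reducing the second (the decomposition criterion) to the first (strict positive definiteness) after a one-line quadratic-form observation. For the first direction I would argue by contraposition using a duality argument. Suppose the RKHS $\mathcal{H}$ associated with $k$ is \emph{not} dense in $C(\mathcal{X})$. By the Hahn--Banach theorem there is a nonzero bounded linear functional on $C(\mathcal{X})$ vanishing on $\mathcal{H}$, and since $\mathcal{X}$ is a compact metric space, the Riesz representation theorem identifies this functional with a nonzero signed (regular Borel) measure $\mu$ such that $\int f\, d\mu = 0$ for every $f \in \mathcal{H}$.

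Next I would exploit that $k(\b{x}, .) \in \mathcal{H}$ for each fixed $\b{x}$ (Definition \ref{definition_RKHS}). Applying the annihilation property to $f = k(\b{x}, .)$ gives $\int k(\b{x}, \b{y})\, d\mu(\b{y}) = 0$ for all $\b{x}$, and integrating once more against $\mu$ yields
\begin{align*}
\int\!\!\int k(\b{x}, \b{y})\, d\mu(\b{x})\, d\mu(\b{y}) = 0.
\end{align*}
The goal is then to show this forces $\mu = 0$, contradicting its nonvanishing. This is where strict positive definiteness enters: I would approximate $\mu$ by finitely supported (atomic) measures $\sum_i \b{v}(i)\, \delta_{\b{x}_i}$ on distinct points, so that the double integral is approximated by quadratic forms $\b{v}^\top \b{K} \b{v}$ with $\b{K}$ the Gram matrix on those points. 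By hypothesis each such $\b{K}$ is strictly positive definite, so the approximating quadratic forms are strictly positive; a limiting argument using continuity of $k$ and compactness of $\mathcal{X}$ should then give that the double integral is strictly positive unless $\mu = 0$. The resulting contradiction establishes density, i.e. universality.

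For the converse direction I would simply observe that the decomposition $\b{K} = \b{K}' + \epsilon \b{I}$ with $\b{K}' \succeq \b{0}$ and $\epsilon > 0$ makes every kernel matrix strictly positive definite: for any nonzero $\b{v} \in \mathbb{R}^n$,
\begin{align*}
\b{v}^\top \b{K} \b{v} = \b{v}^\top \b{K}' \b{v} + \epsilon\, \b{v}^\top \b{v} \geq \epsilon \|\b{v}\|_2^2 > 0,
\end{align*}
where the inequality uses $\b{K}' \succeq \b{0}$ (recall Lemma \ref{lemma_kernel_is_positive_semidefinite}). Since this holds for arbitrary sets of distinct points, the first direction immediately yields universality.

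I expect the main obstacle to be the limiting step in the first direction: the strict positivity of the finite quadratic forms can degrade to a non-strict inequality in the limit, so converting ``strictly positive definite on every finite point set'' into the measure-theoretic conclusion that $\int\!\!\int k\, d\mu\, d\mu > 0$ for $\mu \neq 0$ requires care. The natural remedy is to pass through the notion of an integrally strictly positive definite kernel (Section \ref{section_integrally_positive_definite_kernels}) and to show that continuity of $k$ together with compactness of $\mathcal{X}$ promotes finite-set strictness to integral strictness; alternatively one restricts attention to a dense class of measures on which the strict inequality is stable.
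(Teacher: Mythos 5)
First, note that the paper does not actually prove this lemma: it is stated with citations to \cite{borgwardt2006integrating}, \cite{song2008learning}, and \cite{pan2008transfer}, and no proof follows, so your proposal can only be measured against the literature rather than against an in-paper argument. Your treatment of the second claim is correct and is exactly the intended reduction: for any nonzero $\b{v} \in \mathbb{R}^n$ one has $\b{v}^\top \b{K} \b{v} = \b{v}^\top \b{K}' \b{v} + \epsilon\, \b{v}^\top \b{v} \geq \epsilon \|\b{v}\|_2^2 > 0$, so every Gram matrix on distinct points is strictly positive definite and the second claim reduces to the first.

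The first direction is where the problem lies, and it sits exactly at the step you flagged. The Hahn--Banach/Riesz duality setup is the right machinery: non-density of $\mathcal{H}$ in $C(\mathcal{X})$ produces a nonzero finite signed Borel measure $\mu$ with $\int\!\!\int k(\b{x},\b{y})\, d\mu(\b{x})\, d\mu(\b{y}) = 0$, so what this argument shows is that universality (for bounded continuous kernels on compact $\mathcal{X}$) is equivalent to \emph{integral} strict positive definiteness over all finite signed Borel measures; this is the content of \cite{sriperumbudur2011universality}. But the passage from strict positivity of $\b{v}^\top \b{K} \b{v}$ on every finite set of distinct points to strict positivity of the double integral for every nonzero signed measure is not a removable technicality: finite-set strict positive definiteness is genuinely weaker than integral strict positive definiteness, and no limiting argument closes the gap. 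A M\"untz-type construction makes this concrete: the kernel $k(x,y) = \sum_{n \geq 0} 2^{-n} x^{n^2} y^{n^2}$ on $[0,1]$ is continuous and strictly positive definite on every finite set of distinct points (a nonzero exponential sum $t \mapsto \sum_j c_j x_j^{\,t}$ has only finitely many real zeros, so some term of the quadratic form is positive), yet its RKHS lies in the closed span of $\{x^{n^2}\}_{n \geq 0}$, which by M\"untz's theorem is not dense in $C([0,1])$ since $\sum_n n^{-2} < \infty$. So continuity and compactness do not promote finite-set strictness to integral strictness, and the contradiction you aim for cannot be reached. Your proposed remedy --- assuming integral strict positive definiteness in the sense of Section \ref{section_integrally_positive_definite_kernels} --- does repair the argument, but it proves a different statement with a strictly stronger hypothesis than the one in the lemma; as written, the first sentence of the lemma has to be read as a paraphrase of the cited sources (where the true direction is that universality \emph{implies} strict positive definiteness) rather than as something your duality-plus-approximation argument can establish.
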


\subsubsection{Stationary Kernels}



\begin{definition}[Stationary Kernel \cite{genton2001classes,noack2021advanced}]
A kernel $k$ is stationary if it is a positive definite function of the form:
\begin{align}\label{equation_stationary_kernel}
k(\b{x}, \b{y}) = k(\|\b{x} - \b{y}\|),
\end{align}
where $\|.\|$ is some norm defined on the input space. 
\end{definition}
An example for stationary kernel is the RBF kernel defined in Eq. (\ref{equation_RBF_kernel}) which has the form of Eq. (\ref{equation_stationary_kernel}). 
Stationary kernels are used for Gaussian processes \cite{noack2021advanced}. 

\subsubsection{Characteristic Kernels}

The characteristic kernels, which are widely used for distribution embedding in the Hilbert space, will be defined and explained in Section \ref{section_kernel_embedding_distributions}.
Examples for characteristic kernels are RBF and Laplacian kernels. Polynomial kernels. however, are not characteristic.
Note that the relation between universal kernels, characteristic kernels, and integrally strictly positive definite kernels has been studied in \cite{sriperumbudur2011universality}.

\section{Kernel Centering and Normalization}\label{section_kernel_centering_normalization}

\subsection{Kernel Centering}\label{section_kernel_centering}

In some cases, there is a need to center the pulled data in the feature space. For this, the kernel matrix should be centered in a way that the mean of pulled dataset becomes zero. Note that this will restrict the place of pulled points in the feature space further (see Fig. \ref{figure_centered_pulled_data}); however, because of different possible rotations of pulled points around origin, the exact positions of pulled points are still unknown. 

\begin{figure}[!t]
\centering
\includegraphics[width=3.4in]{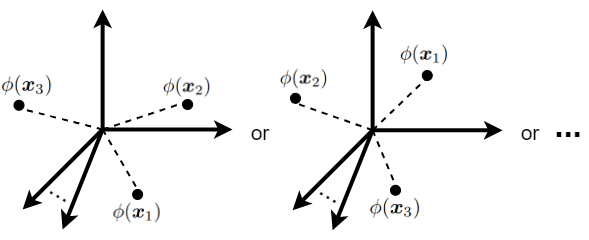}
\caption{Centered pulled data the feature space (RKHS). This happens after kernel centering where the mean of cloud of pulled data becomes zero in RKHS. Even by kernel centering, the explicit locations of pulled points are not necessarily known because of not knowing the rotation of pulled data in that space.}
\label{figure_centered_pulled_data}
\end{figure}

For kernel centering, one should follow the following theory, which is based on \cite{scholkopf1997kernel} and {\citep[Appendix A]{scholkopf1998nonlinear}}. An example of use of kernel centering in machine learning is kernel principal component analysis (see \cite{ghojogh2019unsupervised} for more details).

\subsubsection{Centering the Kernel of Training Data}

Assume we have some training data $\b{X} = [\b{x}_1, \dots, \b{x}_n] \in \mathbb{R}^{d \times n}$ and some out-of-sample data $\b{X}_t = [\b{x}_{t,1}, \dots, \b{x}_{t,n_t}] \in \mathbb{R}^{d \times n_t}$. 
Consider the kernel matrix for the training data $\mathbb{R}^{n \times n} \ni \b{K} := \b{\Phi}(\b{X})^\top \b{\Phi}(\b{X})$, whose $(i,j)$-th element is $\mathbb{R} \ni \b{K}(i,j) = \b{\phi}(\b{x}_i)^\top \b{\phi}(\b{x}_j)$. 
We want to center the pulled training data in the feature space:
\begin{align}\label{equation_appendix_centered_pulled_training}
\breve{\b{\phi}}(\b{x}_i) := \b{\phi}(\b{x}_i) - \frac{1}{n} \sum_{k=1}^n \b{\phi}(\b{x}_k).
\end{align}
If we center the pulled training data, the $(i,j)$-th element of kernel matrix becomes:
\begin{align}
&\breve{\b{K}}(i,j) := \breve{\b{\phi}}(\b{x}_i)^\top \breve{\b{\phi}}(\b{x}_j) \label{equation_centered_kernel_definition} \\
&\overset{(\ref{equation_appendix_centered_pulled_training})}{=}\! \big(\b{\phi}(\b{x}_i) - \frac{1}{n} \sum_{k_1=1}^n \b{\phi}(\b{x}_{k_1})\big)^\top \! \big(\b{\phi}(\b{x}_j) - \frac{1}{n} \sum_{k_2=1}^n \b{\phi}(\b{x}_{k_2})\big) \nonumber \\
&= \b{\phi}(\b{x}_i)^\top \b{\phi}(\b{x}_j) - \frac{1}{n} \sum_{k_1=1}^n \b{\phi}(\b{x}_{k_1})^\top \b{\phi}(\b{x}_j) \nonumber \\
&- \frac{1}{n} \sum_{k_2=1}^n \b{\phi}(\b{x}_i)^\top \b{\phi}(\b{x}_{k_2}) + \frac{1}{n^2}\! \sum_{k_1=1}^n  \sum_{k_2=1}^n \b{\phi}(\b{x}_{k_1})^\top \! \b{\phi}(\b{x}_{k_2}). \nonumber
\end{align}
Writing this in the matrix form gives:
\begin{align}
\mathbb{R}^{n \times n} \ni \breve{\b{K}} &= \b{K} - \frac{1}{n} \b{1}_{n \times n} \b{K} - \frac{1}{n} \b{K} \b{1}_{n \times n} \nonumber \\
&~~~~ + \frac{1}{n^2} \b{1}_{n \times n} \b{K} \b{1}_{n \times n} = \b{H} \b{K} \b{H}, \label{equation_appendix_doubleCentered_training_kernel}
\end{align}
where $\b{H}$ is the centering matrix (see Eq. (\ref{equation_centered_matrix})).
The Eq. (\ref{equation_appendix_doubleCentered_training_kernel}) is called the \textit{double-centered kernel}. 
This equation is the kernel matrix when the pulled training data in the feature space are centered. Also, double-centered kernel has zero row-wise and column-wise mean (so its row and column summations are zero). 
Therefore, after this kernel centering, we will have:
\begin{align}
&\frac{1}{n} \sum_{i=1}^n \breve{\b{\phi}}(\b{x}_i) = \b{0}, \\
&\sum_{i=1}^n \sum_{j=1}^n \breve{\b{K}}(i,j) = 0. \label{equation_appendix_doubleCentered_training_kernel_sum}
\end{align}

\subsubsection{Centering the Kernel between Training and Out-of-sample Data}

Now, consider the kernel matrix between the training data and the out-of-sample data $\mathbb{R}^{n \times n_t} \ni \b{K}_t := \b{\Phi}(\b{X})^\top \b{\Phi}(\b{X}_t)$.
whose $(i,j)$-th element is $\mathbb{R} \ni \b{K}_t(i,j) = \b{\phi}(\b{x}_{i})^\top \b{\phi}(\b{x}_{t,j})$.
We want to center the pulled training data in the feature space, i.e., Eq. (\ref{equation_appendix_centered_pulled_training}). Moreover, the out-of-sample data should be centered using the mean of training (and not out-of-sample) data:
\begin{align}\label{equation_appendix_centered_pulled_outOfSample}
\breve{\b{\phi}}(\b{x}_{t,i}) := \b{\phi}(\b{x}_{t,i}) - \frac{1}{n} \sum_{k=1}^n \b{\phi}(\b{x}_k).
\end{align}
If we center the pulled training and out-of-sample data, the $(i,j)$-th element of kernel matrix becomes:
\begin{align*}
&\breve{\b{K}}_t(i,j) := \breve{\b{\phi}}(\b{x}_i)^\top \breve{\b{\phi}}(\b{x}_{t,j}) \\
&\overset{(a)}{=} \! \big(\b{\phi}(\b{x}_i) - \frac{1}{n} \sum_{k_1=1}^n \b{\phi}(\b{x}_{k_1})\big)^\top \! \big(\b{\phi}(\b{x}_{t,j}) - \frac{1}{n} \sum_{k_2=1}^n \b{\phi}(\b{x}_{k_2})\big) \\
&= \b{\phi}(\b{x}_i)^\top \b{\phi}(\b{x}_{t,j}) - \frac{1}{n} \sum_{k_1=1}^n \b{\phi}(\b{x}_{k_1})^\top \b{\phi}(\b{x}_{t,j}) \\
& - \frac{1}{n} \sum_{k_2=1}^n \b{\phi}(\b{x}_i)^\top \b{\phi}(\b{x}_{k_2}) + \frac{1}{n^2} \! \sum_{k_1=1}^n  \sum_{k_2=1}^n \b{\phi}(\b{x}_{k_1})^\top \! \b{\phi}(\b{x}_{k_2}),
\end{align*}
where (a) is because of Eqs. (\ref{equation_appendix_centered_pulled_training}) and (\ref{equation_appendix_centered_pulled_outOfSample}).
Therefore, the double-centered kernel matrix over training and out-of-sample data is:
\begin{align}
\mathbb{R}^{n \times n_t} \ni \breve{\b{K}}_t &= \b{K}_t - \frac{1}{n} \b{1}_{n \times n} \b{K}_t - \frac{1}{n} \b{K} \b{1}_{n \times n_t} \nonumber \\
&~~~~ + \frac{1}{n^2} \b{1}_{n \times n} \b{K} \b{1}_{n \times n_t}, \label{equation_appendix_doubleCentered_outOfSample_kernel}
\end{align}
where $\mathbb{R}^{n \times n_t} \ni \b{1}_{n \times n_t} := \b{1}_n \b{1}_{n_t}^\top$ and $\mathbb{R}^{n_t} \ni \b{1}_{n_t} := [1, \dots, 1]^\top$.
The Eq. (\ref{equation_appendix_doubleCentered_outOfSample_kernel}) is the kernel matrix when the pulled training data in the feature space are centered and the pulled out-of-sample data are centered using the mean of pulled training data.

If we have one out-of-sample $\b{x}_t$, the Eq. (\ref{equation_appendix_doubleCentered_outOfSample_kernel}) becomes:
\begin{align}
\mathbb{R}^{n} \ni \breve{\b{k}}_t &= \b{k}_t - \frac{1}{n} \b{1}_{n \times n} \b{k}_t - \frac{1}{n} \b{K} \b{1}_{n} + \frac{1}{n^2} \b{1}_{n \times n} \b{K} \b{1}_{n}, \label{equation_appendix_doubleCentered_outOfSample_kernel_oneSample}
\end{align}
where:
\begin{align}
&\mathbb{R}^n \ni \b{k}_t = \b{k}_t(\b{X}, \b{x}_t) := \b{\Phi}(\b{X})^\top \b{\phi}(\b{x}_t) \label{equation_appendix_kernelVector_outOfSample} \\
&~~~~~~~~~~~~~~ =[\b{\phi}(\b{x}_1)^\top \b{\phi}(\b{x}_t), \dots, \b{\phi}(\b{x}_n)^\top \b{\phi}(\b{x}_t)]^\top, \nonumber \\
&\mathbb{R}^n \ni \breve{\b{k}}_t = \breve{\b{k}}_t(\b{X}, \b{x}_t) := \breve{\b{\Phi}}(\b{X})^\top \breve{\b{\phi}}(\b{x}_t), \label{equation_appendix_centered_kernelVector_outOfSample} \\
&~~~~~~~~~~~~~~ =[\breve{\b{\phi}}(\b{x}_1)^\top \breve{\b{\phi}}(\b{x}_t), \dots, \breve{\b{\phi}}(\b{x}_n)^\top \breve{\b{\phi}}(\b{x}_t)]^\top, \nonumber 
\end{align}
where $\breve{\b{\Phi}}(\b{X})$ and $\breve{\b{\phi}}(\b{x}_t)$ are according to Eqs. (\ref{equation_appendix_centered_pulled_training}) and (\ref{equation_appendix_centered_pulled_outOfSample}), respectively.

Note that Eq. (\ref{equation_appendix_doubleCentered_training_kernel}) or  (\ref{equation_appendix_doubleCentered_outOfSample_kernel}) can be restated as the following lemma.
\begin{lemma}[Kernel Centering \cite{bengio2003learning,bengio2003spectral}]\label{lemma_kernel_centering}
The pulled data to the feature space can be centered by kernel centering. The kernel matrix $\b{K}(\b{x},\b{y})$ is centered as:
\begin{align}
\breve{\b{K}}(\b{x},\b{y}) &= \big(\b{\phi}(\b{x}) - \mathbb{E}_x[\b{\phi}(\b{x})]\big)^\top \big(\b{\phi}(\b{x}) - \mathbb{E}_x[\b{\phi}(\b{x})]\big) \nonumber \\
&= \b{K}(\b{x}, \b{y}) - \mathbb{E}_x[\b{K}(\b{x}, \b{y})] - \mathbb{E}_y[\b{K}(\b{x}, \b{y})] \nonumber \\
&~~~ + \mathbb{E}_x[\mathbb{E}_y[\b{K}(\b{x}, \b{y})]]. \label{equation_kernel_centering_with_exptectation}
\end{align}
\end{lemma}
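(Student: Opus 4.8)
The plan is to mirror the finite-sample double-centering derivation (Eqs.~(\ref{equation_centered_kernel_definition})--(\ref{equation_appendix_doubleCentered_training_kernel})), replacing the empirical averages $\frac{1}{n}\sum_{k}$ by expectations $\mathbb{E}$. First I would define the centered feature map $\breve{\b{\phi}}(\b{x}) := \b{\phi}(\b{x}) - \mathbb{E}_x[\b{\phi}(\b{x})]$, which is the population-level analogue of Eq.~(\ref{equation_appendix_centered_pulled_training}), and write the centered kernel as the inner product $\breve{\b{K}}(\b{x},\b{y}) = \breve{\b{\phi}}(\b{x})^\top \breve{\b{\phi}}(\b{y})$, invoking Eq.~(\ref{equation_kernel_inner_product}) which identifies a kernel value with the inner product of feature maps.

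Next I would expand this product bilinearly into four terms. Using the bilinearity of the inner product together with the linearity of expectation, each cross term collapses: since $\b{\phi}(\b{x})$ is constant with respect to the $\b{y}$-expectation, $\b{\phi}(\b{x})^\top \mathbb{E}_y[\b{\phi}(\b{y})] = \mathbb{E}_y[\b{\phi}(\b{x})^\top \b{\phi}(\b{y})] = \mathbb{E}_y[\b{K}(\b{x},\b{y})]$, and symmetrically the term $\mathbb{E}_x[\b{\phi}(\b{x})]^\top \b{\phi}(\b{y})$ becomes $\mathbb{E}_x[\b{K}(\b{x},\b{y})]$. The remaining term $\mathbb{E}_x[\b{\phi}(\b{x})]^\top \mathbb{E}_y[\b{\phi}(\b{y})]$ becomes $\mathbb{E}_x[\mathbb{E}_y[\b{K}(\b{x},\b{y})]]$. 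Collecting the four pieces yields exactly Eq.~(\ref{equation_kernel_centering_with_exptectation}).

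The only subtle step is justifying the interchange of the inner product with the expectation; this requires that $\b{\phi}$ be integrable so that the mean element $\mathbb{E}_x[\b{\phi}(\b{x})]$ exists in the Hilbert space and the identity $\langle \b{\phi}(\b{x}), \mathbb{E}_y[\b{\phi}(\b{y})]\rangle_k = \mathbb{E}_y[\langle \b{\phi}(\b{x}), \b{\phi}(\b{y})\rangle_k]$ holds. Under the boundedness assumed for a Mercer kernel (recall Eq.~(\ref{equation_Mercer_theorem_sup_kernel})), this integrability is automatic, so no genuine difficulty arises. I would close by remarking that this is simply the population-level form of the finite double-centering $\b{H}\b{K}\b{H}$ already derived, with $\mathbb{E}$ in place of the empirical average encoded by the centering matrix $\b{H}$ of Eq.~(\ref{equation_appendix_doubleCentered_training_kernel}).
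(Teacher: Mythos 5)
Your proposal is correct and follows essentially the same route as the paper, which simply points back to the finite-sample double-centering derivation of Eqs.~(\ref{equation_appendix_centered_pulled_training})--(\ref{equation_appendix_doubleCentered_training_kernel}) and invokes the definition of expectation; you spell out the four-term bilinear expansion and the interchange of expectation with the inner product explicitly, and you even add the integrability remark the paper omits. Note also that your writing $\breve{\b{K}}(\b{x},\b{y}) = \breve{\b{\phi}}(\b{x})^\top \breve{\b{\phi}}(\b{y})$ quietly fixes what appears to be a typo in the lemma statement itself, where the second factor is written with $\b{x}$ instead of $\b{y}$.
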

\begin{proof}
The explained derivations for Eqs. (\ref{equation_appendix_centered_pulled_training}) and (\ref{equation_appendix_centered_pulled_outOfSample}) and definition of expectation complete the proof.
\end{proof}
Note that in Eq. (\ref{equation_kernel_centering_with_exptectation}), $\mathbb{E}_x[\b{K}(\b{x}, \b{y})]$, $\mathbb{E}_y[\b{K}(\b{x}, \b{y})]$, and $\mathbb{E}_x[\mathbb{E}_y[\b{K}(\b{x}, \b{y})]]$ are average of rows, average of columns, and total average of rows and columns of the kernel matrix, respectively.

\subsection{Kernel Normalization}

According to Eq. (\ref{equation_kernel_inner_product}), kernel value can be large if the pulled vectors to the feature map have large length. Hence, in practical computations and optimization, it is sometimes required to normalize the kernel matrix. 

\begin{lemma}[Cosine Normalization of Kernel \cite{rennie2005how,ah2010normalized}]
The kernel matrix $\b{K} \in \mathbb{R}^{n \times n}$ can be normalized as:
\begin{align}
\b{K}(i,j) \gets \frac{\b{K}(i,j)}{\sqrt{\b{K}(i,i) \b{K}(j,j)}}, \quad \forall i,j \in \{1, \dots, n\}.
\end{align}
\end{lemma}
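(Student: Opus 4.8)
The plan is to interpret the right-hand side of the normalization through the feature-map representation of the kernel established in Eq. (\ref{equation_kernel_inner_product}), and thereby recognize the normalized matrix as nothing more than the Gram matrix of the unit-normalized feature maps. First I would recall that, by Eq. (\ref{equation_kernel_inner_product}), every entry satisfies $\b{K}(i,j) = \b{\phi}(\b{x}_i)^\top \b{\phi}(\b{x}_j)$, so in particular each diagonal entry is $\b{K}(i,i) = \b{\phi}(\b{x}_i)^\top \b{\phi}(\b{x}_i) = \|\b{\phi}(\b{x}_i)\|_k^2$ and hence $\sqrt{\b{K}(i,i)} = \|\b{\phi}(\b{x}_i)\|_k$. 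The prescribed update can then be rewritten as
\begin{align*}
\frac{\b{K}(i,j)}{\sqrt{\b{K}(i,i)\, \b{K}(j,j)}} = \frac{\b{\phi}(\b{x}_i)^\top \b{\phi}(\b{x}_j)}{\|\b{\phi}(\b{x}_i)\|_k\, \|\b{\phi}(\b{x}_j)\|_k},
\end{align*}
which is exactly the cosine of the angle between the two feature maps, paralleling the cosine kernel of Eq. (\ref{equation_cosine_kernel}) but now computed in the feature space rather than the input space. This already explains the name ``cosine normalization'' and identifies the update with dividing each feature map by its own norm.

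Next I would make this identification explicit and confirm that the result remains a legitimate kernel. Define the unit-normalized feature map $\hat{\b{\phi}}(\b{x}_i) := \b{\phi}(\b{x}_i)/\|\b{\phi}(\b{x}_i)\|_k$, which satisfies $\|\hat{\b{\phi}}(\b{x}_i)\|_k = 1$, and write $\hat{\b{K}}(i,j) := \b{K}(i,j)/\sqrt{\b{K}(i,i)\, \b{K}(j,j)}$ for the normalized matrix. By the computation above, $\hat{\b{K}}(i,j) = \hat{\b{\phi}}(\b{x}_i)^\top \hat{\b{\phi}}(\b{x}_j)$, so $\hat{\b{K}}$ is precisely the Gram matrix of $\{\hat{\b{\phi}}(\b{x}_i)\}_{i=1}^n$. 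Consequently $\hat{\b{K}}$ is symmetric, and, repeating verbatim the argument in the proof of Lemma \ref{lemma_kernel_is_positive_semidefinite}, it is positive semi-definite since $\b{v}^\top \hat{\b{K}} \b{v} = \|\sum_{i=1}^n \b{v}(i)\, \hat{\b{\phi}}(\b{x}_i)\|_k^2 \geq 0$ for every $\b{v} \in \mathbb{R}^n$. By Definition \ref{definition_Mercer_kernel}, the cosine-normalized matrix is therefore again a valid Mercer kernel, which is what justifies performing the normalization. As a sanity check, the Cauchy--Schwarz inequality forces every entry of $\hat{\b{K}}$ into $[-1, 1]$ with all diagonal entries equal to $1$, matching the geometric meaning of a cosine.

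The one genuine subtlety, and the step I would treat as the main obstacle, is the well-definedness of the update: the denominator $\sqrt{\b{K}(i,i)\, \b{K}(j,j)}$ must be nonzero, i.e. we need $\b{\phi}(\b{x}_i) \neq \b{0}$ for every $i$. If some feature map vanished then $\b{K}(i,i) = 0$ and the corresponding row and column normalization would be undefined, since a zero vector has no well-defined direction. I would therefore state the non-degeneracy condition $\b{K}(i,i) > 0$ for all $i$ explicitly, noting that it holds automatically whenever the feature maps are nonzero and, in particular, for the RBF, Laplacian, and cosine kernels of Section \ref{section_well_known_kernels}, all of which have $k(\b{x}, \b{x}) = 1$. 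Apart from this caveat, the argument is a direct application of Cauchy--Schwarz together with the feature-map representation already proved, so no additional machinery is needed.
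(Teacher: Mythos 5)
Your proposal is correct and follows essentially the same route as the paper: both identify the normalized entries as the cosine similarity of the feature maps via the representation $\b{K}(i,j) = \b{\phi}(\b{x}_i)^\top \b{\phi}(\b{x}_j)$ (the paper phrases this as kernelizing the input-space cosine formula, you phrase it as reading the feature-space cosine off the normalization, which is the same identification run in the opposite direction). Your additional verification that the normalized matrix is still a Mercer kernel and your explicit non-degeneracy condition $\b{K}(i,i) > 0$ are valid refinements that the paper's proof omits.
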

\begin{proof}
Cosine normalizes points onto a unit hyper-sphere and then computes the similarity of points using inner product. Cosine is computed by Eq. (\ref{equation_cosine_kernel}) and according to the relation of norm and inner product, it is:
\begin{align*}
\cos(\b{x}_i, \b{x}_j) &= \frac{\b{x}_i^\top \b{x}_j}{\|\b{x}_i\|_2\, \|\b{x}_j\|_2} = \frac{\b{x}_i^\top \b{x}_j}{\sqrt{\|\b{x}_i\|_2^2\, \|\b{x}_j\|_2^2}} \\
&= \frac{\b{x}_i^\top \b{x}_j}{\sqrt{\b{x}_i^\top \b{x}_i\, \b{x}_j^\top \b{x}_j}}.
\end{align*}
According to Remark \ref{remark_kernel_is_similarity}, kernel is also a measure of similarity. 
Using kernel trick, Eq. (\ref{equation_kernel_trick}), the cosine similarity (which is already normalized) is kernelized as:
\begin{align*}
\b{K}(i,j) &= \frac{\b{\phi}(\b{x}_i)^\top \b{\phi}(\b{x}_j)}{\sqrt{ \b{\phi}(\b{x}_i)^\top \b{\phi}(\b{x}_i)\, \b{\phi}(\b{x}_j)^\top \b{\phi}(\b{x}_j) }} \\
&\overset{(\ref{equation_kernel_trick})}{=} \frac{\b{K}(i,j)}{\sqrt{\b{K}(i,i) \b{K}(j,j)}}. 
\end{align*}
Q.E.D.
\end{proof}

\begin{definition}[generalized mean with exponent $t$ \cite{ah2010normalized}]
The generalized mean with exponent $t$ as:
\begin{align}\label{equation_generalized_mean}
m_t(a_1, \dots, a_n) := \Big(\frac{1}{p} \sum_{i=1}^p a_i^t\Big)^{\frac{1}{t}}.
\end{align}
The generalized mean becomes the harmonic, geometric, and arithmetic mean for $t=-1$, $t \rightarrow 0$, and $t=1$, respectively.
\end{definition}

\begin{definition}[Generalized Kernel Normalization of order $t$ \cite{ah2010normalized}]
The generalized kernel normalization of order $t>0$ normalizes the kernel $\b{K} \in \mathbb{R}^{n \times n}$ as:
\begin{align}
\b{K}(i,j) \gets \frac{\b{K}(i,j)}{m_t\big(\b{K}(i,i), \b{K}(j,j)\big)}, \quad \forall i,j \in \{1, \dots, n\}.
\end{align}
\end{definition}

Both cosine normalization and generalized normalization make the kernel of every point with itself one. In other words, after normalization, we have:
\begin{align}
k(\b{x}, \b{x}) = 1, \quad \forall \b{x} \in \mathcal{X}.
\end{align}
As the most similar point to a point is itself, the values of a normalized kernel will be less than or equal to one. In other words, after normalization, we have:
\begin{align}
k(\b{x}_i, \b{x}_j) \leq 1, \quad \forall i,j \in \{1, \dots, n\}.
\end{align}
This helps the values not explode to large values in algorithms, especially in the iterative algorithms (e.g., algorithms which use gradient descent for optimization).

\section{Eigenfunctions}\label{section_eigenfunctions}

\subsection{Inner Product in Hilbert Space}

\begin{lemma}[Inner Product in Hilbert Space]
If the domain of functions in a Hilbert space $\mathcal{H}$ is $[a,b]$, the inner product of two functions in the Hilbert space is calculated as:
\begin{align}\label{equation_Hilbert_inner_product}
\langle f(x), g(x)  \rangle_\mathcal{H} = \int_a^b f(x)\, g^*(x) dx \!\overset{(\ref{equation_symmetric_inner_product})}{=}\!\! \int_a^b f^*(x)\, g(x) dx,
\end{align}
where $g^*$ is the complex conjugate of function $g$. If functions are real, the inner product is simplified to $\int_a^b f(x)\, g(x) dx$.
\end{lemma}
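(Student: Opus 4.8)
The plan is to recognize that the claimed formula is nothing but the canonical $L_2$ inner product on a Hilbert space of functions on $[a,b]$, and to justify it by exhibiting the candidate (sesquilinear) form and checking that it satisfies the inner-product axioms while inducing the $L_2$ norm from the definition of $L_p$ space. Since Definition \ref{definition_Hilbert_space} tells us a Hilbert space is a complete inner-product space, once the integral form is verified to be a genuine inner product whose induced norm is $\|f\|_2$, the identification of $\langle f, g \rangle_{\mathcal{H}}$ with $\int_a^b f(x)\, g^*(x)\, dx$ follows.

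First I would motivate the integral as the continuum limit of the ordinary Euclidean dot product. A function on $[a,b]$ can be viewed as an infinite-dimensional vector indexed by $x$; sampling at points $x_1, \dots, x_m$ and forming the finite-dimensional inner product $\sum_{i=1}^m f(x_i)\, g^*(x_i)\, \Delta x$ with spacing $\Delta x$, then letting $\Delta x \to 0$, yields the Riemann integral $\int_a^b f(x)\, g^*(x)\, dx$. Next I would verify the three inner-product axioms for this form: conjugate symmetry, linearity in the first argument, and positive definiteness. Positive definiteness, $\langle f, f \rangle \geq 0$ with equality iff $f = 0$, is exactly the analogue of the Zero Kernel lemma and is immediate from $\int_a^b |f(x)|^2 dx \geq 0$. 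The induced norm is then $\|f\|_{\mathcal{H}} = (\int_a^b |f(x)|^2 dx)^{1/2} = \|f\|_2$, matching the $L_2$ norm, and completeness of $L_2(a,b)$ certifies that the space is a Hilbert space.

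The second equality and the real-valued simplification are then read off quickly, but this is where the one genuine subtlety lies. The equality $\int_a^b f(x)\, g^*(x)\, dx = \int_a^b f^*(x)\, g(x)\, dx$ is invoked via the symmetry property of Eq. (\ref{equation_symmetric_inner_product}). The hard part to state carefully is that for a complex inner product the correct axiom is \emph{conjugate} (Hermitian) symmetry, $\langle f, g \rangle = \overline{\langle g, f \rangle}$, so the two integrals are in general complex conjugates of one another and coincide exactly when the inner product is real-valued. The plain symmetry $\langle f, g \rangle_k = \langle g, f \rangle_k$ of Eq. (\ref{equation_symmetric_inner_product}) therefore holds verbatim in the real case, which is the setting of interest here: for real-valued functions $g^* = g$, both integrals collapse to $\int_a^b f(x)\, g(x)\, dx$, giving the stated simplification. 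I would flag this conjugation point explicitly rather than let the two displayed integrals appear unconditionally equal.
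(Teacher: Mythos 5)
Your core argument—discretizing $[a,b]$ with step $\Delta x$, forming the finite-dimensional inner product of the sampled vectors, and recognizing the resulting Riemann sum as converging to the integral—is exactly the paper's proof, so on that front you match it. What you add beyond the paper is (i) an explicit verification that the integral form satisfies the inner-product axioms and induces the $L_2$ norm, with completeness of $L_2(a,b)$ certifying the Hilbert-space structure, and (ii) a correct observation that the paper glosses over: the two displayed integrals $\int_a^b f(x)\,g^*(x)\,dx$ and $\int_a^b f^*(x)\,g(x)\,dx$ are in general complex \emph{conjugates} of one another, so the equality asserted via the plain symmetry of Eq.~(\ref{equation_symmetric_inner_product}) only holds unconditionally in the real case (or when the inner product happens to be real-valued); the correct axiom for complex scalars is Hermitian symmetry $\langle f,g\rangle = \overline{\langle g,f\rangle}$. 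The paper's proof does not address this and simply cites Eq.~(\ref{equation_symmetric_inner_product}), which was proved in the real setting. Your version is therefore a strict refinement of the paper's argument rather than a different route, and the conjugation caveat is a genuine improvement worth keeping.
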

\begin{proof}
If we discretize the domain $[a,b]$, for example by sampling, with step $\Delta x$, the function values become vectors as $\b{f}(x) = [f(x_1), f(x_2), \dots, f(x_n)]^\top$ and $\b{g}(x) = [g(x_1), g(x_2), \dots, g(x_n)]^\top$. According to the inner product of two vectors, we have:
\begin{align*}
\langle \b{f}(x), \b{g}(x)  \rangle = \b{g}^H \b{f} = \sum_{i=1}^n f(x_i) g(x_i),
\end{align*}
where $g^H$ denotes the conjugate transpose of $g$ (it is transpose if functions are real). Multiplying the sides of this equation by the setp $\Delta x$ gives:
\begin{align*}
\langle \b{f}(x), \b{g}(x)  \rangle \Delta x = \b{g}^H \b{f} = \sum_{i=1}^n f(x_i) g(x_i) \Delta x,
\end{align*}
which is a Riemann sum. This is the Riemann approximation of the Eq. (\ref{equation_Hilbert_inner_product}). 
This approximation gets more accurate by $\Delta x \rightarrow 0$ or $n \rightarrow \infty$.
Hence, that equation is a valid inner product in the Hilbert space. Q.E.D.
\end{proof}

\begin{remark}[Interpretation of Inner Product of Functions in Hilbert Space]
The inner product of two functions, i.e. Eq. (\ref{equation_Hilbert_inner_product}), measures how similar two functions are. The more similar they are in their domain $[a,b]$, the larger inner product they have. 
Note that this similarity is more about the pattern (or changes) of functions and not the exact value of functions. If the pattern of functions is very similar, they will have a large inner product. 
\end{remark}

\begin{corollary}[Weighted Inner Product in Hilbert Space \cite{williams2000effect}, {\citep[Section 2]{bengio2003spectral}}]
The Eq. (\ref{equation_Hilbert_inner_product}) is the inner product with uniform weighting. With density function $p(x)$, one can weight the inner product in the Hilbert space as (assuming the functions are real):
\begin{align}\label{equation_Hilbert_inner_product_weighted}
\langle f(x), g(x)  \rangle_\mathcal{H} = \int_a^b f(x)\, g(x)\, p(x)\, dx.
\end{align}
\end{corollary}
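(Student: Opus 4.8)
The plan is to piggyback on the discretization argument from the preceding lemma on the inner product in Hilbert space, modifying it so that each sampled point carries a weight dictated by the density $p(x)$. Because the statement is a corollary, I would not reprove the Riemann-sum limit from scratch but instead show precisely how the nonuniform weights enter, and then confirm that the resulting expression is still a legitimate inner product.

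First I would discretize the domain $[a,b]$ by sampling at points $x_1, \dots, x_n$ with step $\Delta x$, exactly as in the lemma, so that $f$ and $g$ become the vectors $\b{f} = [f(x_1), \dots, f(x_n)]^\top$ and $\b{g} = [g(x_1), \dots, g(x_n)]^\top$. Instead of the uniform weighting that produced the Riemann sum $\sum_{i=1}^n f(x_i)\, g(x_i)\, \Delta x$, I would attach to each sample the weight $p(x_i)$, forming the weighted Riemann sum $\sum_{i=1}^n f(x_i)\, g(x_i)\, p(x_i)\, \Delta x$. Taking $\Delta x \to 0$ (equivalently $n \to \infty$) recovers the weighted integral of Eq. (\ref{equation_Hilbert_inner_product_weighted}); setting $p(x) \equiv 1$ collapses it back to Eq. (\ref{equation_Hilbert_inner_product}), which justifies the claim that the unweighted inner product is the special case of uniform weighting.

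Next I would verify that Eq. (\ref{equation_Hilbert_inner_product_weighted}) genuinely defines an inner product, i.e., that it is symmetric, linear in each argument, and positive definite. Symmetry and bilinearity follow immediately from the symmetry of pointwise multiplication $f(x)g(x) = g(x)f(x)$ and the linearity of the integral, so these cost only a line each. Nonnegativity, $\langle f, f \rangle_\mathcal{H} = \int_a^b f(x)^2\, p(x)\, dx \geq 0$, holds because $p$ is a density, so $p(x) \geq 0$, and the integrand $f(x)^2\, p(x)$ is therefore nonnegative.

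The hard part will be the definiteness condition, namely the implication $\langle f, f \rangle_\mathcal{H} = 0 \Rightarrow f = 0$. This can fail if $p$ vanishes on a set of positive measure, since a nonzero $f$ supported there would have zero weighted norm and the form would only be positive semi-definite. I would therefore flag that one needs $p(x) > 0$ almost everywhere on $[a,b]$ for Eq. (\ref{equation_Hilbert_inner_product_weighted}) to be a true, nondegenerate inner product; this is the single point deserving genuine care, the remaining axioms being inherited directly from the unweighted case.
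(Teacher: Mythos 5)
Your proposal is correct, and it is essentially the argument the paper intends: the paper gives no explicit proof for this corollary at all, simply asserting it as a weighted variant of the preceding lemma, whose own proof is exactly the Riemann-sum discretization you reuse. Attaching the weight $p(x_i)$ to each sample and passing to the limit $\Delta x \to 0$ is the natural extension of that argument, and recovering Eq. (\ref{equation_Hilbert_inner_product}) by setting $p \equiv 1$ matches the corollary's framing of the unweighted case as uniform weighting. What you add beyond the paper is the verification that the weighted form still satisfies the inner-product axioms, and in particular the observation that definiteness requires $p(x) > 0$ almost everywhere on $[a,b]$ --- otherwise one only obtains a positive semi-definite bilinear form and must pass to the quotient by the null functions (or restrict to the support of $p$) to get a genuine Hilbert space. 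The paper is silent on this point, and the cited sources (Williams--Seeger, Bengio et al.) implicitly assume a density that is positive where it matters, so your caveat is a legitimate and worthwhile sharpening rather than a deviation.
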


\subsection{Eigenfunctions}

Recall eigenvalue problem for a matrix $\b{A}$ \cite{ghojogh2019eigenvalue}:
\begin{align}
\b{A}\, \b{\phi}_i = \lambda_i\, \b{\phi}_i, \quad \forall i \in \{1, \dots, d\},
\end{align}
where $\b{\phi}_i$ and $\lambda_i$ are the $i$-th eigenvector and eigenvalue of $\b{A}$, respectively. 
In the following, we introduce the Eigenfunction problem which has a similar form but for an operator rather than a matrix. 

\begin{definition}[Eigenfunction {\citep[Chapter 11.2]{kusse2006mathematical}}]\label{definition_eigenfunction}
Consider a linear operator $O$ which can be applied on a function $f$. If applying this operator on the function results in a multiplication of function to a constant:
\begin{align}\label{equation_eigenfunction}
O f = \lambda f,
\end{align}
then the function $f$ is an eigenfunction for the operator $O$ and the constant $\lambda$ is the corresponding eigenvalue.
Note that the form of eigenfunction problem is:
\begin{align}
\text{Operator }( \text{function }f ) = \text{constant } \times \text{function }f.
\end{align}
\end{definition}
Some examples of operator are derivative, kernel function, etc. For example, $e^{\lambda x}$ is an eigenfunction of derivative because $\frac{d}{dx} e^{\lambda x} = \lambda e^{\lambda x}$.
Note that eigenfunctions have application in many fields of science including machine learning \cite{bengio2003spectral} and quantum mechanics \cite{reed1972methods}.

Recall that in eigenvalue problem, the eigenvectors show the most important or informative directions of matrix and the corresponding eigenvalue shows the amount of importance \cite{ghojogh2019eigenvalue}. Likewise, in eigenfunction problem of an operator, the eigenfunction is the most important function of the operator and the corresponding eigenvalue shows the amount of this importance. 
This connection between eigenfunction and eigenvalue problems is proved in the following theorem. 

\begin{theorem}[Connection of Eigenfunction and Eigenvalue Problems]\label{theorem_connection_eigenfunction_eigenvalue_problems}
If we assume that the operator and the function are a matrix and a vector, eigenfunction problem is converted to an eigenvalue problem where the vector is the eigenvector of the matrix. 
\end{theorem}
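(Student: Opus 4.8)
The plan is to reuse the very discretization device already employed in the proof of the inner-product lemma, Eq.~(\ref{equation_Hilbert_inner_product}), and apply it directly to the eigenfunction equation~(\ref{equation_eigenfunction}). The natural concrete choice for the abstract operator $O$ is the Hilbert--Schmidt integral operator $T_k$ of Eq.~(\ref{equation_Mercer_theorem_T_operator}), so that the eigenfunction problem is exactly Eq.~(\ref{equation_Mercer_theorem_eigenfunction_decomposition}), namely $\int_a^b k(\b{x}, \b{y})\, \psi(\b{y})\, d\b{y} = \lambda\, \psi(\b{x})$. I would first sample the domain $[a,b]$ at $n$ points $x_1, \dots, x_n$ with uniform step $\Delta x$, which turns the eigenfunction $\psi$ into the vector $\b{\psi} := [\psi(x_1), \dots, \psi(x_n)]^\top$ and the kernel into the Gram matrix $\b{K}$ of Eq.~(\ref{equation_Gram_matrix}) with entries $\b{K}(i,j) = k(x_i, x_j)$. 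This realizes the hypothesis of the theorem, in which ``operator'' becomes a matrix and ``function'' becomes a vector.

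Next I would replace the defining integral by its Riemann sum, exactly as in the inner-product proof. Evaluating Eq.~(\ref{equation_Mercer_theorem_eigenfunction_decomposition}) at each sample point $x_i$ gives
\begin{align*}
\sum_{j=1}^n k(x_i, x_j)\, \psi(x_j)\, \Delta x = \lambda\, \psi(x_i), \quad \forall i \in \{1, \dots, n\},
\end{align*}
which is precisely the $i$-th row of the matrix identity $\Delta x\, \b{K}\, \b{\psi} = \lambda\, \b{\psi}$. Dividing through by the scalar $\Delta x$ then yields
\begin{align*}
\b{K}\, \b{\psi} = \frac{\lambda}{\Delta x}\, \b{\psi},
\end{align*}
which is an ordinary matrix eigenvalue problem of the form $\b{A}\, \b{\phi}_i = \lambda_i\, \b{\phi}_i$ with $\b{A} = \b{K}$, eigenvector $\b{\phi} = \b{\psi}$, and eigenvalue $\lambda / \Delta x$. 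This is the claimed correspondence: the operator has become a matrix, the eigenfunction has become its eigenvector, and the eigenvalue is recovered up to the discretization scale. I would close by noting that the correspondence sharpens as $\Delta x \rightarrow 0$ (equivalently $n \rightarrow \infty$), in which limit the Riemann sum returns to the integral and the discrete eigenpairs approach the continuous eigenfunctions and eigenvalues of $T_k$ guaranteed by Mercer's theorem, Theorem~\ref{theorem_Mercer}.

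I expect the only delicate point to be the bookkeeping of the $\Delta x$ factor: the raw discretization attaches a spacing-dependent constant to the eigenvalue, so one must either track this scaling explicitly or absorb it into a redefinition of the eigenvalue, or equivalently fold $\Delta x$ into the quadrature weights of a weighted inner product as in Eq.~(\ref{equation_Hilbert_inner_product_weighted}). Everything else is routine, since the structural content -- that the eigenfunction and eigenvalue problems coincide once the operator and function are discretized into a matrix and a vector -- is immediate from the Riemann-sum identity above.
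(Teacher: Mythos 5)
Your argument is correct in spirit but it is a genuinely different route from the paper's, and it proves a narrower statement. The paper's proof works with an arbitrary linear operator $O$ on a function space with an orthonormal basis $\{\b{e}_j\}$: the function becomes the vector $\b{\alpha}$ of its \emph{basis coefficients}, the operator becomes the matrix $\b{A}(i,j) = \langle \b{e}_i, O\b{e}_j\rangle$ of Eq.~(\ref{equation_proof_eigenfunction_A}), and the identity $\b{A}\b{\alpha} = \lambda\b{\alpha}$ is \emph{exact}. Your proof instead specializes $O$ to the Hilbert--Schmidt integral operator of Eq.~(\ref{equation_Mercer_theorem_T_operator}) and discretizes the domain, so the function becomes the vector of its \emph{sampled values} and the operator becomes the Gram matrix $\b{K}$; the resulting relation $\b{K}\b{\psi} = (\lambda/\Delta x)\,\b{\psi}$ is only a Riemann-sum approximation, exact in the limit $\Delta x \to 0$. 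What your approach buys is concreteness and a direct bridge to material elsewhere in the paper: with $\Delta x$ playing the role of $1/n$, your scaled eigenvalue $\lambda/\Delta x$ is precisely the $\delta_k = n\lambda_k$ relation of Eq.~(\ref{equation_connection_eigenvalues_for_kernel_operator}), and your Riemann-sum identity is essentially the Nystr{\"o}m approximation of Lemma~\ref{lemma_Nystrom_approx_eigenfunctions}. What it loses is generality: the theorem is stated for any linear operator (the paper's own example is the derivative operator acting on $e^{\lambda x}$), and your sampling construction has no meaning for operators that are not given by an integral kernel, whereas the paper's basis-coefficient construction covers them all. If you keep your route, you should state explicitly that you are proving the claim for integral operators and that the matrix--vector correspondence is asymptotic rather than exact; alternatively, note that your quadrature weights $\Delta x$ can be folded into the weighted inner product of Eq.~(\ref{equation_Hilbert_inner_product_weighted}), which is exactly how the paper's Section~\ref{section_eigenfunctions_for_spectral_embedding} reconciles the two viewpoints.
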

\begin{proof}
Consider any function space such as a Hilbert space. Let $\{\b{e}_j\}_{j=1}^n$ be the bases (basis functions) of this function space where $n$ may be infinite. The function $f$ in this space can be represented as a linear combination bases:
\begin{align}\label{equation_proof_eigenfunction_f}
f(\b{x}) = \sum_{j=1}^n \alpha_j\, \b{e}_j(\b{x}).
\end{align}
An example of this linear combination is Eq. (\ref{equation_RKHS}) in RKHS where the bases are kernels. 
Consider the operator $O$ which can be applied on the functions in this function space.
Applying this operator on Eq. (\ref{equation_proof_eigenfunction_f}) gives:
\begin{align}\label{equation_proof_eigenfunction_O_f_1}
O f(\b{x}) = O\sum_{j=1}^n \alpha_j\, \b{e}_j(\b{x}) \overset{(a)}{=} \sum_{j=1}^n \alpha_j\, O\b{e}_j(\b{x}),
\end{align}
where $(a)$ is because the operator $O$ is a linear operator according to Definition \ref{definition_eigenfunction}.
Also, we have: 
\begin{align}\label{equation_proof_eigenfunction_lambda_f}
O f(\b{x}) \overset{(\ref{equation_eigenfunction})}{=} \lambda f(\b{x}) \overset{(a)}{=} \sum_{j=1}^n \lambda\, \alpha_j\, \b{e}_j(\b{x}),
\end{align}
where $(a)$ is because $\lambda$ is a scalar. 

On the other hand, the output function from applying the operator on a function can also be written as a linear combination of the bases:
\begin{align}\label{equation_proof_eigenfunction_O_f_2}
O f(\b{x}) = \sum_{j=1}^n \beta_j\, \b{e}_j(\b{x}).
\end{align}
From Eqs. (\ref{equation_proof_eigenfunction_O_f_1}) and (\ref{equation_proof_eigenfunction_O_f_2}), we have:
\begin{align}\label{equation_proof_eigenfunction_O_f_3}
\sum_{j=1}^n \alpha_j\, O\b{e}_j(\b{x}) = \sum_{j=1}^n \beta_j\, \b{e}_j(\b{x}).
\end{align}

In parentheses, consider an $n \times n$ matrix $\b{A}$ whose $(i,j)$-th element is the inner product of $\b{e}_i$ and $O \b{e}_j$:
\begin{align}\label{equation_proof_eigenfunction_A}
\b{A}(i,j) := \langle \b{e}_i, O\b{e}_j \rangle_k \overset{(\ref{equation_Hilbert_inner_product})}{=} \int \b{e}_i^*(\b{x})\, O\b{e}_j(\b{x}) d\b{x},
\end{align}
where integral is over the domain of functions in the function space. 

Using Eq. (\ref{equation_Hilbert_inner_product}), we take the inner product of sides of Eq. (\ref{equation_proof_eigenfunction_O_f_3}) with an arbitrary basis function $\b{e}_i$:
\begin{align*}
\sum_{j=1}^n \alpha_j\, \int \b{e}_i^*(\b{x})\, O\b{e}_j(\b{x})\, d\b{x} = \sum_{j=1}^n \beta_j\, \int \b{e}_i^*(\b{x})\, \b{e}_j(\b{x})\, d\b{x}.
\end{align*}
According to Eq. (\ref{equation_proof_eigenfunction_A}), this equation is simplified to:
\begin{align}\label{equation_proof_eigenfunction_sum_alpha_A_beta}
\sum_{j=1}^n \alpha_j\, \b{A}(i,j) &= \sum_{j=1}^n \beta_j\, \int \b{e}_i^*(\b{x})\, \b{e}_j(\b{x})\, d\b{x} \overset{(a)}{=} \beta_i,
\end{align}
which is true for $\forall i \in \{1, \dots n\}$ and $(a)$ is because the bases are orthonormal, so:
\begin{align*}
\langle \b{e}_i, \b{e}_j \rangle_k \overset{(\ref{equation_Hilbert_inner_product})}{=}
\int \b{e}_i^*(\b{x})\, \b{e}_j(\b{x})\, d\b{x} =
\left\{
    \begin{array}{ll}
        1 & \mbox{if } i = j, \\
        0 & \mbox{Otherwise.}
    \end{array}
\right.
\end{align*}
The Eq. (\ref{equation_proof_eigenfunction_sum_alpha_A_beta}) can be written in matrix form:
\begin{align}\label{equation_proof_eigenfunction_A_alpha_equals_beta}
\b{A} \b{\alpha} = \b{\beta},
\end{align}
where $\b{\alpha} := [\alpha_1, \dots, \alpha_n]^\top$ and $\b{\beta} := [\beta_1, \dots, \beta_n]^\top$.

From Eqs. (\ref{equation_proof_eigenfunction_lambda_f}) and (\ref{equation_proof_eigenfunction_O_f_2}), we have:
\begin{align}\label{equation_proof_eigenfunction_lambda_alpha_equals_beta}
\sum_{j=1}^n \lambda\, \alpha_j\, \b{e}_j(\b{x}) = \sum_{j=1}^n \beta_j\, \b{e}_j(\b{x}) \implies \lambda\, \b{\alpha} = \b{\beta}.
\end{align}
Comparing Eqs. (\ref{equation_proof_eigenfunction_A_alpha_equals_beta}) and (\ref{equation_proof_eigenfunction_lambda_alpha_equals_beta}) shows:
\begin{align*}
\b{A} \b{\alpha} = \lambda\, \b{\alpha},
\end{align*}
which is an eigenvalue problem for matrix $\b{A}$ with eigenvector $\b{\alpha}$ and eigenvalue $\lambda$ \cite{ghojogh2019eigenvalue}. 
Note that, according to Eq. (\ref{equation_proof_eigenfunction_f}), the information of function $f$ is in the coefficients $\alpha_j$'s of the basis functions of space. Therefore, the function is converted to the eigenvector (vector of coefficients) and the operator $O$ is converted to the matrix $\b{A}$. Q.E.D.
\end{proof}

\subsection{Use of Eigenfunctions for Spectral Embedding}\label{section_eigenfunctions_for_spectral_embedding}

Consider a Hilbert space $\mathcal{H}$ of functions with the inner product defined by Eq. (\ref{equation_Hilbert_inner_product_weighted}). Let the data in the input space be $\mathcal{X} = \{\b{x}_i \in \mathbb{R}^d\}_{i=1}^n$.
In this space, we can consider an operator for the kernel function $K_p$ as \cite{williams2000effect}, {\citep[Section 3]{bengio2003out}}:
\begin{align}\label{equation_K_operator_integral}
(K_p f)(\b{x}) := \int k(\b{x},\b{y})\, f(\b{y})\, p(\b{y})\, d\b{y},
\end{align}
where $f \in \mathcal{H}$ and the density function $p(\b{y})$ can be approximated empirically. 
A discrete approximation of this operator is \cite{williams2000effect}:
\begin{align}\label{equation_discrete_kernel_operator}
(K_{p,n} f)(\b{x}) := \frac{1}{n} \sum_{i=1}^n k(\b{x},\b{x}_i)\, f(\b{x}_i),
\end{align}
which converges to Eq. (\ref{equation_K_operator_integral}) if $n \rightarrow \infty$. Note that this equation is also mentioned in {\citep[Section 2]{bengio2003spectral}}, {\citep[Section 4]{bengio2004learning}}, {\citep[Section 3.2]{bengio2006spectral}}.

\begin{lemma}[Relation of Eigenvalues of Eigenvalue Problem and Eigenfunction Problem for Kernel {\citep[Proposition 1]{bengio2003out}}, {\citep[Theorem 1]{bengio2003spectral}}, {\citep[Section 4]{bengio2004learning}}]
Assume $\lambda_k$ denotes the $k$-th eigenvalue for eigenfunction decomposition of the operator $K_p$ and $\delta_k$ denotes the $k$-th eigenvalue for eigenvalue problem of the matrix $\b{K} \in \mathbb{R}^{n \times n}$. We have:
\begin{align}\label{equation_connection_eigenvalues_for_kernel_operator}
\delta_k = n\, \lambda_k.
\end{align}
\end{lemma}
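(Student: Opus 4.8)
The plan is to exploit the empirical approximation already set up in the excerpt, namely the discrete operator $K_{p,n}$ in Eq. (\ref{equation_discrete_kernel_operator}), which replaces the continuous integral operator $K_p$ of Eq. (\ref{equation_K_operator_integral}) when the density $p$ is estimated by the empirical measure on the sample $\{\b{x}_i\}_{i=1}^n$. Under this substitution the eigenfunction equation $(K_p \psi_k)(\b{x}) = \lambda_k\, \psi_k(\b{x})$ becomes the discrete relation
\begin{align*}
\frac{1}{n} \sum_{i=1}^n k(\b{x}, \b{x}_i)\, \psi_k(\b{x}_i) = \lambda_k\, \psi_k(\b{x}),
\end{align*}
where $\psi_k$ is the $k$-th eigenfunction of $K_p$ with eigenvalue $\lambda_k$. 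The whole argument rests on evaluating this identity at the sample points and recognizing the resulting expression as a matrix eigenvalue problem.

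First I would restrict the displayed equation to the data points by setting $\b{x} = \b{x}_j$ for each $j \in \{1, \dots, n\}$, obtaining
\begin{align*}
\frac{1}{n} \sum_{i=1}^n k(\b{x}_j, \b{x}_i)\, \psi_k(\b{x}_i) = \lambda_k\, \psi_k(\b{x}_j).
\end{align*}
Next I would collect the eigenfunction values at the samples into a vector $\b{v}_k := [\psi_k(\b{x}_1), \dots, \psi_k(\b{x}_n)]^\top$ and recall from Eq. (\ref{equation_Gram_matrix}) that $\b{K}(j,i) = k(\b{x}_j, \b{x}_i)$. The $n$ scalar identities above then assemble into the single matrix equation $\tfrac{1}{n} \b{K} \b{v}_k = \lambda_k\, \b{v}_k$, i.e. $\b{K} \b{v}_k = n\, \lambda_k\, \b{v}_k$. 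Comparing this with the defining eigenvalue problem of the kernel matrix, $\b{K} \b{v}_k = \delta_k\, \b{v}_k$, and matching the eigenvalue on the same eigenvector $\b{v}_k$ gives $\delta_k = n\, \lambda_k$, which is Eq. (\ref{equation_connection_eigenvalues_for_kernel_operator}).

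The hard part will be justifying the passage from $K_p$ to $K_{p,n}$ cleanly: strictly speaking, the sampled eigenfunction values $\b{v}_k$ are eigenvectors of $\b{K}$ only once the empirical density is used, so I would want to state explicitly that the lemma concerns the eigenvalues of the empirical operator (which converges to $K_p$ as $n \to \infty$, as already noted after Eq. (\ref{equation_discrete_kernel_operator})) rather than the continuous operator itself. A secondary point worth a remark is that the factor $1/n$ in Eq. (\ref{equation_discrete_kernel_operator}) is precisely what produces the scaling $\delta_k = n\, \lambda_k$; without it the two spectra would coincide. The remaining steps are purely mechanical re-indexing and stacking, so I would not belabor them.
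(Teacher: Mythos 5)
Your proposal is correct and follows essentially the same route as the paper: restrict the eigenfunction equation of the discrete operator $K_{p,n}$ to the sample points, stack the sampled eigenfunction values into a vector, recognize $\tfrac{1}{n}\b{K}\b{v}_k = \lambda_k \b{v}_k$, and compare with $\b{K}\b{v}_k = \delta_k\b{v}_k$. Your explicit caveat about the lemma really concerning the empirical operator rather than $K_p$ itself is a welcome clarification that the paper leaves implicit.
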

\begin{proof}
This proof gets help from {\citep[proof of Proposition 3]{bengio2003learning}}.
According to Eq. (\ref{equation_eigenfunction}), the eigenfunction problems for the operators $K_p$ and $K_{p,n}$ (discrete version) are:
\begin{equation}\label{equation_eigenfunction_for_discrete_kernel_operator}
\begin{aligned}
&(K_p f_k)(\b{x}) = \lambda_k f_k(\b{x}), \quad \forall k \in \{1, \dots, n\}, \\
&(K_{p,n} f_k)(\b{x}) = \lambda_k f_k(\b{x}), \quad \forall k \in \{1, \dots, n\}, 
\end{aligned}
\end{equation}
where $f_k(.)$ is the $k$-th eigenfunction and $\lambda_k$ is the corresponding eigenvalue.
Consider the kernel matrix defined by Definition \ref{definition_Gram_matrix}.
The eigenvalue problem for the kernel matrix is \cite{ghojogh2019eigenvalue}: 
\begin{align}\label{equation_eigenvalue_problem_kernel_operator}
\b{K} \b{v}_k = \delta_k \b{v}_k, \quad \forall k \in \{1, \dots, n\},
\end{align}
where $\b{v}_k$ is the $k$-th eigenvector and $\delta_k$ is the corresponding eigenvalue. 
According to Eqs. (\ref{equation_discrete_kernel_operator}) and (\ref{equation_eigenfunction_for_discrete_kernel_operator}), we have:
\begin{align*}
\frac{1}{n} \sum_{i=1}^n k(\b{x},\b{x}_i)\, f(\b{x}_i) = \lambda_k f_k(\b{x}), \quad \forall k \in \{1, \dots, n\}.
\end{align*}
When this equation is evaluated only at $\b{x}_i \in \mathcal{X}$, we have {\citep[Section 4]{bengio2004learning}}, {\citep[Section 3.2]{bengio2006spectral}}:
\begin{align*}
&\frac{1}{n} \b{K} f_k = \lambda_k f_k, \quad \forall k \in \{1, \dots, n\}, \\
&\implies \b{K} f_k = n \lambda_k f_k.
\end{align*}
According to Theorem \ref{theorem_connection_eigenfunction_eigenvalue_problems}, eigenfunction can be seen as an eigenvector. If so, we can say:
\begin{align}\label{equation_eigenvalue_problem_kernel_operator_2}
\b{K} f_k = n \lambda_k f_k \implies \b{K} \b{v}_k = n \lambda_k \b{v}_k,
\end{align}
Comparing Eqs. (\ref{equation_eigenvalue_problem_kernel_operator}) and (\ref{equation_eigenvalue_problem_kernel_operator_2}) results in Eq. (\ref{equation_connection_eigenvalues_for_kernel_operator}). Q.E.D.
\end{proof}

\begin{lemma}[Relation of Eigenvalues of Kernel and Covariance in the Feature Space \cite{scholkopf1998nonlinear}]
Consider the covariance of pulled data to the feature space:
\begin{align}\label{equation_covariance_in_feature_space}
\b{C}_H := \frac{1}{n} \sum_{i=1}^n \breve{\phi}(\b{x}_i) \breve{\phi}(\b{x}_i)^\top,
\end{align}
where $\breve{\phi}(\b{x}_i)$ is the centered pulled data defined by Eq. (\ref{equation_appendix_centered_pulled_outOfSample}).

which is $t \times t$ dimensional where $t$ may be infinite. 
Assume $\eta_k$ denotes the $k$-th eigenvalue $\b{C}_H$ and $\delta_k$ denotes the $k$-th eigenvalue of centered kernel $\breve{\b{K}}$. We have:
\begin{align}\label{equation_connection_eigenvalues_for_kernel_and_covariance}
\delta_k = n\, \eta_k.
\end{align}
\end{lemma}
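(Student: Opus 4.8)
The plan is to exploit the duality between the centered Gram (kernel) matrix and the covariance operator, both of which are assembled from the same centered feature matrix. First I would collect the centered feature maps column-wise into the $t \times n$ matrix $\breve{\b{\Phi}}(\b{X}) := [\breve{\b{\phi}}(\b{x}_1), \dots, \breve{\b{\phi}}(\b{x}_n)]$, so that the two objects in the statement become $\b{C}_H = \frac{1}{n}\, \breve{\b{\Phi}}(\b{X})\, \breve{\b{\Phi}}(\b{X})^\top$ (a $t \times t$ operator) and $\breve{\b{K}} = \breve{\b{\Phi}}(\b{X})^\top \breve{\b{\Phi}}(\b{X})$ (an $n \times n$ matrix, exactly the centered version of Eq.~(\ref{equation_kernel_inner_product_matrix})). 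This reframing turns the claim into the classical linear-algebra fact that $\b{M}\b{M}^\top$ and $\b{M}^\top\b{M}$ share the same nonzero spectrum, with the covariance carrying the extra factor $1/n$.

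Next I would start from the eigenvalue problem for the centered kernel, $\breve{\b{K}}\, \b{v}_k = \delta_k\, \b{v}_k$, and left-multiply both sides by $\breve{\b{\Phi}}(\b{X})$. Regrouping the factors yields $\big(\breve{\b{\Phi}}(\b{X})\, \breve{\b{\Phi}}(\b{X})^\top\big)\big(\breve{\b{\Phi}}(\b{X})\, \b{v}_k\big) = \delta_k\, \big(\breve{\b{\Phi}}(\b{X})\, \b{v}_k\big)$. Setting $\b{u}_k := \breve{\b{\Phi}}(\b{X})\, \b{v}_k$ and using $\breve{\b{\Phi}}(\b{X})\, \breve{\b{\Phi}}(\b{X})^\top = n\, \b{C}_H$, this reads $n\, \b{C}_H\, \b{u}_k = \delta_k\, \b{u}_k$, hence $\b{C}_H\, \b{u}_k = (\delta_k/n)\, \b{u}_k$. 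Thus $\b{u}_k$ is an eigenvector of $\b{C}_H$ with eigenvalue $\delta_k/n$, which is precisely $\eta_k = \delta_k/n$, i.e.\ $\delta_k = n\, \eta_k$.

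For completeness I would record the reverse direction: starting from $\b{C}_H\, \b{u}_k = \eta_k\, \b{u}_k$ and left-multiplying by $\breve{\b{\Phi}}(\b{X})^\top$ produces $\breve{\b{K}}\big(\breve{\b{\Phi}}(\b{X})^\top \b{u}_k\big) = n\, \eta_k\, \big(\breve{\b{\Phi}}(\b{X})^\top \b{u}_k\big)$, so each nonzero eigenvalue of one matrix matches one of the other up to the factor $n$. The step requiring genuine care is that $\b{C}_H$ lives in the possibly infinite-dimensional feature space while $\breve{\b{K}}$ is only $n \times n$, so the correspondence can be asserted only for the \emph{nonzero} eigenvalues. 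Here the vector $\b{u}_k = \breve{\b{\Phi}}(\b{X})\, \b{v}_k$ is nonzero exactly when $\delta_k \neq 0$, since $\|\b{u}_k\|^2 = \b{v}_k^\top \breve{\b{K}}\, \b{v}_k = \delta_k\, \|\b{v}_k\|^2$; the argument then identifies the at most $n$ nonzero eigenvalues of $\b{C}_H$ with those of $\frac{1}{n}\breve{\b{K}}$, the remaining eigenvalues of $\b{C}_H$ being zero and contributing nothing. I expect this bookkeeping of the null spaces, rather than the algebra itself, to be the only delicate part of the proof.
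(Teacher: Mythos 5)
Your proof is correct, but it takes a cleaner and more direct route than the paper's. You work forward from the kernel eigenvalue problem, left-multiplying $\breve{\b{K}}\b{v}_k = \delta_k\b{v}_k$ by $\breve{\b{\Phi}}(\b{X})$ and reading off the intertwining of $\b{M}\b{M}^\top$ and $\b{M}^\top\b{M}$; the paper instead starts from the covariance eigen-problem $\b{C}_H\b{u}_k = \eta_k\b{u}_k$, expands $\b{u}_k$ in the span of the centered feature maps via the representation lemma (Lemma \ref{lemma_representation_function_bases}, with the $1/\sqrt{\delta_k}$ normalization), projects onto each $\breve{\b{\phi}}(\b{x}_j)$ to obtain $\eta_k\breve{\b{K}}\b{v}_k = \tfrac{1}{n}\breve{\b{K}}^2\b{v}_k$, and then cancels a factor of $\breve{\b{K}}$ by asserting it is ``often non-singular.'' Your version is actually more rigorous on exactly this point: a double-centered kernel always satisfies $\breve{\b{K}}\b{1}_n = \b{0}$, so it is never invertible, and your bookkeeping via $\|\b{u}_k\|^2 = \b{v}_k^\top\breve{\b{K}}\b{v}_k = \delta_k\|\b{v}_k\|^2$ correctly restricts the correspondence to the nonzero eigenvalues without any inversion. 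What the paper's longer detour buys is the explicit normalized expansion $\b{u}_k = \tfrac{1}{\sqrt{\delta_k}}\sum_\ell v_\ell\,\breve{\b{\phi}}(\b{x}_\ell)$ (Eq. (\ref{equation_eigenvector_of_covariance_in_feature_space})), which is reused verbatim in the proof of Theorem \ref{theorem_embedding_from_eigenfunctions} to derive the embedding formula; if you adopted your argument you would want to note that your $\b{u}_k = \breve{\b{\Phi}}(\b{X})\b{v}_k$ is the same vector up to the normalization $1/\sqrt{\delta_k}$ that makes it unit-norm.
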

\begin{proof}
This proof is based on {\citep[Section 2]{scholkopf1998nonlinear}}.
The eigenvalue problem for this covariance matrix is:
\begin{align*}
\eta_k\, \b{u}_k = \b{C}_H\, \b{u}_k, \quad \forall k \in \{1, \dots, n\},
\end{align*}
where $\b{u}_k$ is the $k$-th eigenvector and $\eta_k$ is its corresponding eigenvalue \cite{ghojogh2019eigenvalue}. 
Left multiplying this equation with $\breve{\phi}(\b{x}_j)^\top$ gives:
\begin{align}\label{equation_eigenvalue_problem_of_covariance_in_feature_space}
\eta_k\, \breve{\phi}(\b{x}_j)^\top \b{u}_k = \breve{\phi}(\b{x}_j)^\top \b{C}_H\, \b{u}_k, \quad \forall k \in \{1, \dots, n\}.
\end{align}
As $\b{u}_k$ is the eigenvector of the covariance matrix in the feature space, it lies in the feature space; hence, according to Lemma \ref{lemma_representation_function_bases} which will come later, we can represent it as:
\begin{align}\label{equation_eigenvector_of_covariance_in_feature_space}
\b{u}_k = \frac{1}{\sqrt{\delta_k}} \sum_{\ell=1}^n v_\ell\, \breve{\phi}(\b{x}_\ell), 
\end{align}
where pulled data to feature space are assumed to be centered, $v_\ell$'s are the coefficients in representation, and the normalization by $1/\sqrt{\delta_k}$ is because of a normalization used in {\citep[Section 4]{bengio2003spectral}}.
Substituting Eq. (\ref{equation_eigenvector_of_covariance_in_feature_space}) and Eq. (\ref{equation_covariance_in_feature_space}) in Eq. (\ref{equation_eigenvalue_problem_of_covariance_in_feature_space}) results in:
\begin{align*}
\eta_k\, \breve{\phi}(\b{x}_j)^\top &\sum_{\ell=1}^n v_\ell\, \breve{\phi}(\b{x}_\ell) \\
&= \breve{\phi}(\b{x}_j)^\top \frac{1}{n} \sum_{i=1}^n \breve{\phi}(\b{x}_i) \breve{\phi}(\b{x}_i)^\top\, \sum_{\ell=1}^n v_\ell\, \breve{\phi}(\b{x}_\ell),
\end{align*}
where normalization factors are simplified from sides. 
In the right-hand side, as the summations are finite, we are allowed to re-arrange them. Re-arranging the terms in this equation gives:
\begin{align*}
&\eta_k \sum_{\ell=1}^n v_\ell\, \breve{\phi}(\b{x}_j)^\top \breve{\phi}(\b{x}_\ell) \\
&~~~~~~ =\frac{1}{n} \sum_{\ell=1}^n v_\ell\, \Big( \breve{\phi}(\b{x}_j)^\top \sum_{i=1}^n \breve{\phi}(\b{x}_i) \Big) \Big( \breve{\phi}(\b{x}_i)^\top\, \breve{\phi}(\b{x}_\ell) \Big).
\end{align*}
Considering Eqs. (\ref{equation_kernel_inner_product_matrix}) and (\ref{equation_centered_kernel_definition}), we can write this equation in matrix form $\eta_k \breve{\b{K}} \b{v}_k = \frac{1}{n} \breve{\b{K}}^2 \b{v}_k$ where $\b{v}_k := [v_1, \dots, v_n]^\top$. As $\breve{\b{K}}$ is positive semi-definite (see Lemma \ref{lemma_kernel_is_positive_semidefinite}), it is often non-singular. For non-zero eigenvalues, we can left multiply this equation to $\breve{\b{K}}^{-1}$ to have:
\begin{align*}
n\, \eta_k\, \b{v}_k = \breve{\b{K}} \b{v}_k,
\end{align*}
which is the eigenvalue problem for $\breve{\b{K}}$ where $\b{v}$ is the eigenvector and $\delta_k = n\, \eta_k$ is the eigenvalue (cf. Eq. (\ref{equation_eigenvalue_problem_kernel_operator})). Q.E.D.
\end{proof}


\begin{lemma}[Relation of Eigenfunctions and Eigenvectors for Kernel {\citep[Proposition 1]{bengio2003out}}, {\citep[Theorem 1]{bengio2003spectral}}]\label{lemma_relation_eigenfunctions_eigenvectors_for_kernel}
Consider a training dataset $\{\b{x}_i \in \mathbb{R}^d\}_{i=1}^n$ and the eigenvalue problem (\ref{equation_eigenvalue_problem_kernel_operator}) where $\b{v}_k \in \mathbb{R}^n$ and $\delta_k$ are the $k$-th eigenvector and eigenvalue of matrix $\b{K} \in \mathbb{R}^{n \times n}$.
If $v_{ki}$ is the $i$-th element of vector $\b{v}_k$, the eigenfunction for the point $\b{x}$ and the $i$-th training point $\b{x}_i$ are:
\begin{align}
&f_k(\b{x}) = \frac{\sqrt{n}}{\delta_k} \sum_{i=1}^n v_{ki}\, \breve{k}(\b{x}_i, \b{x}), \label{equation_relation_eigenfunction_eigenvector_x} \\
&f_k(\b{x}_i) = \sqrt{n}\, v_{ki}, \label{equation_relation_eigenfunction_eigenvector_x_i}
\end{align}
respectively, where $\breve{k}(\b{x}_i, \b{x})$ is the centered kernel. 
If $\b{x}$ is a training point, $\breve{k}(\b{x}_i, \b{x})$ is the centered kernel over training data and if $\b{x}$ is an out-of-sample point, then $\breve{k}(\b{x}_i, \b{x}) = \breve{k}_t(\b{x}_i, \b{x})$ is between training set and the out-of-sample point (n.b. kernel centering is explained in Section \ref{section_kernel_centering}). 
\end{lemma}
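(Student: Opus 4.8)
The plan is to start from the discrete eigenfunction equation for the kernel operator, use the Nystr{\"o}m extension to obtain an explicit formula for $f_k(\b{x})$, and then fix its normalization.

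First I would write the eigenfunction equation for the discrete operator $K_{p,n}$ (using the centered kernel $\breve{k}$), which by Eqs.~(\ref{equation_discrete_kernel_operator}) and (\ref{equation_eigenfunction_for_discrete_kernel_operator}) reads
\begin{align*}
\frac{1}{n}\sum_{i=1}^n \breve{k}(\b{x}, \b{x}_i)\, f_k(\b{x}_i) = \lambda_k\, f_k(\b{x}).
\end{align*}
Solving for $f_k(\b{x})$ and inserting the eigenvalue relation $\delta_k = n\,\lambda_k$ from Eq.~(\ref{equation_connection_eigenvalues_for_kernel_operator}) yields the Nystr{\"o}m-type extension
\begin{align*}
f_k(\b{x}) = \frac{1}{\delta_k}\sum_{i=1}^n \breve{k}(\b{x}_i, \b{x})\, f_k(\b{x}_i),
\end{align*}
where symmetry of the kernel (Lemma~\ref{lemma_kernel_is_symmetric}) lets me swap the arguments. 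This already has the structure of Eq.~(\ref{equation_relation_eigenfunction_eigenvector_x}); what remains is to express $f_k(\b{x}_i)$ through $v_{ki}$.

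Next I would evaluate the equation only at the training points. Stacking the values $f_k(\b{x}_1), \dots, f_k(\b{x}_n)$ into a vector $\b{f}_k$ collapses the identity to $\breve{\b{K}}\, \b{f}_k = \delta_k\, \b{f}_k$, which is exactly the maneuver used in the proof of the earlier eigenvalue-relation lemma. Comparing this with the eigenvalue problem $\breve{\b{K}}\, \b{v}_k = \delta_k\, \b{v}_k$ of Eq.~(\ref{equation_eigenvalue_problem_kernel_operator}) shows that $\b{f}_k$ and $\b{v}_k$ lie in the same $\delta_k$-eigenspace, so $\b{f}_k = c\, \b{v}_k$ for some scalar $c$; equivalently $f_k(\b{x}_i) = c\, v_{ki}$.

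Finally I would pin down $c$ by normalization. The eigenfunctions are orthonormal in the density-weighted Hilbert inner product of Eq.~(\ref{equation_Hilbert_inner_product_weighted}), whose empirical approximation is $\langle f_k, f_k \rangle \approx \frac{1}{n}\sum_{i=1}^n f_k(\b{x}_i)^2$. Imposing $\langle f_k, f_k \rangle = 1$ together with the unit-norm convention $\sum_i v_{ki}^2 = 1$ for the eigenvector forces $c^2/n = 1$, hence $c = \sqrt{n}$. This gives Eq.~(\ref{equation_relation_eigenfunction_eigenvector_x_i}), and substituting $f_k(\b{x}_i) = \sqrt{n}\, v_{ki}$ back into the Nystr{\"o}m extension produces Eq.~(\ref{equation_relation_eigenfunction_eigenvector_x}). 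The main obstacle is this normalization step: one must be explicit that the eigenfunctions are normalized in the empirical density-weighted inner product while the eigenvectors carry unit Euclidean norm, because the factor $\sqrt{n}$ arises solely from the mismatch between these two conventions; everything else is routine Nystr{\"o}m algebra.
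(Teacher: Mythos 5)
Your argument is correct, and it is worth noting that the paper itself does not actually prove this lemma --- it only defers to the proofs in the cited references (Bengio et al. and Williams--Seeger). Your reconstruction follows the same route those references take: the discrete eigenfunction equation from Eqs.~(\ref{equation_discrete_kernel_operator}) and (\ref{equation_eigenfunction_for_discrete_kernel_operator}), restriction to the training points to identify $\b{f}_k = [f_k(\b{x}_1),\dots,f_k(\b{x}_n)]^\top$ with an eigenvector of $\breve{\b{K}}$, and the normalization mismatch between the empirical density-weighted inner product $\frac{1}{n}\sum_i f_k(\b{x}_i)^2 = 1$ and the Euclidean convention $\|\b{v}_k\|_2 = 1$ as the sole source of the $\sqrt{n}$ factor; you are right that this last step is the one that must be made explicit, and your self-consistency check (substituting Eq.~(\ref{equation_relation_eigenfunction_eigenvector_x_i}) back into Eq.~(\ref{equation_relation_eigenfunction_eigenvector_x}) and using $\breve{\b{K}}\b{v}_k = \delta_k \b{v}_k$) closes the loop. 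Two minor caveats you should state: the step ``$\b{f}_k$ and $\b{v}_k$ lie in the same $\delta_k$-eigenspace, hence $\b{f}_k = c\,\b{v}_k$'' silently assumes $\delta_k$ is a simple eigenvalue (for a degenerate eigenvalue one can only say $\b{f}_k$ lies in the span of the corresponding eigenvectors, and the correspondence is fixed by choosing a matching basis); and $c^2 = n$ only gives $c = \pm\sqrt{n}$, so the positive sign is a convention tied to the usual sign ambiguity of eigenvectors. Neither affects the substance, and your proof supplies an explicit derivation where the paper offers only a pointer.
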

\begin{proof}
For proof of Eq. (\ref{equation_relation_eigenfunction_eigenvector_x}), see {\citep[proof of Theorem 1]{bengio2003spectral}} or {\citep[Section 1.1]{williams2001using}}. 
The Eq. (\ref{equation_relation_eigenfunction_eigenvector_x_i}) is claimed in {\citep[Proposition 1]{bengio2003spectral}}. 
For proof of this equation, see {\citep[proof of Theorem 1, Eq. 7]{bengio2003spectral}}. 
\end{proof}
It is noteworthy that Eq. (\ref{equation_relation_eigenfunction_eigenvector_x}) is similar and related to the Nystr{\"o}m approximation of eigenfunctions of kernel operator which will be explained in Lemma \ref{lemma_Nystrom_approx_eigenfunctions}.

\begin{theorem}[Embedding from Eigenfunctions of Kernel Operator {\citep[Proposition 1]{bengio2003out}}, {\citep[Section 4]{bengio2003spectral}}]\label{theorem_embedding_from_eigenfunctions}
Consider a dimensionality reduction algorithm which embeds data into a low-dimensional embedding space.
Let the embedding of the point $\b{x}$ be $\mathbb{R}^p \ni \b{y}(\b{x}) = [y_1(\b{x}), \dots, y_p(\b{x})]^\top$ where $p \leq n$. The $k$-th dimension of this embedding is:
\begin{align}\label{equation_embedding_eigenfunction}
y_k(\b{x}) &= \sqrt{\delta_k}\, \frac{f_k(\b{x})}{\sqrt{n}} = \frac{1}{\sqrt{\delta_k}} \sum_{i=1}^n v_{ki}\, \breve{k}(\b{x}_i, \b{x}),
\end{align}
where $\breve{k}(\b{x}_i, \b{x})$ is the centered training or out-of-sample kernel depending on whether $\b{x}$ is a training or an out-of-sample point (n.b. kernel centering will be explained in Section \ref{section_kernel_centering}). 
\end{theorem}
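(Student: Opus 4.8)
The plan is to identify the generic embedding coordinate $y_k(\b{x})$ with the kernel-PCA projection of the centered pulled point onto the $k$-th principal direction in the feature space, and then to rewrite that projection in two equivalent ways: once as a centered-kernel expansion (the rightmost formula) and once through the eigenfunction $f_k$ (the middle formula). First I would recall from the earlier covariance lemma, specifically Eq. (\ref{equation_eigenvector_of_covariance_in_feature_space}), that the $k$-th eigenvector of the feature-space covariance $\b{C}_H$ can be written as $\b{u}_k = \frac{1}{\sqrt{\delta_k}} \sum_{\ell=1}^n v_{k\ell}\, \breve{\b{\phi}}(\b{x}_\ell)$, where $v_{k\ell}$ is the $\ell$-th entry of the $k$-th eigenvector $\b{v}_k$ of the centered kernel matrix (eigenvalue problem (\ref{equation_eigenvalue_problem_kernel_operator})). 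A short check using $\breve{\b{K}}\b{v}_k = \delta_k \b{v}_k$ confirms $\|\b{u}_k\|_k = 1$, so $\b{u}_k$ is a genuine unit principal direction and the $k$-th embedding coordinate of any point is the projection $y_k(\b{x}) = \b{u}_k^\top \breve{\b{\phi}}(\b{x})$.

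Second, I would substitute the representation of $\b{u}_k$ into this projection and apply the centered-kernel identity $\breve{\b{\phi}}(\b{x}_\ell)^\top \breve{\b{\phi}}(\b{x}) = \breve{k}(\b{x}_\ell, \b{x})$ coming from Eq. (\ref{equation_centered_kernel_definition}). This yields $y_k(\b{x}) = \frac{1}{\sqrt{\delta_k}} \sum_{\ell=1}^n v_{k\ell}\, \breve{k}(\b{x}_\ell, \b{x})$, which is precisely the rightmost expression of the claim. Notably the same formula serves both training and out-of-sample points, provided $\breve{k}(\b{x}_\ell, \b{x})$ is read as the centered training kernel or the centered out-of-sample kernel respectively, as already set up in Lemma \ref{lemma_relation_eigenfunctions_eigenvectors_for_kernel}.

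Third, to recover the middle expression $\sqrt{\delta_k}\, f_k(\b{x})/\sqrt{n}$, I would invoke Eq. (\ref{equation_relation_eigenfunction_eigenvector_x}), namely $f_k(\b{x}) = \frac{\sqrt{n}}{\delta_k} \sum_{i=1}^n v_{ki}\, \breve{k}(\b{x}_i, \b{x})$. Multiplying both sides by $\sqrt{\delta_k}/\sqrt{n}$ cancels the $\sqrt{n}$ factors and collapses $\sqrt{\delta_k}/\delta_k$ into $1/\sqrt{\delta_k}$, giving $\sqrt{\delta_k}\, f_k(\b{x})/\sqrt{n} = \frac{1}{\sqrt{\delta_k}} \sum_{i=1}^n v_{ki}\, \breve{k}(\b{x}_i, \b{x})$, identical to the expression from the previous step. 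As a sanity check for training points, Eq. (\ref{equation_relation_eigenfunction_eigenvector_x_i}) with $f_k(\b{x}_i) = \sqrt{n}\, v_{ki}$ reduces the middle expression to the familiar kernel-PCA coordinate $y_k(\b{x}_i) = \sqrt{\delta_k}\, v_{ki}$, confirming internal consistency.

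The algebra here is entirely routine; the real obstacle is definitional rather than computational. One must justify that the coordinate emitted by a \emph{spectral} dimensionality reduction method is in fact the feature-space projection onto $\b{u}_k$ — this is exactly the point where the earlier unification of spectral methods as kernel PCA with different kernels is used — and one must be careful that the out-of-sample coordinate is obtained by the Nystr\"om-type eigenfunction extension $f_k(\b{x})$ rather than by re-solving an eigenproblem on an enlarged kernel matrix. Once this projection interpretation and the training/out-of-sample reading of $\breve{k}$ are fixed, both equalities drop out immediately from Eqs. (\ref{equation_eigenvector_of_covariance_in_feature_space}), (\ref{equation_centered_kernel_definition}), and (\ref{equation_relation_eigenfunction_eigenvector_x}).
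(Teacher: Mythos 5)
Your proposal is correct and follows essentially the same route as the paper's proof: identify $y_k(\b{x})$ with the projection $\b{u}_k^\top \breve{\b{\phi}}(\b{x})$, substitute the representation of $\b{u}_k$ from Eq. (\ref{equation_eigenvector_of_covariance_in_feature_space}), and apply the centered-kernel identity of Eq. (\ref{equation_centered_kernel_definition}). The only additions are your explicit verification of the middle equality via Eq. (\ref{equation_relation_eigenfunction_eigenvector_x}) and the unit-norm check on $\b{u}_k$, both of which the paper leaves implicit; these are welcome but do not change the argument.
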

\begin{proof}
We can embed data point $\b{x}$ by pulling it to the feature space and centering the pulled dataset to have $\breve{\phi}(\b{x})$ and then projecting it onto the eigenvector of covariance matrix in the feature space {\citep[Section 2]{scholkopf1998nonlinear}}, {\citep[Section 4]{bengio2003spectral}}:
\begin{align*}
y_k(\b{x}) &= \b{u}_k^\top \breve{\phi}(\b{x}) \overset{(\ref{equation_eigenvector_of_covariance_in_feature_space})}{=} \frac{1}{\sqrt{\delta_k}} \sum_{i=1}^n v_i\, \breve{\phi}(\b{x}_i)^\top \breve{\phi}(\b{x}) \\
&\overset{(\ref{equation_centered_kernel_definition})}{=} \frac{1}{\sqrt{\delta_k}} \sum_{i=1}^n v_{ki}\, \breve{k}(\b{x}_i, \b{x}).
\end{align*}
Q.E.D.
\end{proof}
The Theorem \ref{theorem_embedding_from_eigenfunctions} has been widely used for out-of-sample (test data) embedding in many spectral dimensionality reduction algorithms \cite{bengio2003out}.

\begin{corollary}[Embedding from Eigenvectors of Kernel Matrix]
Consider the eigenvalue problem for the kernel matrix, i.e. Eq. (\ref{equation_eigenvalue_problem_kernel_operator}), where $\b{v}_k = [v_{k1}, \dots, v_{kn}]^\top$ and $\delta_k$ are the $k$-th eigenvector and eigenvalue of kernel, respectively. According to Eqs. (\ref{equation_relation_eigenfunction_eigenvector_x_i}) and (\ref{equation_embedding_eigenfunction}), we can compute the embedding of point $\b{x}$, denoted by $\b{y}(\b{x}) = [y_1(\b{x}), \dots, y_p(\b{x})]^\top$ (where $p \leq n$) using the eigenvector of kernel as:
\begin{align}\label{equation_embedding_eigenvector_of_kernel}
y_k(\b{x}) = \sqrt{\delta_k}\, \frac{1}{\sqrt{n}} (\sqrt{n}) v_{ki} = \sqrt{\delta_k}\, v_{ki}. 
\end{align}
\end{corollary}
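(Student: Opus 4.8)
The plan is to obtain the result by direct substitution of the eigenfunction–eigenvector relation into the embedding formula already established in Theorem \ref{theorem_embedding_from_eigenfunctions}. First I would recall the first equality in Eq.~(\ref{equation_embedding_eigenfunction}), namely $y_k(\b{x}) = \sqrt{\delta_k}\, f_k(\b{x})/\sqrt{n}$, which expresses the $k$-th embedding coordinate of an arbitrary point in terms of the $k$-th eigenfunction evaluated at that point. Since the corollary concerns the embedding computed from the components $v_{ki}$ of the kernel-matrix eigenvectors, I would specialize this formula to a training point $\b{x}_i$, so that the eigenfunction value can be replaced by an explicit eigenvector component.

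Next, I would invoke Eq.~(\ref{equation_relation_eigenfunction_eigenvector_x_i}) from Lemma~\ref{lemma_relation_eigenfunctions_eigenvectors_for_kernel}, which states that at a training point the eigenfunction value is $f_k(\b{x}_i) = \sqrt{n}\, v_{ki}$. Substituting this into the embedding formula gives
\begin{align*}
y_k(\b{x}_i) = \sqrt{\delta_k}\, \frac{f_k(\b{x}_i)}{\sqrt{n}} = \sqrt{\delta_k}\, \frac{\sqrt{n}\, v_{ki}}{\sqrt{n}} = \sqrt{\delta_k}\, v_{ki},
\end{align*}
where the $\sqrt{n}$ arising from the eigenfunction value cancels the $1/\sqrt{n}$ normalization carried by the embedding, leaving the stated closed form in Eq.~(\ref{equation_embedding_eigenvector_of_kernel}).

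This argument presents no genuine obstacle, since it is purely a composition of two results proved earlier in the excerpt; the only point requiring care is the bookkeeping of the $\sqrt{n}$ normalization factors, so that the cancellation is carried out correctly and the two $\sqrt{n}$ contributions do not inadvertently compound rather than cancel. I would close by noting that this furnishes a practical recipe: once the eigen-decomposition of the kernel matrix $\b{K}$ is available, the embedding of each training point is read off directly from the scaled eigenvector components $\sqrt{\delta_k}\, v_{ki}$, without ever forming the eigenfunctions $f_k$ explicitly.
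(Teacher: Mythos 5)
Your proposal is correct and matches the paper's own derivation exactly: the displayed equation in the corollary, $y_k(\b{x}) = \sqrt{\delta_k}\,\tfrac{1}{\sqrt{n}}(\sqrt{n})\,v_{ki}$, is precisely the substitution of Eq.~(\ref{equation_relation_eigenfunction_eigenvector_x_i}) into the first equality of Eq.~(\ref{equation_embedding_eigenfunction}) with the two $\sqrt{n}$ factors cancelling, as you describe. Your explicit restriction to a training point $\b{x}_i$ is in fact slightly more careful than the paper's notation, which writes $y_k(\b{x})$ even though the right-hand side only makes sense for $\b{x}=\b{x}_i$.
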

The Eq. (\ref{equation_embedding_eigenvector_of_kernel}) is used in several dimensionality reduction methods such as maximum variance unfolding (or semidefinite embedding) \cite{weinberger2005nonlinear,weinberger2006unsupervised,weinberger2006introduction}. We will introduce this method in Section \ref{section_kernel_learning}. 

\section{Kernelization Techniques}\label{section_kernelization_techniques}

Linear algorithms cannot properly handle nonlinear patterns of data obviously. 
When dealing with nonlinear data, if the algorithm is linear, two solutions exist to have acceptable performance:
\begin{enumerate}
\item Either the linear method should be modified to become nonlinear or a completely new nonlinear algorithm should be proposed to be able to handle nonlinear data. Some examples of this category are nonlinear dimensionality methods such as locally linear embedding \cite{ghojogh2020locally} and Isomap \cite{ghojogh2020multidimensional}. 
\item Or the nonlinear data should be modified in a way to become more linear in pattern. In other words, a transformation should be applied on data so that the pattern of data becomes roughly linear or easier to process by the linear algorithm. Some examples of this category are kernel versions of linear methods such as kernel Principal Component Analysis (PCA) \cite{scholkopf1997kernel,scholkopf1998nonlinear,ghojogh2019unsupervised}, kernel Fisher Discriminant Analysis (FDA) \cite{mika1999fisher,ghojogh2019fisher}, and kernel Support Vector Machine (SVM) \cite{boser1992training,vapnik1995nature}. 
\end{enumerate}
The second approach is called kernelization in machine learning which we define in the following. 
Figure \ref{figure_separating_classes_kernel} shows how kernelization for transforming data can help separate classes for better classification. 

\begin{figure}[!t]
\centering
\includegraphics[width=3.4in]{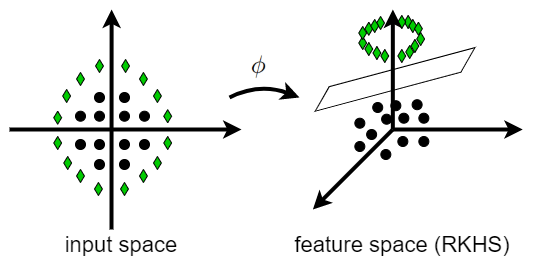}
\caption{Transforming data to RKHS using kernels to make the nonlinear pattern of data more linear. For example, here the classes have become linearly separable (by a linear hyperplane) after kernelization.}
\label{figure_separating_classes_kernel}
\end{figure}

\begin{definition}[Kernelization]
In machine learning and data science, kernelization means a slight change in algorithm formulation (without any modification in the idea of algorithm) so that the pulled data to the RKHS, rather than the raw data, are used as input of algorithm. 
\end{definition}
Note that kernelization can be useful for enabling linear algorithms to handle nonlinear data better. Nevertheless, it should be noted that nonlinear algorithms can also be kernelized to be able to handle nonlinear data perhaps better by transforming data. 

Generally, there exist two main approaches for kernelization in machine learning. These two approaches are related in theory but have two ways for kernelization. 
In the following, we explain these methods which are kernel trick and kernelization using representation theory. 

\subsection{Kernelization by Kernel Trick}\label{section_kernel_trick}

Recall Eqs. (\ref{equation_kernel_inner_product}) and (\ref{equation_kernel_inner_product_matrix}) where kernel can be computed by inner product between pulled data instances to the RKHS. 
One technique to kernelize an algorithm is kernel trick. In this technique, we first try to formulate the algorithm formulas or optimization in a way that data always appear as inner product of data instances and not a data instance alone. In other words, the formulation of algorithm should only have $\b{x}^\top \b{x}$, $\b{x}^\top \b{X}$, $\b{X}^\top \b{x}$, or $\b{X}^\top \b{X}$ and not a lonely $\b{x}$ or $\b{X}$. In this way, kernel trick replaces $\b{x}^\top \b{x}$ with $\b{\phi}(\b{x})^\top \b{\phi}(\b{x})$ and uses Eq. (\ref{equation_kernel_inner_product}) or (\ref{equation_kernel_inner_product_matrix}). To better explain, kernel trick applies the following mapping \cite{burges1998tutorial}:
\begin{align}\label{equation_kernel_trick}
\b{x}^\top \b{x} \mapsto \b{\phi}(\b{x})^\top \b{\phi}(\b{x}) \overset{(\ref{equation_kernel_inner_product})}{=} k(\b{x}, \b{x}).
\end{align}
Therefore, the inner products of points are all replaced with the kernel between points. The matrix form of kernel trick is:
\begin{align}\label{equation_kernel_trick_matrix}
\b{X}^\top \b{X} \mapsto \b{\Phi}(\b{X})^\top \b{\Phi}(\b{X}) \overset{(\ref{equation_kernel_inner_product_matrix})}{=} \b{K}(\b{X}, \b{X}) \in \mathbb{R}^{n \times n}.
\end{align}

Most often, kernel matrix is computed over one dataset; hence, its dimensionality is $n \times n$. 
However, in some cases, the kernel matrix is computed between two sets of data instances with sample sizes $n_1$ and $n_2$ for example, i.e. datasets $\b{X}_1 := [\b{x}_{1,1}, \dots, \b{x}_{1,n_1}]$ and $\b{X}_2 := [\b{x}_{2,1}, \dots, \b{x}_{2,n_2}]$. In this case, the kernel matrix has size $n_1 \times n_2$ and the kernel trick is:
\begin{align}
&\b{x}_{1,i}^\top \b{x}_{1,j} \mapsto \b{\phi}(\b{x}_{1,i})^\top \b{\phi}(\b{x}_{1,j}) \overset{(\ref{equation_kernel_inner_product})}{=} k(\b{x}_{1,i}, \b{x}_{1,j}), \label{equation_kernel_trick_two_sets} \\
&\b{X}_1^\top \b{X}_2 \mapsto \b{\Phi}(\b{X}_1)^\top \b{\Phi}(\b{X}_2) \overset{(\ref{equation_kernel_inner_product_matrix})}{=} \b{K}(\b{X}_1, \b{X}_2) \in \mathbb{R}^{n_1 \times n_2}.
\end{align}
An example for kernel between two sets of data is the kernel between training data and out-of-sample (test) data. 
As stated in \cite{scholkopf2001kernel}, the kernel trick is proved to work for Mercer kernels in \cite{boser1992training,vapnik1995nature} or equivalently for the positive definite kernels \cite{berg1984harmonic,wahba1990spline}.

Some examples of using kernel trick in machine learning are kernel PCA \cite{scholkopf1997kernel,scholkopf1998nonlinear,ghojogh2019unsupervised} and kernel SVM \cite{boser1992training,vapnik1995nature}.
More examples will be provided in Section \ref{section_kernel_in_machine_learning}.
Note that in some algorithms, data do not not appear only by inner product which is required for the kernel trick. In these cases, if possible, a ``dual" method for the algorithm is proposed which only uses the inner product of data. Then, the dual algorithm is kernelized using kernel trick. Some examples for this are kernelization of dual PCA \cite{ghojogh2019unsupervised} and dual SVM \cite{burges1998tutorial}. 
As an additional point, it is noteworthy that it is possible to replace kernel trick with function replacement (see \cite{ma2003function} for more details on this).

\subsection{Kernelization by Representation Theory}

As was explained in Section \ref{section_kernel_trick}, if the formulation of algorithm has data only as inner products of points, kernel trick can be used. In some cases, a dual version of algorithm is used to have only inner products. 
Some algorithms, however, cannot be formulated in a way to have data only in inner product form, nor does their dual have this form. An example is Fisher Discriminant Analysis (FDA) \cite{ghojogh2019fisher} which uses another technique for kernelization \cite{mika1999fisher}. In the following, we explain this technique. 

\begin{lemma}[Representation of Function Using Bases \cite{mika1999fisher}]\label{lemma_representation_function_bases}
Consider a RKHS denoted by $\mathcal{H}$. Any function $f \in \mathcal{H}$ lies in the span of all points in the RKHS, i.e.,
\begin{align}
f = \sum_{i=1}^n \alpha_i\, \phi(\b{x}_i).
\end{align}
\end{lemma}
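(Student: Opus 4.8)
The plan is to read the representation straight off the construction of the RKHS in Eq. (\ref{equation_RKHS}), combined with the feature-map correspondence of Eq. (\ref{equation_kernel_inner_product}). Recall that Eq. (\ref{equation_RKHS}) \emph{defines} $\mathcal{H}$ as the set of all finite linear combinations $f(.) = \sum_{i=1}^n \alpha_i\, k(\b{x}_i, .) = \sum_{i=1}^n \alpha_i\, k_{\b{x}_i}(.)$ of the kernel functions anchored at the data points. Hence the mere statement $f \in \mathcal{H}$ already supplies coefficients $\{\alpha_i\}_{i=1}^n$ with $f = \sum_{i=1}^n \alpha_i\, k_{\b{x}_i}(.)$, and most of the work is an identification rather than a computation.

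First I would identify each basis function $k_{\b{x}_i}(.) = k(\b{x}_i, .)$ with its image $\b{\phi}(\b{x}_i)$ in the feature space. This identification is precisely the content of Eq. (\ref{equation_kernel_inner_product}): since $k(\b{x}_i, \b{x}) = \langle \b{\phi}(\b{x}_i), \b{\phi}(\b{x}) \rangle_k$, evaluating $k_{\b{x}_i}$ against any argument reproduces the inner product with $\b{\phi}(\b{x}_i)$, so the function $k_{\b{x}_i}(.)$ and the vector $\b{\phi}(\b{x}_i)$ name the same element of the space. Substituting this correspondence into the expansion above then yields $f = \sum_{i=1}^n \alpha_i\, \b{\phi}(\b{x}_i)$, which is the claimed representation.

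An alternative and more robust route I would also present is the orthogonal-projection argument already used for the representer theorem (Theorem \ref{theorem_representer_theorem}). Decompose $f = f_{\|} + f_{\perp}$, where $f_{\|}$ lies in the span of $\{\b{\phi}(\b{x}_i)\}_{i=1}^n$ and $f_{\perp}$ is orthogonal to it. By the reproducing property (Eq. (\ref{equation_RKHS_reproducing})), $\langle f_{\perp}, k_{\b{x}_i} \rangle_k = f_{\perp}(\b{x}_i) = 0$ for every $i$, exactly as in Eq. (\ref{equation_rep_theorey_proof3}), so $f_{\perp}$ is invisible to every quantity that a kernel method can access, namely those depending on $f$ only through the inner products $\langle f, \b{\phi}(\b{x}_i) \rangle_k$. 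The representation therefore holds for the component that any such algorithm ever uses.

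The main subtlety to flag is the scope of the word ``any'' in the statement. For a completely general RKHS built as the closure of the span over \emph{all} admissible points, an arbitrary $f$ need not lie in the span of a \emph{finite} subset $\{\b{\phi}(\b{x}_i)\}_{i=1}^n$, so a literal equality could fail. The claim is legitimate here precisely because Eq. (\ref{equation_RKHS}) takes $\mathcal{H}$ to be generated by the $n$ training points; equivalently, the orthogonal-projection viewpoint shows the orthogonal remainder never affects the algorithm, which is all that the kernelization by representation theory in this section requires. This is the one point I would be careful to state explicitly rather than treat as routine.
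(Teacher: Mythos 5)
Your primary argument is essentially the paper's own proof: read the expansion $f = \sum_{i=1}^n \alpha_i\, k(\b{x}_i, .)$ directly off Eq. (\ref{equation_RKHS}) and convert each $k(\b{x}_i, .)$ into $\b{\phi}(\b{x}_i)$ via Eq. (\ref{equation_kernel_inner_product}), exactly as the paper does. The orthogonal-projection alternative and the caveat about finite spans versus closures are reasonable additions, but they are not needed given that the paper's Eq. (\ref{equation_RKHS}) already takes $\mathcal{H}$ to be the finite span over the $n$ training points.
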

\begin{proof}
Consider Eq. (\ref{equation_RKHS}) which can be restated as:
\begin{align*}
&f(\b{y}) \overset{(\ref{equation_RKHS})}{=} \sum_{i=1}^n \alpha_i\, k(\b{x}_i, \b{y}) \overset{(\ref{equation_kernel_inner_product})}{=} \sum_{i=1}^n \alpha_i\, \phi(\b{x}_i)^\top \phi(\b{y}) \\
&\implies f(.) = \sum_{i=1}^n \alpha_i\, \phi(\b{x}_i).
\end{align*}
Q.E.D.
\end{proof}

\begin{remark}[Justification by Representation Theory]
According to representation theory \cite{alperin1993local}, any function in the space can be represented as a linear combination of bases of the space. This makes sense because the function is in the space and the space is spanned by the bases. Now, assume the space is RKHS. Hence, any function should lie in the RKHS spanned by the pulled data points to the feature space. This justifies Lemma \ref{lemma_representation_function_bases} using representation theory. 
\end{remark}

\subsubsection{Kernelization for Vector Solution}

Now, consider an algorithm whose optimization variable or solution is the vector/direction $\b{u} \in \mathbb{R}^d$ in the input space. 
For kernelization, we pull this solution to RKHS by Eq. (\ref{equation_pulling_mapping}) to have $\phi(\b{u})$. 
According to Lemma \ref{lemma_representation_function_bases}, this pulled solution must lie in the span of all pulled training points $\{\phi(\b{x}_i)\}_{i=1}^n$ as:
\begin{align}\label{equation_representation_theory_phi_u}
\phi(\b{u}) = \sum_{i=1}^n \alpha_i\, \phi(\b{x}_i) = \b{\Phi}(\b{X})\, \b{\alpha},
\end{align}
which is $t$ dimensional and $t$ may be infinite. Note that $\b{\Phi}(\b{X})$ is defined by Eq. (\ref{equation_Phi_X_pulled_matrix}) and is $t \times n$ dimensional. The vector $\b{\alpha} := [\alpha_1, \dots, \alpha_n]^\top \in \mathbb{R}^n$ contains the coefficients. 
According to Eq. (\ref{equation_pulling_mapping}), we can replace $\b{u}$ with $\phi(\b{u})$ in the algorithm. If, by this replacement, the terms $\phi(\b{x}_i)^\top \phi(\b{x}_i)$ or $\b{\Phi}(\b{X})^\top \b{\Phi}(\b{X})$ appear, then we can use Eq. (\ref{equation_kernel_inner_product}) and replace $\phi(\b{x}_i)^\top \phi(\b{x}_i)$ with $k(\b{x}_i, \b{x}_i)$ or use Eq. (\ref{equation_kernel_inner_product_matrix}) to replace $\b{\Phi}(\b{X})^\top \b{\Phi}(\b{X})$ with $\b{K}(\b{X}, \b{X})$. 
This kernelizes the method. The steps of kernelization by representation theory are summarized below:
\begin{itemize}[topsep=0pt,itemsep=-1ex,partopsep=1ex,parsep=1ex]
\item Step 1: $\b{u} \rightarrow \phi(\b{u})$
\item Step 2: Replace $\phi(\b{u})$ with Eq. (\ref{equation_representation_theory_phi_u}) in the algorithm formulation
\item Step 3: Some $\phi(\b{x}_i)^\top \phi(\b{x}_i)$ or $\b{\Phi}(\b{X})^\top \b{\Phi}(\b{X})$ terms appear in the formulation
\item Step 4: Use Eq. (\ref{equation_kernel_inner_product}) or (\ref{equation_kernel_inner_product_matrix})
\item Step 5: Solve (optimize) the algorithm where the variable to find is $\b{\alpha}$ rather than $\b{u}$
\end{itemize}
Usually, the goal of algorithm results in kernel. For example, if $\b{u}$ is a projection direction, the desired projected data are obtained as:
\begin{align}
\b{u}^\top \b{x}_i \overset{(\ref{equation_pulling_mapping})}{\mapsto} \phi(\b{u})^\top \phi(\b{x}_i) &\overset{(\ref{equation_representation_theory_phi_u})}{=} \b{\alpha}^\top \b{\Phi}(\b{X})^\top \phi(\b{x}_i) \nonumber \\
&\overset{(\ref{equation_kernel_inner_product_matrix})}{=} \b{\alpha}^\top \b{k}(\b{X}, \b{x}_i),
\end{align}
where $\b{k}(\b{X}, \b{x}_i) \in \mathbb{R}^n$ is the kernel between all $n$ training points with the point $\b{x}_i$.
As this equation shows, the desired goal is based on kernel. 

\subsubsection{Kernelization for Matrix Solution}

Usually, the algorithm has multiple directions/vectors as its solution. In other words, its solution is a matrix $\b{U} = [\b{u}_1, \dots, \b{u}_p] \in \mathbb{R}^{d \times p}$. In this case, Eq. (\ref{equation_representation_theory_phi_u}) is used for all $p$ vectors and in a matrix form, we have:
\begin{align}\label{equation_representation_theory_phi_u_matrix}
\b{\Phi}(\b{U}) = \b{\Phi}(\b{X})\, \b{A}, 
\end{align}
where $\mathbb{R}^{n \times p} \ni \b{A} := [\b{\alpha}_1, \dots, \b{\alpha}_p]^\top$ and $\b{\Phi}(\b{U})$ is $t \times p$ dimensional where $t$ may be infinite. Similarly, the following steps should be performed to kernelize the algorithm:
\begin{itemize}[topsep=0pt,itemsep=-1ex,partopsep=1ex,parsep=1ex]
\item Step 1: $\b{U} \rightarrow \phi(\b{U})$
\item Step 2: Replace $\phi(\b{U})$ with Eq. (\ref{equation_representation_theory_phi_u_matrix}) in the algorithm formulation
\item Step 3: Some $\b{\Phi}(\b{X})^\top \b{\Phi}(\b{X})$ terms appear in the formulation
\item Step 4: Use Eq. (\ref{equation_kernel_inner_product_matrix})
\item Step 5: Solve (optimize) the algorithm where the variable to find is $\b{A}$ rather than $\b{U}$
\end{itemize}
Again the goal of algorithm usually results in kernel. For example, if $\b{U}$ is a projection matrix onto its column space, we have:
\begin{align}
\b{U}^\top \b{x}_i \overset{(\ref{equation_pulling_mapping})}{\mapsto} \phi(\b{U})^\top \phi(\b{x}_i) &\overset{(\ref{equation_representation_theory_phi_u_matrix})}{=} \b{A}^\top \b{\Phi}(\b{X})^\top \phi(\b{x}_i) \nonumber \\
&\overset{(\ref{equation_kernel_inner_product_matrix})}{=} \b{A}^\top \b{k}(\b{X}, \b{x}_i),
\end{align}
where $\b{k}(\b{X}, \b{x}_i) \in \mathbb{R}^n$ is the kernel between all $n$ training points with the point $\b{x}_i$.
As this equation shows, the desired goal is based on kernel. 

As was mentioned, in many machine learning algorithms, the solution $\b{U} \in \mathbb{R}^{d \times p}$ is a projection matrix for projecting $d$-dimensional data onto a $p$-dimensional subspace. Some example methods which have used kernelization by representation theory are kernel Fisher discriminant analysis (FDA) \cite{mika1999fisher,ghojogh2019fisher},
kernel supervised principal component analysis (PCA) \cite{barshan2011supervised,ghojogh2019unsupervised}, and direct kernel Roweis discriminant analysis (RDA) \cite{ghojogh2020generalized}. 

\begin{remark}[Linear Kernel in Kernelization]\label{remark_linear_kernel_equivalent_to_non_kernelized}
If we use kernel trick, the kernelized algorithm with a linear kernel is equivalent to the non-kernelized algorithm. This is because in linear kernel, we have $\phi(\b{x}) = \b{x}$ and $k(\b{x}, \b{y}) = \b{x}^\top \b{y}$ according to Eq. (\ref{equation_linear_kernel}). So, the kernel trick, which is Eq. (\ref{equation_kernel_trick_two_sets}), maps data as $\b{x}^\top \b{y} \mapsto \phi(\b{x})^\top \phi(\b{y}) = \b{x}^\top \b{y}$ for linear kernel. Therefore, linear kernel does not have any effect when using kernel trick. Examples for this are kernel PCA \cite{scholkopf1997kernel,scholkopf1998nonlinear,ghojogh2019unsupervised} and kernel SVM \cite{boser1992training,vapnik1995nature} which are equivalent to PCA and SVM, respectively, if linear kernel is used. 

However, kernel trick does have impact when using kernelization by representation theory because it finds the inner products of pulled data points after pulling the solution and representation as a span of bases. Hence, kernelized algorithm using representation theory with linear kernel is not equivalent to non-kernelized algorithm. Examples of this are kernel FDA \cite{mika1999fisher,ghojogh2019fisher} and kernel supervised PCA \cite{barshan2011supervised,ghojogh2019unsupervised} which are different from FDA and supervised PCA, respectively, even if linear kernel is used. 
\end{remark}

\section{Types of Use of Kernels in Machine Learning}\label{section_kernel_in_machine_learning}

There are several types of using kernels in machine learning. In the following, we explain these types of usage of kernels. 

\subsection{Kernel Methods}

The first type of using kernels in machine learning is kernelization of algorithms using either kernel trick or representation theory. As was discussed in Section \ref{section_kernelization_techniques}, linear methods can be kernelized to handle nonlinear data better. Even nonlinear algorithms can be kernelized to perform in the feature space rather than the input space.
In machine learning both kernel trick and kernelization by representation theory have been used. We provide some examples for each of these categories: 
\begin{itemize}[topsep=0pt,itemsep=-1ex,partopsep=1ex,parsep=1ex]
\item Examples for kernelization by \textbf{kernel trick}: kernel Principal Component Analysis (PCA) \cite{scholkopf1997kernel,scholkopf1998nonlinear,ghojogh2019unsupervised}, kernel Support Vector Machine (SVM) \cite{boser1992training,vapnik1995nature}.
\item Examples for kernelization by \textbf{representation theory}: kernel supervised PCA \cite{barshan2011supervised,ghojogh2019unsupervised}, kernel Fisher Discriminant Analysis (FDA) \cite{mika1999fisher,ghojogh2019fisher}, direct kernel Roweis Discriminant Analysis (RDA) \cite{ghojogh2020generalized}. 
\end{itemize}

As was discussed in Section \ref{section_introduction}, using kernels was widely noticed when linear SVM \cite{vapnik1974theory} was kernelized in \cite{boser1992training,vapnik1995nature}.
More discussions on kernel SVM can be found in \cite{scholkopf1997comparing,hastie2009elements}. A tutorial on kernel SVM is \cite{burges1998tutorial}. 
Universal kernels, introduced in Section \ref{section_universal_characteristic_kernels}, are widely used in kernel SVM. More detailed discussions and proofs for use of universal kernels in kernel SVM can be read in \cite{steinwart2008support}.

As was discussed in Section \ref{section_kernel_trick}, many machine learning algorithms are developed to have dual versions because inner products of points usually appear in the dual algorithms and kernel trick can be applied on them. Some examples of these are dual PCA \cite{scholkopf1997kernel,scholkopf1998nonlinear,ghojogh2019unsupervised} and dual SVM \cite{boser1992training,vapnik1995nature} yielding to kernel PCA and kernel SVM, respectively. 
In some algorithms, however, either a dual version does not exist or formulation does not allow for merely having inner products of points. In those algorithms, kernel trick cannot be used and representation theory should be used. An example for this is FDA \cite{ghojogh2019fisher}.
Moreover, some algorithms, such as kernel reinforcement learning \cite{ormoneit2002kernel}, use kernel as a measure of similarity (see Remark \ref{remark_kernel_is_similarity}).

\subsection{Kernel Learning}\label{section_kernel_learning}

After development of many spectral dimensionality reduction methods in machine learning, it was found out that many of them are actually special cases of kernel Principal Component Analysis (PCA) \cite{bengio2003spectral,bengio2004learning}. 
Paper \cite{ham2004kernel} has shown that PCA, multidimensional scaling, Isomap, locally linear embedding, and Laplacian eigenmap are special cases of kernel PCA with kernels in the formulation of Eq. (\ref{equation_generalKernel_and_distanceMAtrix}). A list of these kernels can be seen in {\citep[Chapter 2]{strange2014open}} and \cite{ghojogh2019feature}. 

Because of this, some generalized dimensionality reduction methods, such as graph embedding \cite{yan2005graph}, were proposed.
In addition, as many spectral methods are cases of kernel PCA, some researchers tried to learn the best kernel for manifold unfolding. 
Maximum Variance Unfolding (MVU) or Semidefinite Embedding (SDE) \cite{weinberger2005nonlinear,weinberger2006unsupervised,weinberger2006introduction} is a method for kernel learning using Semidefinite Programming (SDP) \cite{vandenberghe1996semidefinite}. MVU is used for manifold unfolding and dimensionality reduction.
Note that kernel learning by SDP has also been used for labeling a not completely labeled dataset and is also used for kernel SVM \cite{lanckriet2004learning,karimi2017summary}. Our focus here is on MVU. 
In the following, we briefly introduce the MVU (or SDE) algorithm. 

\begin{lemma}[Distance in RKHS \cite{scholkopf2001kernel}]\label{lemma_distance_in_RKHS}
The squared Euclidean distance between points in the feature space is:
\begin{align}\label{equation_distance_in_RKHS}
\|\phi(\b{x}_i) - \phi(\b{x}_j)\|_k^2 = k(\b{x}_i, \b{x}_i) + k(\b{x}_j, \b{x}_j) - 2 k(\b{x}_i, \b{x}_j).
\end{align}
\end{lemma}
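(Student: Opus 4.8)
The plan is to treat this as a direct expansion of the squared norm induced by the RKHS inner product, converting each resulting inner product into a kernel evaluation by way of the feature-map identity. The key ingredients are already in place: Eq.~(\ref{equation_kernel_inner_product}), which states $k(\b{x}, \b{y}) = \langle \phi(\b{x}), \phi(\b{y}) \rangle_k$, and the symmetry of the kernel established in Lemma~\ref{lemma_kernel_is_symmetric}.

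First I would use the definition of the norm induced by the inner product, writing
\begin{align*}
\|\phi(\b{x}_i) - \phi(\b{x}_j)\|_k^2 = \big\langle \phi(\b{x}_i) - \phi(\b{x}_j),\, \phi(\b{x}_i) - \phi(\b{x}_j) \big\rangle_k.
\end{align*}
Next I would exploit the bilinearity of the inner product to expand this into four terms,
\begin{align*}
\langle \phi(\b{x}_i), \phi(\b{x}_i) \rangle_k - \langle \phi(\b{x}_i), \phi(\b{x}_j) \rangle_k - \langle \phi(\b{x}_j), \phi(\b{x}_i) \rangle_k + \langle \phi(\b{x}_j), \phi(\b{x}_j) \rangle_k.
\end{align*}
Then I would apply Eq.~(\ref{equation_kernel_inner_product}) to each of the four inner products, turning them into $k(\b{x}_i, \b{x}_i)$, $k(\b{x}_i, \b{x}_j)$, $k(\b{x}_j, \b{x}_i)$, and $k(\b{x}_j, \b{x}_j)$ respectively. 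Finally, invoking the symmetry $k(\b{x}_i, \b{x}_j) = k(\b{x}_j, \b{x}_i)$ from Lemma~\ref{lemma_kernel_is_symmetric}, the two middle (cross) terms combine to $-2\,k(\b{x}_i, \b{x}_j)$, yielding exactly Eq.~(\ref{equation_distance_in_RKHS}).

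There is no real obstacle here: the statement is a one-line computation once the feature-map identity and kernel symmetry are granted, both of which are proved earlier in the excerpt. The only point requiring a word of justification is the use of bilinearity of $\langle \cdot, \cdot \rangle_k$, which holds because $\mathcal{H}$ is an inner product space by Definition~\ref{definition_Hilbert_space}; everything else is routine algebra.
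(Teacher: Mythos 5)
Your proof is correct and follows essentially the same route as the paper's: expand the squared norm bilinearly into four inner products, identify each with a kernel value via Eq.~(\ref{equation_kernel_inner_product}), and merge the cross terms using symmetry. The only cosmetic difference is that you write the expansion with inner-product brackets while the paper uses transpose notation, and you apply the kernel identity before combining the cross terms rather than after; the argument is identical.
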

\begin{proof}
\begin{align*}
&\|\phi(\b{x}_i) - \phi(\b{x}_j)\|_k^2 = \big( \phi(\b{x}_i) - \phi(\b{x}_j) \big)^\top \big( \phi(\b{x}_i) - \phi(\b{x}_j) \big) \\
&= \phi(\b{x}_i)^\top \phi(\b{x}_i) + \phi(\b{x}_j)^\top \phi(\b{x}_j) -  \phi(\b{x}_i)^\top \phi(\b{x}_j) \\
&- \phi(\b{x}_j)^\top \phi(\b{x}_i) \overset{(\ref{equation_symmetric_inner_product})}{=} \phi(\b{x}_i)^\top \phi(\b{x}_i) + \phi(\b{x}_j)^\top \phi(\b{x}_j) \\
&- 2\phi(\b{x}_i)^\top \phi(\b{x}_j) \overset{(\ref{equation_kernel_inner_product})}{=} k(\b{x}_i, \b{x}_i) + k(\b{x}_j, \b{x}_j) - 2 k(\b{x}_i, \b{x}_j).
\end{align*}
Q.E.D.
\end{proof}

MVU desires to unfolds the manifold of data in its maximum variance direction. For example, consider a Swiss roll which can be unrolled to have maximum variance after being unrolled. As trace of matrix is the summation of eigenvalues and kernel matrix is a measure of similarity between points (see Remark \ref{remark_kernel_is_similarity}), the trace of kernel can be used to show the summation of variance of data. Hence, we should maximize $\textbf{tr}(\b{K})$ where $\textbf{tr}(.)$ denotes the trace of matrix. 
MVU pulls data to the RKHS and then unfolds the manifold. This unfolding should not ruin the local distances between points after pulling data to the feature space. Hence, we should preserve the local distances as:
\begin{align}
\|\b{x}_i - &\b{x}_j\|_2^2 \overset{\text{set}}{=} \|\phi(\b{x}_i) - \phi(\b{x}_j)\|_k^2 \nonumber \\
&\overset{(\ref{equation_distance_in_RKHS})}{=} k(\b{x}_i, \b{x}_i) + k(\b{x}_j, \b{x}_j) - 2 k(\b{x}_i, \b{x}_j).
\end{align}
Moreover, according to Lemma \ref{lemma_kernel_is_positive_semidefinite}, the kernel should be positive semidefinite, i.e. $\b{K} \succeq \b{0}$. 
MVU also centers kernel to have zero mean for the pulled dataset in the feature space (see Section \ref{section_kernel_centering}). According to Eq. (\ref{equation_appendix_doubleCentered_training_kernel_sum}), we will have $\sum_{i=1}^n \sum_{j=1}^n \b{K}(i,j) = 0$. In summary, the optimization of MVU is \cite{weinberger2005nonlinear,weinberger2006unsupervised,weinberger2006introduction}:
\begin{equation}
\begin{aligned}
& \underset{\b{K}}{\text{maximize}}
& & \textbf{tr}(\b{K}) \\
& \text{subject to}
& & \|\b{x}_i - \b{x}_j\|_2^2 = \b{K}(i,i) + \b{K}(j,j) - 2 \b{K}(i,j), \\
& & & ~~~~~~~~~~\qquad\qquad\qquad\qquad \; \forall i,j \in \{1, \ldots, n\}, \\
& & & \sum_{i=1}^n \sum_{j=1}^n \b{K}(i,j) = 0, \\
& & & \b{K} \succeq \b{0},
\end{aligned}
\end{equation}
which is a SDP problem \cite{vandenberghe1996semidefinite}. 
Solving this optimization gives the best kernel for maximum variance unfolding of manifold. Then, MVU considers the eigenvalue problem for kernel, i.e. Eq. (\ref{equation_eigenvalue_problem_kernel_operator}), and finds the embedding using Eq. (\ref{equation_embedding_eigenvector_of_kernel}).

\subsection{Use of Kernels for Difference of Distributions}

There exist several different measures for difference of distributions (i.e., PDFs). Some of them make use of kernels and some do not. 
A measure of difference of distributions can be used for (1) calculating the divergence (difference) of a distribution from another reference distribution or (2) convergence of a distribution to another reference distribution using optimization. We will explain the second use of this measure better in Corollary \ref{corollary_characteristic_kernel_convergence_of_distributions}.

For the information of reader, we first enumerate some of the measures without kernels and then introduce the kernel-based measures for difference of distributions. 
One of methods which do not use kernels is the Kullback-Leibler (KL) divergence \cite{kullback1951information}. KL-divergence, which is a relative entropy from one distribution to the other one and has been widely used in deep learning \cite{goodfellow2016deep}.
Another measure is the Wasserstein metric which has been used in generative models \cite{arjovsky2017wasserstein}. 
The integral probability metric \cite{muller1997integral} is another measure for difference of distributions. 

In the following, we introduce some well-known measures for difference of distributions, using kernels. 

\subsubsection{Hilbert-Schmidt Independence Criterion (HSIC)}

Suppose we want to measure the dependence of two random variables. Measuring the correlation between them is easier because correlation is just ``linear'' dependence. 

According to \cite{hein2004kernels}, two random variables $X$ and $Y$ are independent if and only if any bounded continuous functions of them are uncorrelated. Therefore, if we map the samples of two random variables $\{\b{x}\}_{i=1}^n$ and $\{\b{y}\}_{i=1}^n$ to two different (``separable'') RKHSs and have $\phi(\b{x})$ and $\phi(\b{y})$, we can measure the correlation of $\phi(\b{x})$ and $\phi(\b{y})$ in Hilbert space to have an estimation of dependence of $\b{x}$ and $\b{y}$ in the input space. 

The correlation of $\phi(\b{x})$ and $\phi(\b{y})$ can be computed by the Hilbert-Schmidt norm of the cross-covariance of them \cite{gretton2005measuring}. Note that the squared Hilbert-Schmidt norm of a matrix $\b{A}$ is \cite{bell2016trace}:
\begin{align}
||\b{A}||_{HS}^2 := \textbf{tr}(\b{A}^\top \b{A}),
\end{align}
and the cross-covariance matrix of two vectors $\b{x}$ and $\b{y}$ is \cite{gubner2006probability,gretton2005measuring}:
\begin{align}
\mathbb{C}\text{ov}(\b{x}, \b{y}) := \mathbb{E}\Big[&\big(\b{x} - \mathbb{E}(\b{x})\big) \big(\b{y} - \mathbb{E}(\b{y})\big) \Big].
\end{align}
Using the explained intuition, an empirical estimation of the Hilbert-Schmidt Independence Criterion (HSIC) is introduced \cite{gretton2005measuring}:
\begin{align}\label{equation_HSIC}
\text{HSIC}(X,Y) := \frac{1}{(n-1)^2}\, \textbf{tr}(\b{K}_x\b{H}\b{K}_y\b{H}),
\end{align}
where $\b{K}_x := \b{\phi}(\b{x})^\top \b{\phi}(\b{x})$ and $\b{K}_y := \b{\phi}(\b{y})^\top \b{\phi}(\b{y})$ are the kernels over $\b{x}$ and $\b{y}$, respectively. 
The term $1/(n-1)^2$ is used for normalization.
The matrix $\b{H}$ is the centering matrix (see Eq. (\ref{equation_centered_matrix})).
Note that HSIC double-centers one of the kernels and then computes the Hilbert-Schmidt norm between kernels. 

HSIC measures the dependence of two random variable vectors $\b{x}$ and $\b{y}$. Note that $\text{HSIC}=0$ and $\text{HSIC}>0$ mean that $\b{x}$ and $\b{y}$ are independent and dependent, respectively. The greater the HSIC, the greater dependence they have.

\begin{lemma}[Independence of Random Variables Using Cross-Covariance {\citep[Theorem 5]{gretton2010consistent}}]
Two random variables $X$ and $Y$ are independent if and only if $\mathbb{C}\text{ov}(f(\b{x}), f(\b{y})) = 0$ for any pair of bounded continuous functions $(f,g)$. Because of relation of HSIC with the cross-covariance of variables, two random variables are independent if and only if $\text{HSIC}(X,Y) = 0$.
\end{lemma}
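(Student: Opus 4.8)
The plan is to establish the two stated equivalences in sequence: first the characterization of independence through cross-covariances of bounded continuous functions, and then the transfer of this characterization to HSIC via its interpretation as the norm of a cross-covariance operator in the feature space.

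The forward direction of the first equivalence is immediate. If $X$ and $Y$ are independent, then for any measurable functions $f$ and $g$ (in particular bounded continuous ones) the variables $f(\b{x})$ and $g(\b{y})$ are independent, so that $\mathbb{E}[f(\b{x})\, g(\b{y})] = \mathbb{E}[f(\b{x})]\, \mathbb{E}[g(\b{y})]$, which by definition gives $\mathbb{C}\text{ov}(f(\b{x}), g(\b{y})) = 0$. The substantive direction is the converse. Assuming $\mathbb{C}\text{ov}(f(\b{x}), g(\b{y})) = 0$ for every pair of bounded continuous $(f,g)$, I would rewrite this as $\mathbb{E}[f(\b{x})\, g(\b{y})] = \mathbb{E}[f(\b{x})]\, \mathbb{E}[g(\b{y})]$ and then specialize to the complex exponentials $f(\b{x}) = e^{\,i\,\b{s}^\top \b{x}}$ and $g(\b{y}) = e^{\,i\,\b{t}^\top \b{y}}$, which are bounded and continuous. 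This forces the joint characteristic function to factorize as $\varphi_{X,Y}(\b{s}, \b{t}) = \varphi_X(\b{s})\, \varphi_Y(\b{t})$ for all $\b{s}, \b{t}$, and by uniqueness of characteristic functions this is equivalent to factorization of the joint law, i.e.\ to independence. An alternative route is to approximate indicators $\mathbf{1}_A$ and $\mathbf{1}_B$ by bounded continuous functions and pass to the limit, recovering $P(\b{x} \in A,\, \b{y} \in B) = P(\b{x} \in A)\, P(\b{y} \in B)$.

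For the HSIC equivalence, the idea is to identify the expression in Eq.~(\ref{equation_HSIC}) with the squared Hilbert--Schmidt norm of the cross-covariance operator between the two RKHSs, so that $\text{HSIC}(X,Y) = 0$ holds if and only if $\mathbb{C}\text{ov}\big(f(\b{x}), g(\b{y})\big) = 0$ for all $f, g$ lying in the respective reproducing kernel Hilbert spaces. The centering matrix $\b{H}$ appearing in Eq.~(\ref{equation_HSIC}) is exactly what subtracts the means, realizing the \emph{cross-covariance} rather than the raw cross-correlation. The main obstacle, and the place where the hypothesis on the kernel is essential, is upgrading the covariance condition from RKHS functions to \emph{all} bounded continuous functions. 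This requires the kernel to be characteristic (equivalently, for the relevant function classes, universal), so that its RKHS is dense in $C(\mathcal{X})$ in the sense of Section~\ref{section_universal_characteristic_kernels}; density lets one approximate any bounded continuous $f, g$ by RKHS elements and extend the vanishing of the covariance by continuity.

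Combining these pieces closes the argument: the density property of a characteristic kernel shows $\text{HSIC}(X,Y) = 0$ is equivalent to $\mathbb{C}\text{ov}(f(\b{x}), g(\b{y})) = 0$ for all bounded continuous $(f,g)$, and the first equivalence then identifies this with independence of $X$ and $Y$. I expect the genuinely delicate step to be the RKHS-to-$C(\mathcal{X})$ density argument, since everything else reduces either to elementary properties of covariance or to the standard uniqueness theorem for characteristic functions.
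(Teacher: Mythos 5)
The paper does not actually prove this lemma; it states it and defers entirely to \citep[Theorem 5]{gretton2010consistent}, so there is no in-paper argument to compare against, and your proposal is essentially the standard proof from that cited literature. The outline is sound: the forward direction of the covariance characterization is immediate, and your converse via complex exponentials (take real and imaginary parts of $e^{i\b{s}^\top\b{x}}$, since the lemma concerns real-valued test functions) correctly forces factorization of the joint characteristic function; the alternative route through approximating indicators works equally well. For the HSIC half you identify the two load-bearing steps --- HSIC as the squared Hilbert--Schmidt norm of the cross-covariance operator, and density of the RKHS in $C(\mathcal{X})$ --- and, importantly, you flag that the kernels must be universal/characteristic. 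That hypothesis is genuinely required (with a linear or polynomial kernel, $\text{HSIC}=0$ certifies only uncorrelatedness of the feature maps), and the paper's statement of the lemma omits it, so making the dependence explicit is an improvement rather than a defect. Two points to tighten: Eq.~(\ref{equation_HSIC}) is the \emph{empirical} estimator, so the equivalence with independence should be argued for the population quantity $\|C_{xy}\|_{HS}^2$, of which the trace expression is only a finite-sample estimate; and the density-plus-limit step that upgrades the covariance condition from RKHS functions to all bounded continuous functions needs $\mathcal{X}$ compact (or an appropriate $c_0$-universality framework) together with a dominated-convergence argument, which you should state rather than leave implicit.
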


\subsubsection{Maximum Mean Discrepancy (MMD)}

MMD, also known as the kernel two sample test and proposed in \cite{gretton2006kernel,gretton2012kernel}, is a measure for difference of distributions. 
For comparison of two distributions, one can find the difference of all moments of the two distributions. However, as the number of moments is infinite, it is intractable to calculate the difference of all moments. One idea to do this tractably is to pull both distributions to the feature space and then compute the distance of all pulled data points from distributions in RKHS. This difference is a suitable estimate for the difference of all moments in the input space. This is the idea behind MMD. 

MMD is a semi-metric \cite{simon2020metrizing} and uses distance in the RKHS \cite{scholkopf2001kernel} (see Lemma \ref{lemma_distance_in_RKHS}). 
Consider PDFs $\mathbb{P}$ and $\mathbb{Q}$ and samples $\{\b{x}_i\}_{i=1}^n \sim \mathbb{P}$ and $\{\b{y}_i\}_{i=1}^n \sim \mathbb{Q}$.
The squared MMD between these PDFs is:
\begin{align}
&\text{MMD}^2(\mathbb{P}, \mathbb{Q}) := \Big\|\frac{1}{n} \sum_{i=1}^n \b{\phi}(\b{x}_i) - \frac{1}{n} \sum_{i=1}^n \b{\phi}(\b{y}_i)\Big\|_k^2 \nonumber \\
&\overset{(\ref{equation_distance_in_RKHS})}{=} \frac{1}{n^2} \sum_{i=1}^n \sum_{j=1}^n k(\b{x}_i, \b{x}_j) + \frac{1}{n^2} \sum_{i=1}^n \sum_{j=1}^n k(\b{y}_i, \b{y}_j) \nonumber \\
&~~~~~~~~~~~~ - \frac{2}{n^2} \sum_{i=1}^n \sum_{j=1}^n k(\b{x}_i, \b{y}_j) \nonumber \\
&= \mathbb{E}_{x}[\b{K}(\b{x}, \b{y})] + \mathbb{E}_{y}[\b{K}(\b{x}, \b{y})] - 2\mathbb{E}_{x}[\mathbb{E}_{y}[\b{K}(\b{x}, \b{y})]], 
\end{align}
where $\mathbb{E}_x[\b{K}(\b{x}, \b{y})]$, $\mathbb{E}_y[\b{K}(\b{x}, \b{y})]$, and $\mathbb{E}_x[\mathbb{E}_y[\b{K}(\b{x}, \b{y})]]$ are average of rows, average of columns, and total average of rows and columns of the kernel matrix, respectively.
Note that MMD $\geq 0$ where MMD $=0$ means the two distributions are equivalent if the used kernel is characteristic (see Corollary \ref{corollary_characteristic_kernel_convergence_of_distributions} which will be provided later). 
MMD has been widely used in machine learning such as generative moment matching networks \cite{li2015generative}.

\begin{remark}[Equivalence of HSIC and MMD \cite{sejdinovic2013equivalence}]
After development of HSIC and MMD measures, it was found out that they are equivalent.
\end{remark}

\subsection{Kernel Embedding of Distributions (Kernel Mean Embedding)}\label{section_kernel_embedding_distributions}


\begin{definition}[Kernel Embedding of Distributions \cite{smola2007hilbert}]
Kernel embedding of distributions, also called the Kernel Mean Embedding (KME) or mean map, represents (or embeds) Probability Density Functions (PDFs) in a RKHS. 
\end{definition}

\begin{corollary}[Distribution Embedding in Hilbert Space]
Inspired by Eq. (\ref{equation_RKHS}) or (\ref{equation_representer_theorem}), if we map a PDF $\mathbb{P}$ from its space $\mathcal{X}$ to the Hilbert space $\mathcal{H}$, its mapped PDF, denoted by $\phi(\mathbb{P})$, can be represented as:
\begin{align}\label{equation_mapping_PDF_to_RKHS}
\mathbb{P} \mapsto \phi(\mathbb{P}) = \int_\mathcal{X} k(\b{x}, .)\, d\mathbb{P}(\b{x}).
\end{align}
This integral is called the Bochner integral.
\end{corollary}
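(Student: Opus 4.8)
The plan is to realize the embedding $\phi(\mathbb{P})$ as the expectation of the canonical feature map under $\mathbb{P}$ and then to identify that expectation with the stated integral. First I would recall that in an RKHS the canonical feature map sends a point $\b{x}$ to the kernel section $\phi(\b{x}) = k(\b{x}, .) = k_{\b{x}}(.) \in \mathcal{H}$, as in Definition \ref{definition_RKHS} and Eq. (\ref{equation_RKHS}). Consequently the embedding of a single Dirac mass at $\b{x}$ is $k(\b{x}, .)$, and the embedding of an empirical distribution $\frac{1}{n}\sum_{i=1}^n \delta_{\b{x}_i}$ is $\frac{1}{n}\sum_{i=1}^n k(\b{x}_i, .)$, which is exactly a member of the RKHS written in the form of Eq. (\ref{equation_RKHS}) with coefficients $\alpha_i = 1/n$. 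Passing from this finite weighted sum over sample points to a continuous average over the whole space, with the measure $d\mathbb{P}(\b{x})$ playing the role that the coefficients $\alpha_i$ play in the discrete case, replaces the sum by an integral and produces $\phi(\mathbb{P}) = \int_\mathcal{X} k(\b{x}, .)\, d\mathbb{P}(\b{x})$.

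Equivalently, and more directly, I would define $\phi(\mathbb{P}) := \mathbb{E}_{\b{x} \sim \mathbb{P}}[\phi(\b{x})] = \mathbb{E}_{\b{x} \sim \mathbb{P}}[k(\b{x}, .)]$ and then observe that the expectation of the $\mathcal{H}$-valued random element $\b{x} \mapsto k(\b{x}, .)$ is, by definition, the Bochner integral $\int_\mathcal{X} k(\b{x}, .)\, d\mathbb{P}(\b{x})$. This is the only place where the Bochner integral (rather than the ordinary Lebesgue integral) is required, since the integrand takes values in the function space $\mathcal{H}$ rather than in $\mathbb{R}$; the Bochner integral is precisely the generalization of the Lebesgue integral to Hilbert-space-valued integrands.

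The step I expect to be the main obstacle is establishing that this integral is well-defined, i.e. that $\phi(\mathbb{P})$ actually lies in $\mathcal{H}$ rather than in some larger completion. The standard sufficient condition is Bochner integrability of the feature map, namely $\int_\mathcal{X} \|k(\b{x}, .)\|_k\, d\mathbb{P}(\b{x}) < \infty$. Here I would invoke the special case of the reproducing property noted after Eq. (\ref{equation_RKHS_reproducing}), $\langle k_{\b{x}}, k_{\b{x}}\rangle_k = k(\b{x}, \b{x})$, so that $\|k(\b{x}, .)\|_k = \sqrt{k(\b{x}, \b{x})}$, which reduces the condition to $\int_\mathcal{X} \sqrt{k(\b{x}, \b{x})}\, d\mathbb{P}(\b{x}) < \infty$. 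For a bounded kernel this integrability is immediate, since $\sqrt{k(\b{x}, \b{x})} \leq \sqrt{\sup_{\b{x},\b{y}} k(\b{x}, \b{y})} < \infty$ and $\mathbb{P}$ is a probability measure; hence the Bochner integral exists and $\phi(\mathbb{P}) \in \mathcal{H}$. With well-definedness secured, the representation in Eq. (\ref{equation_mapping_PDF_to_RKHS}) follows at once from identifying the mean map with this Bochner integral.
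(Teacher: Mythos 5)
Your proposal is correct, and its core is the same argument the paper relies on: the paper states this corollary without any formal proof, justifying it only by the phrase ``inspired by Eq.~(\ref{equation_RKHS})'', i.e.\ by the analogy in which the discrete representation $f(.) = \sum_{i=1}^n \alpha_i\, k(\b{x}_i, .)$ of an RKHS element passes to a continuous average $\int_\mathcal{X} k(\b{x}, .)\, d\mathbb{P}(\b{x})$ with $d\mathbb{P}$ playing the role of the coefficients $\alpha_i$. Your first paragraph makes exactly that analogy explicit (via the embedding of Dirac masses and empirical measures), so on that point you and the paper coincide. Where you go beyond the paper is in your second and third paragraphs: you identify $\phi(\mathbb{P})$ as the mean of the $\mathcal{H}$-valued feature map and, crucially, you verify that the Bochner integral is well defined by checking integrability of $\|k(\b{x},.)\|_k = \sqrt{k(\b{x},\b{x})}$, reducing it to $\int_\mathcal{X} \sqrt{k(\b{x},\b{x})}\, d\mathbb{P}(\b{x}) < \infty$, which holds for bounded kernels. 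The paper silently skips this step even though it is the only nontrivial mathematical content of the statement (it is what guarantees $\phi(\mathbb{P}) \in \mathcal{H}$ rather than in some larger space), so your treatment is strictly more complete; the paper's approach buys only brevity. One small caveat worth flagging if you wanted full generality: for unbounded kernels the embedding need not exist for every $\mathbb{P}$, so the integrability condition you state is genuinely a hypothesis, not a formality.
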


KME and MMD were first proposed in the field of pure mathematics \cite{guilbart1978etude}.  
Later on, KME and MMD were used in machine learning, first in \cite{smola2007hilbert}.
KME is a family on methods which use Eq. (\ref{equation_mapping_PDF_to_RKHS}) for embedding PDFs in RKHS. This family of methods is more discussed in \cite{sriperumbudur2010hilbert}. 
A survey on KME is \cite{muandet2016kernel}. 

Universal kernels, introduced in Section \ref{section_universal_kernels}, can be used for KME \cite{sriperumbudur2011universality,simon2018kernel}. In addition to universal kernels, characteristic kernels and integrally strictly positive definite are useful for KME \cite{sriperumbudur2011universality,simon2018kernel}. 
The integrally strictly positive definite kernel was introduced in Section \ref{section_integrally_positive_definite_kernels}. In the following, we introduce the characteristic kernels. 

\begin{definition}[Characteristic Kernel \cite{fukumizu2008characteristic}]\label{definition_characteristic_kernel}
A kernel is characteristic if the mapping (\ref{equation_mapping_PDF_to_RKHS}) is injective. In other words, for a characteristic kernel $k$, we have:
\begin{align}
\mathbb{E}_{X \sim \mathbb{P}}[k(., X)] = \mathbb{E}_{Y \sim \mathbb{Q}}[k(., Y)] \iff \mathbb{P} = \mathbb{Q},
\end{align}
where $\mathbb{P}$ and $\mathbb{Q}$ are two PDFs and $X$ and $Y$ are random variables from these distributions, respectively. 
\end{definition}
Some examples for characteristic kernels are RBF and Laplacian kernels. Polynomial kernels are not characteristic kernels. 

\begin{corollary}[Convergence of Distributions to Each Other Using Characteristic Kernels \cite{simon2020metrizing}]\label{corollary_characteristic_kernel_convergence_of_distributions}
Let $\mathbb{Q}$ be the PDF for a theoretical or sample reference distribution. Following Definition \ref{definition_characteristic_kernel}, if the kernel used in measures for difference of distributions is characteristic, the measure can be used in an optimization framework to converge a PDF $\mathbb{P}$ to the reference distribution $\mathbb{Q}$ as:
\begin{align}
d_k(\mathbb{P}, \mathbb{Q}) = 0 \iff \mathbb{P} = \mathbb{Q}.
\end{align}
where $d_k(.,.)$ denotes a measure for difference of distributions such as MMD. 
\end{corollary}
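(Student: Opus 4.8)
The plan is to reduce the stated equivalence to two ingredients already available in the excerpt: an identity expressing the discrepancy $d_k$ (taking MMD as the representative measure) as the RKHS norm of the difference of the kernel mean embeddings of $\mathbb{P}$ and $\mathbb{Q}$, and the injectivity of the mean embedding map guaranteed by Definition \ref{definition_characteristic_kernel}. Once these are in hand, the biconditional is just a composition of two equivalences.

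First I would introduce the mean embeddings $\mu_\mathbb{P} := \int_\mathcal{X} k(\b{x}, .)\, d\mathbb{P}(\b{x})$ and $\mu_\mathbb{Q} := \int_\mathcal{X} k(\b{y}, .)\, d\mathbb{Q}(\b{y})$ from Eq. (\ref{equation_mapping_PDF_to_RKHS}), and verify that the population MMD satisfies
\[
\text{MMD}^2(\mathbb{P}, \mathbb{Q}) = \|\mu_\mathbb{P} - \mu_\mathbb{Q}\|_k^2 .
\]
This is obtained by expanding the squared norm through bilinearity of the RKHS inner product and invoking the reproducing property, Eq. (\ref{equation_RKHS_reproducing}), in the form $\langle k(\b{x}, .), k(\b{y}, .)\rangle_k = k(\b{x}, \b{y})$; the three resulting double expectations of $k$ are precisely the three terms of the MMD expression already derived in the MMD subsection. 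This identifies $d_k(\mathbb{P}, \mathbb{Q}) = \|\mu_\mathbb{P} - \mu_\mathbb{Q}\|_k$.

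I would then chain two equivalences. By the Zero Kernel lemma (the property $\langle f, f\rangle_k = 0$ iff $f = 0$), $d_k(\mathbb{P}, \mathbb{Q}) = 0$ holds if and only if $\mu_\mathbb{P} - \mu_\mathbb{Q} = 0$, i.e. $\mu_\mathbb{P} = \mu_\mathbb{Q}$. Since the kernel is characteristic, Definition \ref{definition_characteristic_kernel} makes the embedding map $\mathbb{P} \mapsto \mu_\mathbb{P}$ injective, so $\mu_\mathbb{P} = \mu_\mathbb{Q}$ if and only if $\mathbb{P} = \mathbb{Q}$. Composing these gives $d_k(\mathbb{P}, \mathbb{Q}) = 0 \iff \mathbb{P} = \mathbb{Q}$, as claimed; the reverse implication $\mathbb{P} = \mathbb{Q} \Rightarrow d_k = 0$ also follows trivially from the norm identity.

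The hard part is not the logical chaining, which is immediate, but ensuring that $\mu_\mathbb{P}$ and $\mu_\mathbb{Q}$ are genuinely well-defined elements of the RKHS, so that the Zero Kernel lemma may be applied to their difference and so that the norm identity above is legitimate. This requires a boundedness/integrability hypothesis on $k$ (the kernel boundedness condition of Section \ref{section_universal_characteristic_kernels} suffices) to guarantee that the Bochner integral in Eq. (\ref{equation_mapping_PDF_to_RKHS}) converges, and a justification for interchanging that integral with the inner product when expanding $\|\mu_\mathbb{P} - \mu_\mathbb{Q}\|_k^2$. I would dispatch both by appealing to boundedness of $k$ together with the linearity and continuity of the RKHS inner product.
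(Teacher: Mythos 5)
Your proposal is correct and follows exactly the route the paper intends but leaves implicit: the paper states this corollary without proof, asserting it "following Definition \ref{definition_characteristic_kernel}", and your chain (MMD equals the RKHS norm of the difference of mean embeddings, norm zero iff the difference vanishes, injectivity of the embedding iff the kernel is characteristic) is precisely the standard argument behind that assertion. Your added care about well-definedness of the Bochner integral under a boundedness hypothesis is a legitimate refinement the paper does not address.
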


Characteristic kernels have been used for dimensionality reduction in machine learning. For example, see \cite{fukumizu2004dimensionality,fukumizu2009kernel}.

So far, we have introduced three different types of embedding in RKHS. In the following, we summarize these three types available in the literature of kernels. 
\begin{remark}[Types of Embedding in Hilbert Space]
There are three types of embeddings in Hilbert space:
\begin{enumerate}[topsep=0pt,itemsep=-1ex,partopsep=1ex,parsep=1ex]
\item Embedding of points in the Hilbert space: This embedding maps $\b{x} \mapsto k(\b{x},.)$ as stated in Sections \ref{section_RKHS} and \ref{section_kernelization_techniques}. 
\item Embedding of functions in the Hilbert space: This embedding maps $f(\b{x}) \mapsto \int K(\b{x},\b{y})\, f(\b{y})\, p(\b{y})\, d\b{y}$ as stated in Section \ref{section_eigenfunctions}.
\item Embedding of distributions (PDF's) in the Hilbert space: This embedding maps $\mathbb{P} \mapsto \int k(\b{x}, .)\, d\mathbb{P}(\b{x})$ as stated in Section \ref{section_kernel_embedding_distributions}.
\end{enumerate}
Researchers are expecting that a combination of these types of embedding might appear in the future. 
\end{remark}

\subsection{Kernel Dimensionality Reduction for Sufficient Dimensionality Reduction}

Kernels can also be used directly for dimensionality reduction.
Assume $\b{X}$ is the random variable of data and $Y$ is the random variables of labels of data. The labels can be discrete finite for classification or continuous for regression. 
Sufficient Dimensionality Reduction (SDR) \cite{adragni2009sufficient} is a family of methods which find a transformation of data to a lower dimensional space, denoted by $R(\b{x})$, which does not change the conditional of labels given data:
\begin{align}
\mathbb{P}_{Y | X}(\b{y}\, |\, \b{x}) = \mathbb{P}_{Y | R(X)}(\b{y}\, |\, R(\b{x})).
\end{align}

Kernel Dimensionality Reduction (KDR) \cite{fukumizu2004dimensionality,fukumizu2009kernel,wang2010unsupervised} is a SDR method with linear projection for transformation, i.e. $R(\b{x}): \b{x} \mapsto \b{U}^\top \b{x}$ which projects data onto the column space of $\b{U}$. The goal of KDR is:
\begin{align}
\mathbb{P}_{Y | X}(\b{y}\, |\, \b{x}) = \mathbb{P}_{Y | U, X}(\b{y}\, |\, \b{U}, \b{x}).
\end{align}

\begin{definition}[Dual Space]
A dual space of a vector space $\mathcal{V}$, denoted by $\mathcal{V}^*$, is the set of all linear functionals $\phi: \mathcal{V} \rightarrow \mathcal{F}$ where $\mathcal{F}$ is the field on which vector space is defined. 
\end{definition}

\begin{theorem}[Riesz (or Riesz–Fr{\'e}chet) representation theorem \cite{garling1973short}]\label{theorem_Riesz}
Let $\mathcal{H}$ be a Hilbert space with norm $\langle.,.\rangle_\mathcal{H}$. Suppose $\phi \in \mathcal{H}^*$ (e.g., $\phi: f \rightarrow \mathbb{R}$). Then, there exists a unique $f \in \mathcal{H}$ such that for any $\b{x} \in \mathcal{H}$, we have $\phi(\b{x}) = \langle f,g \rangle$:
\begin{align}
\exists f \in \mathcal{H} : \phi \in \mathcal{H}^*, \forall \b{x} \in \mathcal{H}, \quad  \phi(\b{x}) = \langle f,\b{x} \rangle_\mathcal{H}.
\end{align}
\end{theorem}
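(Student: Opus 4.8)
The plan is to prove existence and uniqueness separately, with the existence part resting on the orthogonal decomposition that is available in any (complete) Hilbert space. First I would dispose of the trivial case: if $\phi$ is the zero functional, then $f = \b{0}$ satisfies $\phi(\b{x}) = \langle \b{0}, \b{x} \rangle_\mathcal{H} = 0$ for every $\b{x} \in \mathcal{H}$, and the statement holds. So I assume $\phi \neq 0$ henceforth. Here $\mathcal{H}^*$ should be read as the \emph{continuous} (bounded) linear functionals, which is exactly what makes the kernel closed in the next step.

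For existence in the nontrivial case, I would work with the null space $N := \{\b{x} \in \mathcal{H} : \phi(\b{x}) = 0\}$. Since $\phi$ is a continuous linear functional, $N$ is a \emph{closed} linear subspace of $\mathcal{H}$. Because $\phi \neq 0$ we have $N \neq \mathcal{H}$, so by the orthogonal decomposition theorem $\mathcal{H} = N \oplus N^\perp$ (this is where completeness, via the Definition of Complete Space and Definition \ref{definition_Hilbert_space}, is indispensable) the orthogonal complement $N^\perp$ contains a nonzero vector $\b{z}$. The key algebraic trick is then to take an arbitrary $\b{x} \in \mathcal{H}$ and form the vector $\phi(\b{x})\, \b{z} - \phi(\b{z})\, \b{x}$; applying $\phi$ and using linearity gives $\phi(\b{x})\phi(\b{z}) - \phi(\b{z})\phi(\b{x}) = 0$, so this vector lies in $N$. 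Taking its inner product with $\b{z} \in N^\perp$ annihilates it, yielding $\phi(\b{x})\,\langle \b{z}, \b{z}\rangle_\mathcal{H} = \phi(\b{z})\,\langle \b{z}, \b{x}\rangle_\mathcal{H}$, and hence $\phi(\b{x}) = \big\langle (\phi(\b{z})/\|\b{z}\|_\mathcal{H}^2)\, \b{z},\, \b{x} \big\rangle_\mathcal{H}$ in the real case. Thus $f := (\phi(\b{z})/\|\b{z}\|_\mathcal{H}^2)\, \b{z}$ is the desired representer; in a complex Hilbert space one performs the usual conjugation adjustment so as to respect the conjugate-linear slot of the inner product in Eq. (\ref{equation_Hilbert_inner_product}).

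Uniqueness is then a short argument: if both $f_1$ and $f_2$ represent $\phi$, then $\langle f_1 - f_2, \b{x}\rangle_\mathcal{H} = \phi(\b{x}) - \phi(\b{x}) = 0$ for every $\b{x} \in \mathcal{H}$; choosing $\b{x} = f_1 - f_2$ gives $\|f_1 - f_2\|_\mathcal{H}^2 = 0$, so $f_1 = f_2$ by the positive-definiteness of the inner product (recall the Lemma on Zero Kernel).

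I expect the main obstacle to be the existence step, and specifically the appeal to the decomposition $\mathcal{H} = N \oplus N^\perp$. This is precisely where completeness of the Hilbert space is essential: without it, the projection onto the closed subspace $N$ need not exist and the nonzero vector $\b{z} \in N^\perp$ could fail to be produced. Once $\b{z}$ is in hand, the construction of $f$ is purely elementary; the genuine content of the theorem is the geometric fact that a closed proper subspace of a Hilbert space has nontrivial orthogonal complement.
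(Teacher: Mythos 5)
Your proof is correct, but there is nothing in the paper to compare it against: the paper states Theorem \ref{theorem_Riesz} without any proof, deferring entirely to the cited reference \cite{garling1973short}. What you have written is the classical textbook argument — reduce to the nonzero case, observe that the kernel $N$ of a \emph{continuous} functional is a closed subspace, invoke the projection theorem to obtain a nonzero $\b{z} \in N^\perp$, and use the identity $\phi(\b{x})\,\b{z} - \phi(\b{z})\,\b{x} \in N$ to extract the representer $f = (\phi(\b{z})/\|\b{z}\|_\mathcal{H}^2)\,\b{z}$, with uniqueness following from positive-definiteness of the inner product. Each step is sound, and you correctly identify the one place where completeness is genuinely used (the existence of the orthogonal decomposition $\mathcal{H} = N \oplus N^\perp$ for the closed subspace $N$) and the one hypothesis that must be read into the statement ($\mathcal{H}^*$ meaning the continuous dual, without which $N$ need not be closed). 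Your proof in fact supplies more than the paper does, and also implicitly repairs a typo in the paper's statement, where $\phi(\b{x}) = \langle f, g\rangle$ should read $\phi(\b{x}) = \langle f, \b{x}\rangle_\mathcal{H}$.
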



\begin{corollary}\label{corollary_Riesz_expectation}
According to Theorem \ref{theorem_Riesz}, we have:
\begin{align}
& \mathbb{E}_X[f(\b{x})] = \langle f, \phi(\mathbb{P}) \rangle_{\mathcal{H}}, \quad \forall f \in \mathcal{H},
\end{align}
where $\phi(\mathbb{P})$ is defined by Eq. (\ref{equation_mapping_PDF_to_RKHS}).
\end{corollary}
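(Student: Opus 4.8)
The plan is to realize the expectation as a bounded linear functional on $\mathcal{H}$ and then invoke the Riesz representation theorem (Theorem \ref{theorem_Riesz}) to produce its representer, which I will identify with the kernel mean embedding $\phi(\mathbb{P})$ of Eq. (\ref{equation_mapping_PDF_to_RKHS}).

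First I would define the functional $L : \mathcal{H} \rightarrow \mathbb{R}$ by $L(f) := \mathbb{E}_X[f(\b{x})] = \int_\mathcal{X} f(\b{x})\, d\mathbb{P}(\b{x})$. Linearity of $L$ is immediate from linearity of the integral. For boundedness I would use the reproducing property $f(\b{x}) = \langle f, k(\b{x}, .) \rangle_\mathcal{H}$ from Eq. (\ref{equation_RKHS_reproducing}) together with the Cauchy--Schwarz inequality, which gives $|f(\b{x})| \leq \|f\|_\mathcal{H}\, \sqrt{k(\b{x}, \b{x})}$. Assuming the kernel is bounded (Section \ref{section_universal_characteristic_kernels}), say $\sqrt{k(\b{x},\b{x})} \leq \kappa$ for all $\b{x} \in \mathcal{X}$, this yields $|L(f)| \leq \kappa\, \|f\|_\mathcal{H}$, so $L$ is a bounded linear functional, i.e. $L \in \mathcal{H}^*$.

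Next I would apply Theorem \ref{theorem_Riesz}: since $L \in \mathcal{H}^*$, there is a unique $g \in \mathcal{H}$ with $L(f) = \langle f, g \rangle_\mathcal{H}$ for every $f \in \mathcal{H}$. It remains to identify this representer $g$ with $\phi(\mathbb{P})$. To do so I would evaluate $g$ at an arbitrary point $\b{y}$ by choosing the test function $f = k(\b{y}, .)$: the reproducing property gives $g(\b{y}) = \langle k(\b{y}, .), g \rangle_\mathcal{H} = L(k(\b{y}, .)) = \int_\mathcal{X} k(\b{y}, \b{x})\, d\mathbb{P}(\b{x})$, which, by symmetry of the kernel (Lemma \ref{lemma_kernel_is_symmetric}), is exactly the value at $\b{y}$ of the Bochner integral $\int_\mathcal{X} k(\b{x}, .)\, d\mathbb{P}(\b{x}) = \phi(\mathbb{P})$ in Eq. (\ref{equation_mapping_PDF_to_RKHS}). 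Hence $g = \phi(\mathbb{P})$, and the claimed identity $\mathbb{E}_X[f(\b{x})] = \langle f, \phi(\mathbb{P}) \rangle_\mathcal{H}$ follows for all $f \in \mathcal{H}$.

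The main obstacle I expect is purely analytic: guaranteeing that $L$ is well defined and bounded, which forces an integrability/boundedness hypothesis on the kernel (a finite $\kappa$ with $\sqrt{k(\b{x},\b{x})} \leq \kappa$, or more weakly $\mathbb{E}_X[\sqrt{k(\b{x},\b{x})}] < \infty$) and, equivalently, that the Bochner integral defining $\phi(\mathbb{P})$ actually converges to an element of $\mathcal{H}$. Once this integrability is in place, the identity is an immediate consequence of Riesz together with the reproducing property, and no further computation is required. An alternative route that avoids invoking Riesz explicitly would be to write $\mathbb{E}_X[f(\b{x})] = \int_\mathcal{X} \langle f, k(\b{x}, .)\rangle_\mathcal{H}\, d\mathbb{P}(\b{x})$ via the reproducing property and then interchange the inner product with the integral; but justifying that interchange is precisely the content of the Bochner-integral convergence above, so the two approaches meet at the same technical point.
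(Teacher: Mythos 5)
Your proof is correct and follows exactly the route the paper implicitly intends: the paper states this corollary without any proof, merely citing Theorem \ref{theorem_Riesz}, and your argument (expectation as a bounded linear functional, Riesz representer, identification with $\phi(\mathbb{P})$ by testing against $k(\b{y},.)$ and using symmetry) is the standard way to fill in that omission. Your explicit flagging of the integrability condition $\mathbb{E}_X[\sqrt{k(\b{x},\b{x})}] < \infty$ needed for the functional to be bounded and for the Bochner integral to converge in $\mathcal{H}$ is a genuine hypothesis the paper leaves unstated, and it is right to include it.
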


KDR uses Theorem \ref{theorem_Riesz} and Corollary \ref{corollary_Riesz_expectation} in its formualtions. 
Note that characteristic kernels \cite{fukumizu2008characteristic}, introduced in Definition \ref{definition_characteristic_kernel}, are used in KDR.

\section{Rank and Factorization of Kernel and the Nystr{\"o}m Method}\label{section_factorization_and_Nystrom_method}

\subsection{Rank and Factorization of Kernel Matrix}

Usually, the rank of kernel is small. This is because the manifold hypothesis which states that data points often do not cover the whole space but lie on a sub-manifold. Nystr{\"o}m approximation of the kernel matrix also works well because kernels are often low-rank matrices (we will discuss it in Corollary \ref{corollary_Nystron_rank_discussion}). 
Because of low rank of kernels, they can be approximated \cite{kishore2017literature}, learned \cite{kulis2006learning,kulis2009low}, and factorized. 
Kernel factorization has also be used for the sake of clustering \cite{wang2010low}.
In the following, we introduce some of the most well-known decompositions for the kernel matrix. 

\subsubsection{Singular Value and Eigenvalue Decompositions}

The Singular Value Decomposition (SVD) of the pulled data to the feature space is:
\begin{align}\label{equation_SVD_Phi}
\mathbb{R}^{t \times n} \ni \b{\Phi}(\b{X}) = \b{U}\b{\Sigma}\b{V}^\top,
\end{align}
where $\b{U} \in \mathbb{R}^{t \times n}$ and $\b{V} \in \mathbb{R}^{n \times n}$ are orthogonal matrices and contain the left and right singular vectors, respectively, and $\b{\Sigma} \in \mathbb{R}^{n \times n}$ is a diagonal matrix with singular values.  
Note that here, we are using notations such as $\mathbb{R}^{t \times n}$ for showing the dimensionality of matrices and this notation does not imply a Euclidean space. 

As mentioned before, the pulled data are not necessarily available so Eq. (\ref{equation_SVD_Phi}) cannot necessarily be done.
The kernel, however, is available.
Using the SVD of pulled data in the formulation of kernel gives:
\begin{align}
&\b{K} \overset{(\ref{equation_kernel_inner_product_matrix})}{=} \b{\Phi}(\b{X})^\top \b{\Phi}(\b{X}) \overset{(\ref{equation_SVD_Phi})}{=} (\b{U}\b{\Sigma}\b{V}^\top)^\top (\b{U}\b{\Sigma}\b{V}^\top) \nonumber \\
&~~~~~~ = \b{V}\b{\Sigma}\underbrace{\b{U}^\top \b{U}}_{=\,\b{I}}\b{\Sigma}\b{V}^\top 
= \b{V}\b{\Sigma}\b{\Sigma}\b{V}^\top = \b{V}\b{\Sigma}^2\b{V}^\top, \nonumber \\
&\implies \b{K} \b{V} = \b{V} \b{\Sigma}^2 \underbrace{\b{V}^\top \b{V}}_{=\,\b{I}} \implies \b{K} \b{V} = \b{V} \b{\Sigma}^2. \label{equation_eigenvalue_problem_kernel_factorization_Sigma2}
\end{align}
If we take $\b{\Delta} = \textbf{diag}([\delta_1, \dots, \delta_n]^\top) := \b{\Sigma}^2$, this equation becomes:
\begin{align}\label{equation_eigenvalue_problem_kernel_factorization}
\b{K} \b{V} = \b{V} \b{\Delta},
\end{align}
which is the matrix of Eq. (\ref{equation_eigenvalue_problem_kernel_operator}), i.e. the eigenvalue problem \cite{ghojogh2019eigenvalue} for the kernel matrix with $\b{V}$ and $\b{\Delta}$ as eigenvectors and eigenvalues, respectively.
Hence, for SVD on the pulled dataset, one can apply Eigenvalue Decomposition (EVD) on the kernel where the eigenvectors of kernel are equal to right singular vectors of pulled dataset and the eigenvalues of kernel are the squared singular values of pulled dataset. 
This technique has been used in kernel PCA \cite{ghojogh2019unsupervised}. 

\subsubsection{Cholesky and QR Decompositions}

The kernel matrix can be factorized using LU decomposition; however, as the kernel matrix is symmetric positive semi-definite matrix (see Lemmas \ref{lemma_kernel_is_symmetric} and \ref{lemma_kernel_is_positive_semidefinite}), it can be decomposed using Cholesky decomposition which is much faster than LU decomposition. The Cholesky decomposition of kernel is in the form $\mathbb{R}^{n \times n} \ni \b{K} = \b{L} \b{L}^\top$ where $\b{L} \in \mathbb{R}^{n \times n}$ is a lower-triangular matrix. 
The kernel matrix can also be factorized using QR decomposition as $\mathbb{R}^{n \times n} \ni \b{K} = \b{Q} \b{R}$ where $\b{Q} \in \mathbb{R}^{n \times n}$ is an orthogonal matrix and $\b{R} \in \mathbb{R}^{n \times n}$ is an upper-triangular matrix. 
The paper \cite{bach2005predictive} has incorporated side information, such as class labels, in the Cholesky and QR decompositions of kernel matrix.

\subsection{Nystr{\"o}m Method for Approximation of Eigenfunctions}

Nystr{\"o}m method, first proposed in \cite{nystrom1930praktische}, was initially used for approximating the eigenfunctions of an operator (or of a matrix corresponding to an operator). 
The following lemma provides the Nystr{\"o}m approximation for the eigenfunctions of the kernel operator defined by Eq. (\ref{equation_K_operator_integral}).
For more discussion on this, reader can refer to \cite{baker1977numerical}, {\citep[Section 1.1]{williams2001using}}, and \cite{williams2000effect}.

\begin{lemma}[Nystr{\"o}m Approximation of Eigenfunction {\citep[Chapter 3]{baker1977numerical}}, {\citep[Section 1.1]{williams2001using}}]\label{lemma_Nystrom_approx_eigenfunctions}
Consider a training dataset $\{\b{x}_i \in \mathbb{R}^d\}_{i=1}^n$ and the eigenfunction problem (\ref{equation_eigenfunction}) where $f_k \in \mathcal{H}$ and $\lambda_k$ are the $k$-th eigenfunction and eigenvalue of kernel operator defined by Eq. (\ref{equation_K_operator_integral}) or (\ref{equation_discrete_kernel_operator}).
The eigenfunction can be approximated by Nystr{\"o}m method as:
\begin{align}
&f_k(\b{x}) \approx \frac{1}{n \lambda_k} \sum_{i=1}^n k(\b{x}_i, \b{x})\, f_k(\b{x}_i), \label{equation_relation_eigenfunction_eigenvector_x_2} 
\end{align}
where $k(\b{x}_i, \b{x})$ is the kernel (or centered kernel) corresponding to the kernel operator.  
\end{lemma}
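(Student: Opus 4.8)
The plan is to obtain the formula directly from the exact eigenfunction equation for the integral operator by replacing the integral with a finite Monte Carlo sum over the training points. \textbf{First}, I would expand the eigenfunction problem in Eq.~(\ref{equation_eigenfunction}) for the operator $K_p$ of Eq.~(\ref{equation_K_operator_integral}), writing the identity $\int k(\b{x}, \b{y})\, f_k(\b{y})\, p(\b{y})\, d\b{y} = \lambda_k\, f_k(\b{x})$. This exact relation is what I must discretize.

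\textbf{Next}, since the training points $\{\b{x}_i\}_{i=1}^n$ are regarded as an i.i.d.\ sample from the density $p$, the empirical measure $\frac{1}{n}\sum_{i=1}^n \delta_{\b{x}_i}$ approximates $p(\b{y})\,d\b{y}$; replacing the integral against $p$ by this empirical average is exactly the passage from $K_p$ to the discrete operator $K_{p,n}$ of Eq.~(\ref{equation_discrete_kernel_operator}). Applying this to the integrand $g(\b{y}) = k(\b{x},\b{y})\, f_k(\b{y})$ yields $\frac{1}{n} \sum_{i=1}^n k(\b{x}, \b{x}_i)\, f_k(\b{x}_i) \approx \lambda_k\, f_k(\b{x})$, where the approximation becomes exact in the limit $n \to \infty$ by the law of large numbers.

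\textbf{Finally}, I would isolate $f_k(\b{x})$ by dividing both sides by the (nonzero) eigenvalue $\lambda_k$, and invoke the symmetry of the kernel from Lemma~\ref{lemma_kernel_is_symmetric}, i.e.\ $k(\b{x}, \b{x}_i) = k(\b{x}_i, \b{x})$, to rewrite the sum in the claimed orientation. This produces $f_k(\b{x}) \approx \frac{1}{n \lambda_k} \sum_{i=1}^n k(\b{x}_i, \b{x})\, f_k(\b{x}_i)$, which is Eq.~(\ref{equation_relation_eigenfunction_eigenvector_x_2}). The values $f_k(\b{x}_i)$ on the right-hand side are the eigenfunction evaluated at the training points, which are available from the matrix eigenvalue problem via Lemma~\ref{lemma_relation_eigenfunctions_eigenvectors_for_kernel}, so the right-hand side is fully computable for any query point $\b{x}$, including out-of-sample points.

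\textbf{The main obstacle} is that the step is genuinely an \emph{approximation} rather than an identity: the heart of the argument is the Monte Carlo replacement of $\int (\cdot)\, p(\b{y})\,d\b{y}$ by $\frac{1}{n}\sum_i (\cdot)$, whose error must be controlled (it vanishes as $n \to \infty$, but for finite $n$ it is only approximate). I expect that carefully stating this convergence---ideally uniformly in $\b{x}$, which requires boundedness and continuity of $k$ and $f_k$---is the delicate part, whereas the algebraic rearrangement and the use of kernel symmetry are routine. The degenerate case $\lambda_k = 0$ must also be excluded, since dividing by $\lambda_k$ is then invalid; this mirrors the non-zero-eigenvalue restriction already used in the earlier eigenvalue lemmas.
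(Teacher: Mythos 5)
Your proposal is correct and follows essentially the same route as the paper's proof: substitute the discrete operator $K_{p,n}$ of Eq.~(\ref{equation_discrete_kernel_operator}) into the eigenfunction equation (\ref{equation_eigenfunction}) and divide by $\lambda_k$. Your version is in fact more careful than the paper's, since you explicitly justify the discretization as a Monte Carlo approximation of the integral against $p$ and flag the need to exclude $\lambda_k = 0$, both of which the paper leaves implicit.
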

\begin{proof}
This Lemma is somewhat similar and related to Lemma \ref{lemma_relation_eigenfunctions_eigenvectors_for_kernel}. 
Consider the kernel operator, defined by Eq. (\ref{equation_K_operator_integral}), in Eq. (\ref{equation_eigenfunction}): $K f = \lambda f$. Combining this with the discrete version of operator, Eq. (\ref{equation_discrete_kernel_operator}), gives $\lambda_k f_k(\b{x}) = (1/n) \sum_{i=1}^n k(\b{x}_i, \b{x}) f_k(\b{x}_i)$
Dividing the sides of this equation by $\lambda_k$ brings Eq. (\ref{equation_relation_eigenfunction_eigenvector_x_2}). Q.E.D.
\end{proof}

\subsection{Nystr{\"o}m Method for Kernel Completion and Approximation}\label{section_Nystrom_for_kernel}

As explained before, the kernel matrix has a low rank often. Because of its low rank, it can be approximated \cite{kishore2017literature}. This is important because in big data when $n \gg 1$, constructing the kernel matrix is both time-consuming and also intractable to store in computer; i.e., its computation will run forever and will raise a memory error finally. Hence, it is desired to compute the kernel function between a subset of data points (called landmarks) and then approximate the rest of kernel matrix using this subset of kernel matrix. Nystr{\"o}m approximation can be used for this goal. 

The Nystr{\"o}m method can be used for kernel approximation. 
It is a technique used to approximate a positive semi-definite matrix using merely a subset of its columns (or rows) {\citep[Section 1.2]{williams2001using}}, \cite{drineas2005nystrom}. 
Hence, it can be used for kernel completion in big data where computation of the entire kernel matrix is time consuming and intractable. One can compute some of the important important columns or rows of a kernel matrix, called landmarks, and approximate the rest of columns or rows by Nystr{\"o}m approximation.

Consider a positive semi-definite matrix $\mathbb{R}^{n \times n} \ni \b{K} \succeq 0$ whose parts are:
\begin{align}\label{equation_Nystrom_partions}
\mathbb{R}^{n \times n} \ni \b{K} = 
\left[
\begin{array}{c|c}
\b{A} & \b{B} \\
\hline
\b{B}^\top & \b{C}
\end{array}
\right],
\end{align}
where $\b{A} \in \mathbb{R}^{m \times m}$, $\b{B} \in \mathbb{R}^{m \times (n-m)}$, and $\b{C} \in \mathbb{R}^{(n-m) \times (n-m)}$ in which $m \ll n$. 
This positive semi-definite matrix can be a kernel (or Gram) matrix. 

The Nystr{\"o}m approximation says if we have the small parts of this matrix, i.e. $\b{A}$ and $\b{B}$, we can approximate $\b{C}$ and thus the whole matrix $\b{K}$. The intuition is as follows. Assume $m=2$ (containing two points, a and b) and $n=5$ (containing three other points, c, d, and e). If we know the similarity (or distance) of points a and b from one another, resulting in matrix $\b{A}$, as well as the similarity (or distance) of points c, d, and e from a and b, resulting in matrix $\b{B}$, we cannot have much freedom on the location of c, d, and e, which is the matrix $\b{C}$. This is because of the positive semi-definiteness of the matrix $\b{K}$. 
The points selected in submatrix $\b{A}$ are named \textit{landmarks}. Note that the landmarks can be selected randomly from the columns/rows of matrix $\b{K}$ and, without loss of generality, they can be put together to form a submatrix at the top-left corner of matrix. 
For Nystr{\"o}m approximation, some methods have been proposed for sampling more important columns/rows of matrix more wisely rather than randomly. We will mention some of these sampling methods in Section \ref{section_improvements_over_Nystrom}.

As the matrix $\b{K}$ is positive semi-definite, by definition, it can be written as $\b{K} = \b{O}^\top \b{O}$. If we take $\b{O} = [\b{R}, \b{S}]$ where $\b{R}$ are the selected columns (landmarks) of $\b{O}$ and $\b{S}$ are the other columns of $\b{O}$. We have:
\begin{align}
\b{K} &= \b{O}^\top \b{O} = 
\begin{bmatrix}
\b{R}^\top \\
\b{S}^\top
\end{bmatrix}
[\b{R}, \b{S}] \label{equation_Nystrom_kernel_OtransposeO} \\
&= 
\begin{bmatrix}
\b{R}^\top \b{R} & \b{R}^\top \b{S} \\
\b{S}^\top \b{R} & \b{S}^\top \b{S}
\end{bmatrix}
\overset{(\ref{equation_Nystrom_partions})}{=} 
\begin{bmatrix}
\b{A} & \b{B} \\
\b{B}^\top & \b{C}
\end{bmatrix}.
\end{align}
Hence, we have $\b{A} = \b{R}^\top \b{R}$. The eigenvalue decomposition \cite{ghojogh2019eigenvalue} of $\b{A}$ gives:
\begin{align}
&\b{A} = \b{U} \b{\Sigma} \b{U}^\top \label{equation_Nystrom_A_eig_decomposition} \\
&\implies \b{R}^\top \b{R} = \b{U} \b{\Sigma} \b{U}^\top \implies \b{R} = \b{\Sigma}^{(1/2)} \b{U}^\top. \label{equation_Nystrom_R}
\end{align}
Moreover, we have $\b{B} = \b{R}^\top \b{S}$ so we have:
\begin{align}
&\b{B} = (\b{\Sigma}^{(1/2)} \b{U}^\top)^\top \b{S} = \b{U} \b{\Sigma}^{(1/2)} \b{S} \nonumber \\
&\overset{(a)}{\implies} \b{U}^\top \b{B} = \b{\Sigma}^{(1/2)} \b{S} \implies \b{S} = \b{\Sigma}^{(-1/2)} \b{U}^\top \b{B}, \label{equation_Nystrom_S}
\end{align}
where $(a)$ is because $\b{U}$ is orthogonal (in the eigenvalue decomposition). 
Finally, we have:
\begin{align}
\b{C} &= \b{S}^\top \b{S} = \b{B}^\top \b{U} \b{\Sigma}^{(-1/2)} \b{\Sigma}^{(-1/2)} \b{U}^\top \b{B} \nonumber \\
&= \b{B}^\top \b{U} \b{\Sigma}^{-1} \b{U}^\top \b{B} \overset{(\ref{equation_Nystrom_A_eig_decomposition})}{=} \b{B}^\top \b{A}^{-1} \b{B}. \label{equation_Nystrom_C}
\end{align}
Therefore, Eq. (\ref{equation_Nystrom_partions}) becomes:
\begin{align}\label{equation_Nystrom_partions_withDetails}
\b{K} \approx 
\left[
\begin{array}{c|c}
\b{A} & \b{B} \\
\hline
\b{B}^\top & \b{B}^\top \b{A}^{-1} \b{B}
\end{array}
\right].
\end{align}

\begin{lemma}[Impact of Size of sub-matrix $\b{A}$ on Nystr{\"o}m approximation]
By increasing $m$, the approximation of Eq. (\ref{equation_Nystrom_partions_withDetails}) becomes more accurate. 
If rank of $\b{K}$ is at most $m$, this approximation is exact. 
\end{lemma}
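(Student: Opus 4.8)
The plan is to observe that the only block of Eq. (\ref{equation_Nystrom_partions_withDetails}) that is approximate is the bottom-right one: the construction reproduces $\b{A}$ and $\b{B}$ exactly and merely replaces the true $\b{C}$ by $\b{B}^\top \b{A}^{-1} \b{B}$. Hence the entire error is carried by the \emph{Schur complement}
\[
\b{E} := \b{C} - \b{B}^\top \b{A}^{-1} \b{B},
\]
and both parts of the lemma become statements about $\b{E}$. First I would use the factorization already set up in the derivation, namely $\b{K} = \b{O}^\top \b{O}$ with $\b{O} = [\b{R}, \b{S}]$ (Eq. (\ref{equation_Nystrom_kernel_OtransposeO})), so that $\b{A} = \b{R}^\top \b{R}$, $\b{B} = \b{R}^\top \b{S}$, and $\b{C} = \b{S}^\top \b{S}$. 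Substituting these gives
\[
\b{E} = \b{S}^\top \b{S} - \b{S}^\top \b{R}(\b{R}^\top \b{R})^{-1} \b{R}^\top \b{S} = \b{S}^\top (\b{I} - \b{P})\, \b{S},
\]
where $\b{P} := \b{R}(\b{R}^\top \b{R})^{-1} \b{R}^\top$ is the orthogonal projector onto the column space of the landmark features $\b{R}$. This identity is the hinge of the whole argument: the residual measures exactly how far the non-landmark feature columns in $\b{S}$ stick out of the span of the landmark features.

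Next I would prove exactness. If $\text{rank}(\b{K}) \leq m$, then $\text{rank}(\b{O}) = \text{rank}(\b{K}) \leq m$, so $\text{col}(\b{O})$ has dimension at most $m$; choosing the $m$ landmarks so that $\b{R}$ attains full column rank (equivalently, so that $\b{A}$ is invertible, as the formula already presumes) forces $\text{col}(\b{R}) = \text{col}(\b{O})$. Every column of $\b{S}$ then lies in $\text{col}(\b{R})$, whence $(\b{I} - \b{P})\b{S} = \b{0}$, so $\b{E} = \b{0}$ and Eq. (\ref{equation_Nystrom_partions_withDetails}) holds with equality.

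Finally, for the monotonicity (``more accurate as $m$ grows'') I would pass to a scalar error measure, e.g. $\text{tr}(\b{E}) = \|(\b{I}-\b{P})\b{S}\|_F^2 = \sum_{i \notin \mathcal{L}} \|(\b{I}-\b{P})\b{o}_i\|_2^2$, where $\mathcal{L}$ indexes the landmarks and $\b{o}_i$ is the $i$-th column of $\b{O}$; this is the total squared distance of the non-landmark feature vectors from the landmark subspace $\text{col}(\b{R})$. When a new point is promoted to a landmark, $\text{col}(\b{R})$ can only enlarge, so $\b{P}$ grows in the Loewner order and each distance $\|(\b{I}-\b{P})\b{o}_i\|_2$ can only shrink; hence $\text{tr}(\b{E})$ is non-increasing in $m$.

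The step I expect to be the main obstacle is making the monotonicity claim precise, because enlarging $m$ changes the partition itself: columns migrate from $\b{S}$ into $\b{R}$, so the block $\b{E}$ changes size and a direct Loewner comparison of the two error blocks is not meaningful. The clean fix is to work with the geometric, index-based formulation above (residual norms against a nested family of landmark subspaces) rather than with the algebraic blocks, and to phrase ``more accurate'' through a fixed scalar functional of those residuals. I would also flag the companion subtlety that exactness genuinely needs the landmarks to span $\text{col}(\b{O})$ (and $\b{A}$ to be invertible, or else its pseudoinverse to be used in place of $\b{A}^{-1}$), so that the bare inequality $\text{rank}(\b{K}) \leq m$ is read together with a nondegenerate choice of landmarks.
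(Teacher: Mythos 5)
Your proposal is correct, and it takes a genuinely different --- and substantially more rigorous --- route than the paper. The paper's own proof is little more than a remark: it observes that Eq. (\ref{equation_Nystrom_C}) involves $\b{A}^{-1}$, so $\b{A}$ must be nonsingular, infers from this that $m$ upper-bounds the rank of $\b{K}$ (in the exact case) and lower-bounds the number of landmarks, and then simply asserts that using more landmarks is more accurate ``in practice''; neither the exactness claim nor the monotonicity claim is actually derived. You instead isolate the entire error in the Schur complement $\b{E} = \b{C} - \b{B}^\top \b{A}^{-1}\b{B} = \b{S}^\top(\b{I}-\b{P})\b{S}$ with $\b{P}$ the orthogonal projector onto $\mathrm{col}(\b{R})$, which makes exactness an immediate consequence of $\mathrm{col}(\b{R}) = \mathrm{col}(\b{O})$ when $\mathrm{rank}(\b{K}) \le m$, and turns monotonicity into a statement about residual norms against a nested family of landmark subspaces. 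This buys you an actual proof of both claims plus two clarifications the paper glosses over: exactness needs a nondegenerate choice of landmarks (so that $\b{A}$ is invertible, or a pseudoinverse is used, and $\b{R}$ spans $\mathrm{col}(\b{O})$), and ``more accurate'' must be measured by a fixed scalar functional such as $\mathrm{tr}(\b{E}) = \|(\b{I}-\b{P})\b{S}\|_F^2$ because the error block itself changes size as $m$ grows. The only thing the paper's informal version offers in exchange is brevity and the practical remark about the accuracy--speed trade-off; your argument is the one that should stand as the proof.
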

\begin{proof}
In Eq. (\ref{equation_Nystrom_C}), we have the inverse of $\b{A}$. In order to have this inverse, the matrix $\b{A}$ must not be singular. For having a full-rank $\b{A} \in \mathbb{R}^{m \times m}$, the rank of $\b{A}$ should be $m$. This results in $m$ to be an upper bound on the rank of $\b{K}$ and a lower bound on the number of landmarks. In practice, it is recommended to use more number of landmarks for more accurate approximation but there is a trade-off with the speed. 
\end{proof}

\begin{corollary}\label{corollary_Nystron_rank_discussion}
As we usually have $m \ll n$, the Nystr{\"o}m approximation works well especially for the low-rank matrices \cite{kishore2017literature} because we will need a small $\b{A}$ (so small number of landmarks) for approximation. Usually, because of the manifold hypothesis, data fall on a submanifold; hence, usually, the kernel (similarity) matrix  or the distance matrix has a low rank. Therefore, the Nystr{\"o}m approximation works well for many kernel-based or distance-based manifold learning methods. 
\end{corollary}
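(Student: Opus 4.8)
The plan is to chain the preceding lemma together with the manifold hypothesis, so that this informal corollary becomes a direct consequence of an exactness statement we have already established. Recall that the previous lemma shows two things about the Nystr{\"o}m reconstruction in Eq. (\ref{equation_Nystrom_partions_withDetails}): first, that its accuracy improves monotonically as the landmark count $m$ grows, and second, that whenever $\text{rank}(\b{K}) \leq m$ the reconstruction is \emph{exact}. The entire content of this corollary is to identify when this exactness condition can be met with a small $m$.

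First I would argue that, under the manifold hypothesis, the intrinsic dimensionality of the data cloud is small: the $n$ points of $\b{X}$ do not fill the ambient space but concentrate on a low-dimensional submanifold. Since the kernel matrix is the Gram matrix of the feature maps, $\b{K} = \b{\Phi}(\b{X})^\top \b{\Phi}(\b{X})$ by Eq. (\ref{equation_kernel_inner_product_matrix}), its rank equals the dimension of the span of $\{\b{\phi}(\b{x}_i)\}_{i=1}^n$. When the points lie on (or near) a low-dimensional structure, the pulled points populate only a low-dimensional subspace of the feature space, so the span of the feature maps, and hence $\text{rank}(\b{K})$, is small, say $r \ll n$. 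The same reasoning applies to the double-centered distance-based kernel of Eq. (\ref{equation_generalKernel_and_distanceMAtrix}), giving the parallel statement about distance matrices.

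Next I would combine these observations. Setting $m = r$ (up to an additive slack for matrices that are only near-low-rank) makes $\text{rank}(\b{K}) \leq m$ hold, so the preceding lemma yields exact or near-exact reconstruction, while simultaneously $m = r \ll n$ keeps the landmark count small. This is precisely the assertion that Nystr{\"o}m approximation is well-suited to low-rank kernels and, through the manifold hypothesis, to kernel-based and distance-based manifold learning.

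The hard part will be that ``the manifold hypothesis implies low rank'' is an empirical rather than a theorem-level statement: a submanifold of intrinsic dimension $r$ need not produce a kernel matrix of exact algebraic rank $r$ (indeed, as noted earlier in the discussion of universal kernels, a strictly positive definite kernel such as the RBF kernel yields full-rank Gram matrices on distinct points). The honest resolution is to phrase the argument in terms of \emph{effective} (numerical) rank: the spectrum of $\b{K}$ decays rapidly, so that $\b{K} = \b{K}' + \b{E}$ with $\text{rank}(\b{K}') \leq m$ and $\|\b{E}\|$ small, and then to invoke the monotone-accuracy half of the preceding lemma rather than its exactness half. Thus the corollary is best read as a qualitative consequence of the lemma under the empirical premise that the kernel spectra of manifold-structured data decay fast.
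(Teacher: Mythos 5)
Your proposal is correct and follows essentially the same route as the paper, which offers no separate proof for this corollary and simply chains the preceding lemma's exactness/accuracy statement with the manifold-hypothesis claim that kernel and distance matrices of manifold-structured data are low rank. Your added caveat about effective versus exact algebraic rank (e.g., RBF Gram matrices on distinct points being technically full rank) is a refinement the paper glosses over, but it does not change the argument's structure.
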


\subsection{Use of Nystr{\"o}m Approximation for Landmark Spectral Embedding}

The spectral dimensionality reduction methods \cite{saul2006spectral} are based on geometry of data and their solutions often follow an eigenvalue problem \cite{ghojogh2019eigenvalue}. 
Therefore, they cannot handle big data where $n \gg 1$. To tackle this issue, there exist some landmark methods which approximate the embedding of all points using the embedding of some landmarks. 
Big data, i.e. $n \gg 1$, results in large kernel matrices. Selecting some most informative columns or rows of the kernel matrix, called landmarks, can reduce computations. This technique is named the Nystr{\"o}m approximation which is used for kernel approximation and completion. 

Nystr{\"o}m approximation, introduced below, can be used to make the spectral methods such as locally linear embedding \cite{ghojogh2020locally} and Multidimensional Scaling (MDS) \cite{ghojogh2020multidimensional} scalable and suitable for big data embedding. 
It is shown in \cite{platt2005fastmap} that all the landmark MDS methods are Nystr{\"o}m approximations.
For more details on usage of Nystr{\"o}m approximation in spectral embedding, refer to \cite{ghojogh2020locally,ghojogh2020multidimensional}.

\subsection{Other Improvements over Nystr{\"o}m Approximation of Kernels}\label{section_improvements_over_Nystrom}

The Nystr{\"o}m method has been improved for kernel approximation in a line of research. For example, it has been used for clustering \cite{fowlkes2004spectral} and regularization \cite{rudi2015less}. 
Greedy Nystr{\"o}m \cite{farahat2011novel,farahat2015greedy} and large scale Nystr{\"o}m \cite{li2010making} are other examples. 
There is a trade-off between the approximation accuracy and computational efficiency and they are balanced in \cite{lim2015double,lim2018multi}. 
The error analysis of Nystr{\"o}m method can be found in \cite{zhang2008improved,zhang2010clustered}.
It is better to sample the landmarks wisely rather than randomly. This field of research is named ``column subset selection" or ``landmark selection" for Nystron approximation. Some of these methods are \cite{kumar2009sampling,kumar2012sampling} and landmark selection on a sparse manifold \cite{silva2006selecting}.

\section{Conclusion}\label{section_conclusion}

This paper was a tutorial and survey paper on kernels and kernel methods. We covered various topics including Mercer kernels, Mercer's theorem, RKHS, eigenfunctions, Nystr{\"o}m methods, kernelization techniques, and use of kernels in machine learning. This paper can be useful for different fields of science such as machine learning, functional analysis, and quantum mechanics. 

\section*{Acknowledgement}

Some parts of this tutorial, particularly some parts of the RKHS, are covered by Prof. Larry Wasserman's statistical machine learning course at the Department of Statistics and Data Science, Carnegie Mellon University (watch his course on YouTube). 
The video of RKHS in Statistical Machine Learning course by Prof. Ulrike von Luxburg at the University of T{\"u}bingen is also great (watch on YouTube, T{\"u}bingen Machine Learning channel).
Also, some parts such as the Nystr{\"o}m approximation and MVU are covered by Prof. Ali Ghodsi's course, at University of Waterloo, available on YouTube. 
There are other useful videos on this field which can be found on YouTube. 


\bibliography{References}
\bibliographystyle{icml2016}

\end{document}